\newcommand{\classstyle}{1}
\tikzstyle{startstop} = [rectangle, rounded corners, 
\tikzstyle{io} = [trapezium, 
\tikzstyle{process} = [rectangle, 
\tikzstyle{decision} = [diamond, 
\tikzstyle{arrow} = [thick,->,>=stealth]
\newcommandx{\unsure}[2][1=]{\todo[linecolor=red,backgroundcolor=red!25,bordercolor=red,#1]{#2}}
\newcommandx{\change}[2][1=]{\todo[linecolor=blue,backgroundcolor=blue!25,bordercolor=blue,#1]{#2}}
\newcommandx{\info}[2][1=]{\todo[linecolor=OliveGreen,backgroundcolor=OliveGreen!25,bordercolor=OliveGreen,#1]{#2}}
\newcommandx{\improvement}[2][1=]{\todo[linecolor=Plum,backgroundcolor=Plum!25,bordercolor=Plum,#1]{#2}}
\newcommandx{\thiswillnotshow}[2][1=]{\todo[disable,#1]{#2}}
\newcommand{\R}{\mathbb R}
\newcommand{\N}{\mathbb N}
\DeclareMathOperator{\E}{\mathbb E}
\newcommand{\dd}{\ensuremath{\mathrm{d}}}
\declaretheorem[
    name=Theorem,
    numberwithin=section]{theorem}
\declaretheorem[
    name=Assumption,
    sibling=theorem]{assumption}
\numberwithin{equation}{section}
\newtheorem{definition}[theorem]{Definition}
\newtheorem{remark}[theorem]{Remark}
\newtheorem{lemma}[theorem]{Lemma}
\newtheorem{proposition}[theorem]{Proposition}
\newlist{thmlist}{enumerate}{1}
\setlist[thmlist]{label=(\roman{thmlisti}),
ref=\thetheorem~(\roman{thmlisti}),
noitemsep}
\newtheorem{assump}[theorem]{Assumption}
\Crefname{assump}{Assumption}{Assumptions}
\Crefname{listthm}{Theorem}{Theorems}
\Crefname{listassump}{Assumption}{Assumptions}
\Crefname{listdef}{Definition}{Definition}
\newcommand{\ownKeywords}{Langevin Monte Carlo, Approximate sampling, Rates of
convergence, Neural Network approximation, Wasserstein distance}
\newcommand{\ownAMS}{
  62F15,   	%Bayesian inference
  65N75,   	%Probabilistic methods, particle methods, etc. for boundary value problems involving PDEs
  65C30,   	%Numerical solutions to stochastic differential and integral equations
  %35Q84,   	%Fokker-Planck equations
  %65K10, %Numerical optimization and variational techniques
  %35R60,  %	PDEs with randomness, stochastic partial differential equations
  60H35, %Computational methods for stochastic equations (aspects of stochastic analysis)
  62H12, %Estimation in multivariate analysis
  65C05, %Monte Carlo methods 
  60H35, %Computational methods for stochastic equations
  68T07 %Artificial neural networks and deep learning
}
\title{Approximating Langevin Monte Carlo with ResNet-like Neural Network architectures}
\author{
    \textbf{Charles Miranda}\thanks{equal contribution} \\
    Weierstrass Institute for \\ Applied Analysis and Stochastics \\
    Berlin, Germany \\
    Centrale Nantes, Nantes Université \\Laboratoire de Mathématiques Jean Leray\\ UMR CNRS 6629\\ France\\
    \texttt{miranda@wias-berlin.de}
    \And
	\textbf{Janina Enrica Schütte}\footnotemark[1] \\
	Weierstrass Institute for \\ Applied Analysis  and Stochastics \\
	Berlin, Germany \\
	\texttt{schuette@wias-berlin.de}
	\And
	\textbf{David Sommer}\footnotemark[1]~~\thanks{corresponding author} \\
	Weierstrass Institute for \\ Applied Analysis  and Stochastics \\
	Berlin, Germany \\
	\texttt{sommer@wias-berlin.de} 
 \And
 \textbf{Martin Eigel} \\
	Weierstrass Institute for \\ Applied Analysis  and Stochastics \\
	Berlin, Germany \\
	\texttt{eigel@wias-berlin.de}
}
\begin{document}
\maketitle
\setcounter{page}{1}

 \begin{abstract}
 We analyse a method to sample from a given target distribution by constructing a neural network which maps samples from a simple reference distribution, e.g. the standard normal, to samples from the target distribution. For this, we propose using a neural network architecture inspired by the Langevin Monte Carlo (LMC) algorithm. Based on LMC perturbation results, approximation rates of the proposed architecture for smooth, log-concave target distributions measured in the Wasserstein-$2$ distance are shown. The analysis heavily relies on the notion of sub-Gaussianity of the intermediate measures of the perturbed LMC process. In particular, we derive bounds on the growth of the intermediate variance proxies under different assumptions on the perturbations. Moreover, we propose an architecture similar to deep residual neural networks (ResNets) and derive expressivity results for approximating the sample to target distribution map.
\end{abstract}

\ifnum\classstyle=1
   \keywords{\ownKeywords}
   \textit{\textbf{2020 Mathematics Subject Classification}} \ownAMS
\fi

\section{Introduction and scope}

In recent years there has been a lot of attention on research of sampling problems, which we define in this work as the task of sampling from an  (unnormalized) density, given in functional form. 
A particularly interesting application of such methods is the computation of statistical properties of posterior measures in the framework of Bayesian inference, see e.g.~\cite{freeenergy}.
This work is concerned with the analysis of a sampling method based on the Langevin Monte Carlo (LMC) algorithm for particle transport as e.g. developed in~\cite{roberts1996convergence, garbuno2020affine, eigel2024less}.
Such algorithms have become increasingly popular recently as a means to simulate the transport of a discrete probability measure, typically from a simple reference to some complicated target. 
Recent interest shows a focus on practical improvements, e.g. by introducing particle interaction and affine invariance, see \cite{garbuno2020affine, garbuno2020interacting, eigel2024less}.
In this work, we consider sub-Gaussian properties of LMC trajectories and the representation complexity of the vanilla LMC sampler by means of an appropriate Neural Network (NN).
NNs have become ubiquitous in basically all scientific disciplines, in particular in the area of scientific computing.
In addition to the development of advantageous NN architectures for practical problems, their expressivity analysis has also attracted significant attention, see e.g.~\cite{gühring_raslan_kutyniok_2022, Berner_2022, yarotsky2017error} and references therein.
A main reason is that expressivity results provide clear indications with respect to required network complexities and enable to infer the approximation quality NNs exhibit for specific tasks and methods. 
While this work is of theoretical nature and focuses on deriving complexity bounds for NN approximations of the LMC algorithm, we give clear numerical motivation for this type of surrogate in~\cref{sec:Numerics}.

\subsection{Related work}\label{sec:relatedWork}

\paragraph{Deep Neural Networks} The training of deep neural networks (DNNs) to sample from distributions has become widespread, for instance in the field of generative modelling (GM)~\cite{ruthotto2021introduction, bond2021deep}. Popular approaches in deep generative modelling (DGM) include normalizing
flows~\cite{ruthotto2021introduction}, variational autoencoders \cite{NIPS2016_ddeebdee,kingma2013auto,pmlr-v32-rezende14}, and generative adversarial networks (GANs) \cite{goodfellow2016nips}. Beyond DGM, there is also growing interest in GM via score-based diffusion models \cite{yang2022diffusion,song2020score}, where DNNs are trained to approximate the score function, i.e. the gradient log-density of the forward diffusion process.
While most research has been focused on image, video and text generation, generative models can also be used in the context of differential equations related to engineering and the natural sciences~\cite{dandekar2022bayesian}.

From a mathematical perspective, there has been a lot of work on the expressivity analysis of fully connected neural networks (FCNNs), providing qualitative results regarding the complexity of representations.
For this, classical and new approximation classes and function spaces are considered, such as in~\cite{gühring_raslan_kutyniok_2022}. Important approximation results include~\cite{bach2017breaking, barron1993universal, barron1994approximation, bolcskei2019optimal, elbrachter2022dnn, perekrestenko2018universal, petersen2018optimal, yarotsky2017error, yarotsky2018optimal, yarotsky2020phase}.
In \cite{Jentzen_2021} FCNNs have been shown to beat the curse of dimensionality in approximation of Kolmogorov backward equations (KBE) provided that the drift term can be approximated without curse of dimensionality. The central idea is the use of the Feynman-Kac formula, linking the KBE solution to the expectation of an observable subject to an underlying It\^o diffusion process. 
Under suitable assumptions on the observable (which coincides with the initial condition of the KBE) and the drift term, this diffusion process can be approximated by adding and composing FCNN layers in an imitation of the Euler-Maruyama discretization.

While standard feed-forward FCNNs offer conceptual simplicity, there are architectures arguably better suited to the approximation of differential equations. One such architecture is given by deep residual networks, also called \textit{ResNets},~\cite{he2016deep}, which instead of the full mapping from input to output learn a residual component (respective to the layer input) in each layer. Due to this residual structure, ResNets have interesting theoretical connections to time discretizations of differential equations. In particular, the forward propagation of the inputs through the residual layers can be interpreted as time-discretization of an underlying (stochastic) differential equation. The continuous-time equivalent of a ResNet is called a neural ODE~\cite{chen2018neural,sander2022residual}.

\paragraph{Sampling and Langevin Monte Carlo} Sampling from probability densities is a common problem e.g. in Bayesian inference \cite{stuart2010inverse,kaipio2006statistical} and generative modelling \cite{song2020score}. There exists a vast literature on sampling strategies, including Markov Chain Monte Carlo (MCMC)~\cite{roberts2004general,robert2011mcmc,brooks2011handbook}, Sequential Monte Carlo (SMC)~\cite{del2006sequential} and Langevin dynamics~\cite{roberts2002langevin,roberts1996convergence}.
The Langevin method has strong historical connections to statistical physics~\cite{rossky1978brownian} and can be seen as a stochastic analogue to gradient descent. Extensions of these methods include Metropolis adjusted Langevin and Hamiltonian Monte Carlo (HMC) sampling methods defined on Riemannian manifolds~\cite{girolami2011riemann} as well as ensemble methods~\cite{garbuno2020interacting,garbuno2020affine} ensuring affine invariance~\cite{goodman2010ensemble}.
Under smoothness and growth conditions on the log-density, one can define a simple first order overdamped Langevin process, which admits the target density as invariant measure and which contracts exponentially (in relative entropy) to that invariant measure~\cite{markowich2000trend}. Methods obtained by discretization of this process are called Langevin Monte Carlo (LMC) methods. Errors bounds of LMC in case of a $M$-Lipchitz, $m$-strongly convex potential have been extensively studied in Wasserstein-$2$ distance, relative entropy and total variation distance, cf.~\cite{Dalayan2016theoretical,dalalyan2017stronger, durmus2017unadjusted, Dalalyan2017UserfriendlyGF, cheng2018kl, durmus2019high, dwivedi2018fast}.
There are also works aiming to extend the convergence analysis beyond the restricted log-concave setting~\cite{luu2021sampling, cheng2018sharp, majka2020nonasymptotic, raginsky2017non, chau2021stochastic, zhang2023nonasymptotic}. A good overview of the different approaches can be found in~\cite{zhang2020wasserstein}. Another interesting work is~\cite{altschuler2022concentration}, where a bound for the variance proxy of the sub-Gaussian invariant distribution of LMC is derived.

\subsection{Methodology}

This work is concerned with sampling from measures of the form
\begin{equation*}
    \mathrm{d}\mu_\infty(x) = Z^{-1}e^{-V(x)}\mathrm{d}x,
\end{equation*} 
where $V$ is known and $Z$ is an (unknown) normalization constant. The goal is to derive complexity bounds for a neural network architecture, which takes samples 
distributed according to a sub-Gaussian reference distribution $\mu_0$ as inputs and outputs samples from $\mu_\infty$ up to an epsilon error in the Wasserstein-$2$ distance.
To achieve this, a ResNet-like architecture inspired by the LMC algorithm is proposed. In its simplest form, the LMC algorithm with step size $h>0$ takes the form $\Tilde{X}_k = \tilde{X}_{k-1} - h\nabla V(\Tilde{X}_{k-1}) + \xi_k$, $\Tilde{X}_0 \sim \mu_0$, where the $\xi_k$ are independent normally distributed increments in $\mathbb{R}^d$ with mean $0$ and covariance matrix $2hI_d$. 
A sketch of the architecture to approximate this process is shown in \Cref{fig:drn}. 
The introduced network has two important properties.
By imitating the Langevin algorithm, the number of parameters in the expressivity results mainly depends on how well the drift $-\nabla V$ can be approximated by networks $\psi_k$ as well as on the number of realized time discretization steps.
Furthermore, the architecture allows one to train the small parts $\psi_k$ of the complete network separately. The analysis combines results for Wasserstein-$2$ convergence of LMC with perturbation arguments for the FCNN-approximated drift terms. 
\begin{figure}
    \centering
    \includestandalone{images/tikz/DRN_sketch.tex}
    \caption{Sketch of the ResNet-like architecture used in this work.}
    \label{fig:drn}
\end{figure}
The analysis of the derived networks is carried out for potentials satisfying the following assumption.
\begin{assumption}\label{assump:potential}We make the following assumption on the potential $V$ of $\mu_\infty$: 
    \begin{itemize}
        \item $V$ has a $M$-Lipschitz gradient: $\forall x,y \in \mathbb R^d,\ \|\nabla V(x)-\nabla V(y)\|_{\ell^2} \leq M\|x-y\|_{\ell^2}$.
        \item $V$ is strongly convex with parameter $m$: $\forall x,y \in \mathbb R^d,\ V(x)-V(y)-\langle\nabla V(y),x-y\rangle \geq \frac m 2 \|x-y\|_{\ell^2}^2$.
    \end{itemize}
\end{assumption}
We henceforth assume that the drift $-\nabla V$ can be approximated by a NN either locally or globally, leading to different complexity bounds. We do not confine the choice of activation function of the NN at this point.
However, later results specifically require the use of the ReLU activation function.
\begin{assump} \label{assump: nn approx}
    We assume that at least one of the following assumptions holds regarding availability of FCNN approximations of the drift.
    \begin{thmlist}
\item\label{assump:nn_approx_b} For any $\varepsilon > 0$ there exists an FCNN $\phi_{\varepsilon}$ with realization $\mathcal{R}\phi_{\varepsilon}$ and $N(d,\varepsilon,m,M)$ parameters  such that
\begin{equation}
    \| -\nabla V(x) - \mathcal{R}\phi_{\varepsilon}(x) \|_{\ell^2} \leq \varepsilon(1+\|x\|_{\ell^2})
\end{equation}
for all $x\in\mathbb{R}^d$.
\item\label{assump:nn_approx_a} For any $\varepsilon > 0$ and $r>0$ there exists an FCNN $\phi_{\varepsilon,r}$ with realization $\mathcal{R}\phi_{\varepsilon,r}$, number of parameters $N(d,r,\varepsilon,m,M)$ and depth $L(d,r,\varepsilon,m,M)$ such that
$$\| -\nabla V-\mathcal{R}\phi_{\varepsilon,r} \|_{L^{\infty}(B_r(0);\mathbb{R}^d)} \leq \varepsilon/\sqrt{2},$$
where $B_r(0)$ is the closed $\ell^2$-ball with radius $r$ and center in $0$ in $\mathbb{R}^d$.
    \end{thmlist}
\end{assump}

In the analysis, a perturbed LMC process is considered, where $-\nabla V$ is replaced by neural network approximations. It is then shown that under either of the assumptions in~\cref{assump: nn approx} the global $L^2$-error of the approximation can be bounded with respect to the current measure at any time in the process.
The analysis differs depending on which item of~\cref{assump: nn approx} is assumed.  \cref{assump:nn_approx_b}  presupposes global approximation, with the error growing at most linearly and with sufficiently small slope $\varepsilon$. 
While this is a strong assumption, note that the set of functions satisfying it is non-empty for ReLU networks, since it includes quadratic potentials $V$. 
The linear growth bound on the error is desirable since it allows to uniformly bound the variance proxies of the sub-Gaussian distributions induced by the approximate LMC process (where the drift $-\nabla V$ is replaced by FCNN approximations) by the sum of the variance proxies of the starting and target distributions. 
On the other hand, in~\cref{assump:nn_approx_a} only approximation on a ball of arbitrary radius is presupposed. Somewhat mitigating the strong assumption of linear error growth, we show that~\cref{assump:nn_approx_a} and an additional constraint on the Lipschitz constant $M$ of the potential gradient implies a similar uniform bound on the variance proxies. Before we state the main result, we briefly summarize the main contributions of this work.

\subsection{Contribution}
\begin{itemize}
    \item This work derives complexity bounds for neural network architectures approximating the Langevin Monte Carlo process and some target density to arbitrary accuracy in Wasserstein-$2$ distance. In particular, we show in \cref{thm:lmc_approx_ii} and \cref{thm:dnn_lmc_lipschitz2} that under suitable assumptions on the target density, this architecture does not suffer from the curse of dimensionality, i.e. the complexity does not grow like $\varepsilon^{-d}$ with the error $\varepsilon$ and dimension $d$. We provide an informal summary of the main complexity results in the next section.
    \item In the analysis, we are able to show an interesting property of the underlying LMC process, which is that the variance proxies of the sub-Gaussian intermediate measures are uniformly bounded, both in the unperturbed case (\cref{prop:subgaussian_LMC}) as well as with perturbed drift (\cref{prop:subgaussian_nn_process}). The bound we obtain for the intermediate variance proxies is an intuitive convex combination of the bounds for the initial and the invariant measure respectively. To the best of our knowledge, these results are not yet known to the community, whereas bounds on the variance proxy of the invariant measure were shown recently in~\cite{altschuler2022concentration}.
    \item In the proof of \cref{thm:dnn_lmc_lipschitz2}, we provide a detailed theoretical construction of neural networks to approximate suitable functions globally with respect to a sub-Gaussian measure, while only assuming local approximation properties of the networks. 
    \item In \cref{sec:Numerics}, we showcase the advantage of NN approximations of Langevin Monte Carlo by estimating the ground truth in an inverse problem defined by a parametric partial differential equation. 
\end{itemize}

\subsection{Main result}

The following is an informal version of the two main complexity results of this work, namely~\cref{thm:lmc_approx_ii,thm:dnn_lmc_lipschitz2}.

\begin{theorem}[Main convergence result]\label{thm:main}
Assume that $V\colon \mathbb{R}^d\rightarrow\mathbb{R}$ is an $M$-Lipschitz, $m$-strongly convex potential as in \cref{assump:potential}. 
Let $\mu_{0}$ be sub-Gaussian with variance proxy $\sigma_0^2 > 0$ and $Y_0\sim \mu_0$. Then, for $\varepsilon > 0$, $h\in(0,\frac{2}{m+M})$ and $K \in \mathbb{N}$, there exists a ResNet-like network $\Psi$ such that the measure $\mu^{\Psi}$ of $\Psi(Y_0)$ satisfies
     \begin{equation}\label{eq:wasserstein_result}
         \mathcal W_2(\mu_{\infty},\mu^{\Psi}) \leq (1-mh)^K\mathcal W_2(\mu_{\infty}, \mu_{0}) + \frac{7\sqrt 2}{6} \frac M m \sqrt{hd} + \dfrac{1-(1-mh)^K}{m}\varepsilon.
     \end{equation}
     Furthermore, the complexity of $\Psi$ can be bounded as follows.
\begin{thmlist}
     \item \label{thm:main_a} Under \cref{assump:nn_approx_b}, there exists a constant $c(d) \in \mathcal{O}(d^{-1})$ and a ResNet-like network $\Psi$ satisfying \eqref{eq:wasserstein_result} with number of parameters bounded by $KN(d,c(d)\varepsilon,m,M)$.
     \item \label{thm:main_b} Under \cref{assump:nn_approx_a} and the additional assumption that $M < \sqrt{2}m$, there exists a ResNet-like network $\Psi$ satisfying \eqref{eq:wasserstein_result} with number of parameters in 
     \ifnum\classstyle=1{\footnotesize
     \begin{equation*}
        K\cdot \mathcal{O}\left( d\log(2d\max\{ 1,r\sqrt{M^2-m^2} \}/\varepsilon) + N(d,r,\sqrt{2}\varepsilon/\sqrt{d},m,M) + dL(d,r,\sqrt{2}\varepsilon/\sqrt{d},m,M) + 2d^2 \right),
     \end{equation*}}
     \fi
     \ifnum\classstyle=0
     \begin{equation*}
     \begin{aligned}
        K\cdot \mathcal{O}\big( &d\log(2d\max\{ 1,r\sqrt{M^2-m^2} \}/\varepsilon) +\\ 
        &N(d,r,\sqrt{2}\varepsilon/\sqrt{d},m,M) + dL(d,r,\sqrt{2}\varepsilon/\sqrt{d},m,M) + 2d^2 \big),
     \end{aligned}
     \end{equation*}
     \fi
 where $r \in \mathcal{O}(d(1+\ln(d^2\varepsilon^{-4})^{-\frac{1}{2}}))$.
\end{thmlist}
\end{theorem}

We provide some remarks and intuitions on this result. Consider first the perturbation in Wasserstein distance~\eqref{eq:wasserstein_result}. The first two terms result from the LMC process, which the \textit{ResNet-like} network $\Psi$ imitates. This process has step-size $h$ and total number of steps $K$. The first term $(1-mh)^K\mathcal W_2(\mu_{\infty}, \mu_{0})$ results from the contraction property of the continuous Langevin dynamics. Note in particular that it decreases when increasing either the step-size or the number of steps, both of which correspond to letting the Langevin process run for a longer period $Kh$. If we fix a terminal time $T$ such that $h = T/K$ and consider the continuous limit $K\rightarrow\infty$, this term recovers the exponential convergence of the continuous system, since $(1-mT/K)^K\rightarrow \exp(-mT)$. It can be clearly seen that the strong convexity constant $m$ defines the rate of the contraction in the continuous (and small step-size) limit. The second term $\frac{7\sqrt 2}{6} \frac M m \sqrt{hd}$ is a discretization error of the LMC algorithm, decreasing with the step size $h$. The third term results from a neural network approximation of the drift in every step, with error depending on the approximation accuracy $\varepsilon$.

We obtain different upper bounds for the complexity of the network depending on the assumption. 
\cref{thm:main_a} presupposes global approximation with linear error growth. Here, the network $\Psi$ is constructed by ``composing'' $K$ times the network $\phi_{c(d)\varepsilon}$ from \cref{assump:nn_approx_b}, i.e. $\psi_1=\psi_2=\ldots = \psi_K = h\phi_{c(d)\varepsilon}$ in the sense of \Cref{fig:drn}. In this case, the variance proxies of the intermediate measures (of the random variables $Y_k$ in \Cref{fig:drn}) can be uniformly bounded, and the error incurred by the network in every step can be estimated by upper bounds on second moments derived from the variance proxies. The resulting complexity is simply $K$ times the complexity of the drift approximation $\phi_{c(d)\varepsilon}$. The analysis for this case is presented in \cref{sec:linear_error_growth}. \cref{thm:main_b} presupposes only local approximations of the gradient potential. Assuming no additional structure of the potential, we cannot derive uniform bounds on the intermediate variance proxies, which would lead to a complexity growing exponentially in the number of steps $K$. An analysis with ``worst case'' upper bounds on the variance proxies can be performed using globally bounded neural networks with the upper bound growing from step to step. This analysis however is not very insightful and thus omitted.
In \cref{thm:main_b}, we require instead additional structure of our target in order to receive a uniform bound on the variance proxies. Namely, $V$ must not be too far away from a quadratic function, which is encoded in the additional condition that $M < \sqrt{2}m$. An alternative way to view this condition is that $\nabla V$ can be approximated globally with a linear function, incurring a sufficiently small error. Under this additional assumption, we can again show the existence of a network $\phi$ such that ResNet-like network $\Psi$ given by blocks $\psi_1=\psi_2=\ldots =\psi_K = h\phi$ (in the sense of \Cref{fig:drn}) satisfies \eqref{eq:wasserstein_result}. The construction of the network $\phi$ is quite technical, involving cut-offs and multiplication with approximate indicator functions of the networks provided by \cref{assump:nn_approx_a}. Hence, the complexity is higher in this case than under the assumption of global linear error growth. Note however that the complexity again only grows linearly in the number of steps $K$. This analysis is performed in \cref{sec:combination}.

\subsection{Structure of the paper}
We begin with the definition of the Langevin process, Wasserstein spaces and feed-forward neural networks and introduce our notation in~\cref{sec:preliminaries}. We continue by introducing the \emph{ResNet-like} architecture in~\cref{sec:ResNet-like}. A convergence result for the perturbed Langevin process with approximate drifts in every step is derived in~\cref{sec:perturbed Langevin}. Our main results on the expressivity of the ResNet-like architecture to sample from the given distribution can be found in~\cref{sec:linear_error_growth,sec:combination}. 
Herein,~\cref{sec:linear_error_growth} is based on~\cref{assump:nn_approx_b} and~\cref{sec:combination} is based on~\cref{assump:nn_approx_a}. The paper is structured such that the proofs of all results are found in the appendix. 
In~\cref{sec:Numerics}, numerical experiments for a Gaussian, a Gaussian mixture and a posterior distributions can be found. We summarize and discuss our results in~\cref{sec:Conclusion}. 

\section{Definitions and notation}
\label{sec:preliminaries}

We briefly recall some definitions and results that are used throughout the paper. An analysis of the Langevin Monte Carlo approximation is carried out involving the notion of Wasserstein spaces. Moreover, a multi-dimensional sub-Gaussian random vector is defined and a notion of Lyapunov functions is introduced as well as a formal notation for neural networks.

\subsection{Langevin Monte Carlo and Wasserstein space}

Throughout this manuscript, let $d\in\mathbb{N}$ be the dimension of the space, on which the target distribution lives. Let $(\Omega,\mathcal{F},\mathbb{P})$ be a probability space and $W:[0,\infty)\times \Omega\rightarrow \mathbb{R}^d$ be standard Brownian motion. Let $V\in\mathcal{C}^1(\mathbb{R}^d,\mathbb{R})$ satisfy \cref{assump:potential} with Lipschitz constant $M\in (0,\infty)$ and convexity constant $m\in (0,\infty)$. Let $h \in (0,\infty)$. Let $\chi_{h}: [0,\infty)\rightarrow [0,\infty)$, $\chi_{h}(s) = \max\{ kh : k \in\mathbb{N}_0, kh \leq s \}$ and $\xi_k = \sqrt 2 (W_{kh} - W_{(k-1)h}) \sim \mathcal N_d(0, 2h I_d)$ for $k \in \mathbb{N}$. Let $\mu_0$ be a distribution on $\mathbb{R}^d$ with finite moments $\mathbb{E}_{x\sim\mu_0}[|x|^p]^{\frac{1}{p}} < \infty$ for all $p\in\mathbb{N}$ and $X_0,\tilde{X}_0\colon \Omega\rightarrow \mathbb{R}^d$ be random variables independent of the Brownian motion $W$ such that $X_0, \tilde{X}_0\sim \mu_0$. Let $X, \tilde{X}\colon [0,\infty)\times\Omega\rightarrow\mathbb{R}^d$ be stochastic processes satisfying

\begin{subequations}
\begin{align}\label{eq:X_t}
    X_t &= X_0 - \int_0^t \nabla V(X_s) \mathrm{d}s + \sqrt{2}W_t,
    \\
    \tilde{X}_t &= \tilde{X}_0 - \int_0^t \nabla V(\tilde{X}_{\chi_{h}(s)})\mathrm{d}s + \sqrt{2}W_t,
    \label{eq:tildeX_t}
\end{align}
\end{subequations}

for all $t\in[0,\infty)$ and denote the distributions of $X_t$ and $\tilde X_t$ by $\mu_t^X$, and $\mu^{\tilde X}_t$, respectively. We call $\tilde{X}$ the \emph{Langevin-Monte-Carlo} process (LMC), which is the time discretization of the \emph{Langevin process} $X$. Let $\mu_{\infty}$ be the probability distribution on $\mathbb{R}^d$ defined by $\mathrm{d}\mu_{\infty}(x) = \frac{1}{Z}e^{-V(x)}\mathrm{d}x$, where $Z\in (0,\infty)$ is a normalization constant such that $\int_{\mathbb{R}^d}\mathrm{d}\mu_{\infty}(x) = 1$.

Furthermore, the notion of balls and spheres is needed.

\begin{remark}[Balls and spheres]
    For some normed space $(U, \|\cdot\|_U)$ we denote the  the \emph{closed ball} of radius $r > 0$ centered at a point $x \in U$ by $B_r(x)$ and the \emph{sphere} of radius $r > 0$ centered at a point $x \in U$ by $\mathbb S_r(x)$.
    We henceforth use $U = \R^d$ and $\|\cdot\|_U = \|\cdot\|_{\ell^p}$.
    In this case, we write $B_r^p(x)$ and $\mathbb S_r^p(x)$ to denote the norm dependence. In the special case of $p=2$ we often simply write $B_r^2(x) = B_r(x)$ and $\mathbb S_r^2(x) = \mathbb S_r(x)$.
\end{remark}

We henceforth use the \textit{Kantorovich–Rubinstein metric} (or \textit{Wasserstein-$p$ distance}) of measures for $p=2$.

\begin{definition}[Wasserstein space]\label{def:wasserstein_space}
    For $p \geq 1$, denote by $\mathcal D_p(\R^d)$ the set of probability measures on $\R^d$ endowed with the Wasserstein-$p$ distance
    \begin{equation*}
        \mathcal W_p(\mu, \nu)^p := \min_{\gamma \in \Pi(\mu, \nu)}\int_{\R^d}\|x-y\|_{\ell^p}^p \dd\gamma(x,y),
    \end{equation*}
    where $\Pi(\mu,\nu) := \left\{\gamma \in \mathcal P(\R^d \times \R^d) : (\pi_0)_\sharp \gamma = \mu, (\pi_1)_\sharp \gamma = \nu \right\}$ is the set of \textit{transport plans} with marginals $\mu$ and $\nu$.
\end{definition}

The next theorem recalls Wasserstein-$2$ convergence results for the LMC scheme defined by \Cref{eq:tildeX_t}.

\begin{theorem}[{Guarantees for the constant-step LMC~\cite[Theorem~1]{Dalalyan2017UserfriendlyGF}}]\label{thm:LMC_approx}
    Assume that $h \in (0,\frac 2 M)$ and that $V$ satisfies \cref{assump:potential}. For any $K\in\mathbb{N}_0$, the following claims hold: 
    \begin{itemize}
        \item If $h \leq \frac{2}{m+M}$ then $\mathcal W_2(\mu_{\infty},\mu^{\tilde X}_{Kh}) \leq (1-mh)^K\mathcal W_2(\mu_{\infty}, \mu_{0}) + \frac{7\sqrt 2}{6} \frac M m \sqrt{hd}$.
        \item If $h \geq \frac{2}{m+M}$ then $\mathcal W_2(\mu_{\infty},\mu^{\tilde X}_{Kh}) \leq (Mh-1)^K\mathcal W_2(\mu_{\infty}, \mu_{0}) + \frac{7\sqrt 2}{6} \frac{Mh}{2-Mh} \sqrt{hd}$.
    \end{itemize}
    \end{theorem}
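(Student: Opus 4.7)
I would reproduce Dalalyan's synchronous-coupling argument from~\cite{Dalalyan2017UserfriendlyGF}. The central object is a coupling of the LMC iterates $\tilde X_{kh}$ with a stationary-started continuous Langevin diffusion $Y$, both driven by the same Brownian motion $W$. Initialize $Y_0 \sim \mu_{\infty}$ and $\tilde X_0 \sim \mu_0$ under an optimal $\mathcal W_2$-coupling, so that $\mathbb E\|Y_0 - \tilde X_0\|_{\ell^2}^2 = \mathcal W_2(\mu_\infty, \mu_0)^2$ and $Y_{kh}\sim \mu_\infty$ for every $k$. Then $\mathcal W_2(\mu_\infty, \mu^{\tilde X}_{Kh})^2 \leq \mathbb E\|Y_{Kh} - \tilde X_{Kh}\|_{\ell^2}^2$, so it suffices to control the $L^2$-coupling error.

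The key one-step ingredient is that $T_h \colon x \mapsto x - h\nabla V(x)$ is a contraction with rate $1 - mh$ when $h \leq 2/(m+M)$. To see this, write $\nabla V(x) - \nabla V(y) = H(x,y)(x-y)$ with $H(x,y) := \int_0^1 \nabla^2 V\bigl(y + t(x-y)\bigr)\,\dd t$ a symmetric operator (in a distributional sense if $V \in C^1$ only; otherwise one uses the Baillon--Haddad theorem directly on $\nabla V$) with spectrum in $[m,M]$. Hence $I - hH$ has spectrum in $[1-hM, 1-hm]$, whose operator norm equals $1-hm$ precisely when $h \leq 2/(m+M)$. In the complementary regime $h \geq 2/(m+M)$, the same reasoning gives rate $Mh - 1$, which is how the second case of the theorem arises.

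Next, a one-step recursion is obtained by writing
\begin{equation*}
\tilde X_{(k+1)h} - Y_{(k+1)h} = T_h(\tilde X_{kh}) - T_h(Y_{kh}) + \int_{kh}^{(k+1)h}\bigl(\nabla V(Y_s) - \nabla V(Y_{kh})\bigr)\,\dd s,
\end{equation*}
since the Brownian increments cancel in the coupling. Taking $L^2$ norms and applying the contraction together with the $M$-Lipschitz bound on $\nabla V$ yields
\begin{equation*}
\|\tilde X_{(k+1)h} - Y_{(k+1)h}\|_{L^2} \leq (1-mh)\,\|\tilde X_{kh} - Y_{kh}\|_{L^2} + M \int_{kh}^{(k+1)h}\|Y_s - Y_{kh}\|_{L^2}\,\dd s.
\end{equation*}
The integrand is bounded by using the SDE for $Y$, stationarity of $\mu_\infty$, and the integration-by-parts identity $\mathbb E_{\mu_\infty}\|\nabla V\|_{\ell^2}^2 = \mathbb E_{\mu_\infty}[\Delta V] \leq Md$. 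This produces bounds of the form $\|Y_s - Y_{kh}\|_{L^2} \leq (s-kh)\sqrt{Md} + \sqrt{2(s-kh)d}$, and integrating over $[kh,(k+1)h]$ gives a one-step residual of order $Mh^{3/2}\sqrt d$ with the sharp constants $\tfrac{Mh^2}{2}\sqrt{Md} + \tfrac{2\sqrt 2}{3}Mh^{3/2}\sqrt d$.

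Finally, iterating the recursion and summing the geometric series $\sum_{j=0}^{K-1}(1-mh)^j = (1-(1-mh)^K)/(mh)$ divides out an $mh$ from the discretization residual, leaving the announced bound with the factor $\tfrac{7\sqrt 2}{6}\tfrac{M}{m}\sqrt{hd}$. The main obstacle is the sharp constant: getting $7\sqrt 2/6$ rather than a crude $\mathcal O(1)$ constant requires the stationary Stein identity for $\mathbb E\|\nabla V\|_{\ell^2}^2$, careful use of the triangle inequality in $L^2$ to split the drift and Brownian contributions to $Y_s - Y_{kh}$, and exact evaluation of $\int_0^h \sqrt{2s}\,\dd s = \tfrac{2\sqrt 2}{3}h^{3/2}$; any slackness here is amplified by the division by $mh$ in the geometric sum.
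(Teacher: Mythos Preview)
The paper does not give its own proof of this theorem: it is quoted directly from~\cite{Dalalyan2017UserfriendlyGF} and used as a black box (in the proof of \cref{thm:nn_approx_lmc} only the statement is invoked, together with the contraction \cref{lem:bound_delta} from~\cite{dalalyan2017stronger}). Your proposal correctly reconstructs Dalalyan's synchronous-coupling proof, with the right ingredients in the right order: the $(1-mh)$-contractivity of $x\mapsto x-h\nabla V(x)$ for $h\le 2/(m+M)$, the one-step decomposition that isolates the discretization residual $\int_{kh}^{(k+1)h}(\nabla V(Y_s)-\nabla V(Y_{kh}))\,\dd s$, the stationary Stein identity $\mathbb E_{\mu_\infty}\|\nabla V\|_{\ell^2}^2\le Md$, and the geometric sum. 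Your constant bookkeeping is also right: after dividing by $mh$ one gets $\tfrac{M}{m}\sqrt{hd}\bigl(\tfrac{\sqrt{Mh}}{2}+\tfrac{2\sqrt 2}{3}\bigr)$, and $Mh<2$ yields the stated $\tfrac{7\sqrt 2}{6}$.
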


\subsection{Sub-Gaussianity}

In the later analysis, we make use of sub-Gaussian random variables, the distributions of which exhibit a strong tail decay dominated by the tails of a Gaussian.

\begin{definition}[Sub-Gaussian random variable]\label{def:subgaussian_rv}
    Let $Z$ be a random variable on $\R$. $Z$ is said to be \emph{sub-Gaussian} with variance proxy $\sigma^2 > 0$ if it satisfies one of these equivalent conditions:
    \begin{thmlist}
        \item\label{def:subGauss_1} For any $s \in \R$,\ $\E[\exp(sZ)] \leq \exp\left(\frac{\sigma^2 s^2}{2}\right)$.
        \item\label{def:subGauss_2} For any $r > 0$,\ $\mathbb P(Z \geq r) \leq \exp\left(-\frac{r^2}{2\sigma^2}\right)$ and $\mathbb P(Z \leq -r) \leq \exp\left(-\frac{r^2}{2\sigma^2}\right)$.
        \item\label{def:subGauss_3} For any $q \in \N$,\ $\E[|Z|^{q}] \leq (\sqrt 2 \sigma)^q q \Gamma(q/2)$.
    \end{thmlist}
    We then write $Z \sim \operatorname{subG}(\sigma^2)$.
\end{definition}

\begin{definition}[Sub-Gaussian random vector]\label{def:subgaussian_rvector}
    A random vector $Z \in \R^d$ is said to be \emph{sub-Gaussian} with variance proxy $\sigma^2 > 0$ if for any $u \in \mathbb S_1(0)$ the real random variable $\langle u, Z\rangle$ is sub-Gaussian with variance proxy $\sigma^2$. We then write $Z \sim \operatorname{subG}(\sigma^2)$.
\end{definition}

\begin{proposition}[$\ell^p$-norm of a sub-Gaussian random vector is sub-Gaussian]\label{prop:lp_subgaussian}
    Let $Z\in\mathbb{R}^d$ 
    be a sub-Gaussian random vector with variance proxy $\sigma^2 > 0$. Then, $\|Z\|_{\ell^p}$ is a sub-Gaussian random variable for any $p \geq 1$ with variance proxy bounded by $d^2\sigma^2$.
\end{proposition}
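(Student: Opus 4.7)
The plan is to work through the characterization \cref{def:subGauss_3} of sub-Gaussianity, since it translates cleanly through the triangle inequality, whereas the moment generating function characterization \cref{def:subGauss_1} becomes delicate for non-negative random variables such as $\|Z\|_{\ell^p}$. My first step is to reduce to the coordinates: for each canonical basis vector $e_i \in \mathbb{S}_1(0)$, the component $Z_i = \langle e_i, Z\rangle$ is a real sub-Gaussian random variable with variance proxy $\sigma^2$ by \cref{def:subgaussian_rvector}, so by \cref{def:subGauss_3} we have the moment bound $\E[|Z_i|^q] \leq (\sqrt{2}\sigma)^q q\Gamma(q/2)$ for every $q \in \mathbb{N}$.

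Next, I would exploit the monotonicity $\|Z\|_{\ell^p} \leq \|Z\|_{\ell^1} = \sum_{i=1}^d |Z_i|$, which is valid for all $p \geq 1$ by the standard $\ell^p$-norm inequality. Applying Minkowski's inequality in $L^q(\Omega)$ then yields
\begin{equation*}
    \E\bigl[\|Z\|_{\ell^p}^q\bigr]^{1/q} \leq \E\Bigl[\bigl(\textstyle\sum_{i=1}^d |Z_i|\bigr)^q\Bigr]^{1/q} \leq \sum_{i=1}^d \E\bigl[|Z_i|^q\bigr]^{1/q} \leq d\cdot \sqrt{2}\sigma \cdot \bigl(q\Gamma(q/2)\bigr)^{1/q}.
\end{equation*}
Raising to the $q$-th power gives $\E[\|Z\|_{\ell^p}^q] \leq (\sqrt{2}\cdot d\sigma)^q q\Gamma(q/2)$, which is precisely the moment bound \cref{def:subGauss_3} with $\sigma$ replaced by $d\sigma$. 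Hence $\|Z\|_{\ell^p} \sim \operatorname{subG}(d^2\sigma^2)$, as claimed.

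The only genuinely subtle point will be convincing the reader that this moment bound really establishes sub-Gaussianity in our sense: for non-negative random variables the MGF formulation is vacuous (a strictly positive random variable cannot satisfy $\E[e^{sY}] \leq e^{\sigma^2 s^2/2}$ for small $s > 0$ without having zero mean), so one must interpret \cref{def:subGauss_3} as the operative definition in this setting. I expect this to be a short remark rather than a real obstacle. No sharpness is claimed: the factor $d^2$ comes from a crude union-style bound through $\|\cdot\|_{\ell^1}$ and is sufficient for the later use of this proposition in controlling tails of the intermediate LMC measures.
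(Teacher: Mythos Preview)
Your proof is correct and follows essentially the same strategy as the paper: both verify the moment characterization \cref{def:subGauss_3} by reducing to the coordinate bounds $\E[|Z_i|^q]\le(\sqrt{2}\sigma)^q q\Gamma(q/2)$, then assembling these into a bound on $\E[\|Z\|_{\ell^p}^q]$. The only difference is tactical---the paper compares $\|Z\|_{\ell^p}$ to $\|Z\|_{\ell^q}$ via $\|Z\|_{\ell^p}\le\max(d^{1/p-1/q},1)\|Z\|_{\ell^q}$ and then uses the exact decomposition $\E[\|Z\|_{\ell^q}^q]=\sum_i\E[|Z_i|^q]$, whereas you pass through $\|Z\|_{\ell^1}$ and Minkowski; both routes land on the same final bound $(\sqrt{2}\,d\sigma)^q q\Gamma(q/2)$.
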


\subsection{Lyapunov functions}\label{sec:lyapunov}

There is an interesting connection between sub-Gaussianity and Lyapunov functions, which we exploit in our analysis. This connection presented in \cref{rem:lyapunov_subgauss} was for instance used in \cite{altschuler2022concentration} to bound the variance proxy of the invariant measure of the LMC algorithm in the smooth log-concave setting.

\begin{definition}[Lyapunov function {\cite[Definition 3.1]{altschuler2022concentration}}]\label{def:lyapunov_function}
    For any weight $\lambda > 0$ the Lyapunov function $\mathcal{L}_{\lambda}\colon \mathbb{R}^d \rightarrow \mathbb{R}$ is defined by
    \begin{equation*}
        \mathcal{L}_{\lambda}(x) = \mathbb{E}_{v\sim \mathbb{S}_1(0)}\left[ 
 e^{\lambda \langle v,x \rangle} \right],
    \end{equation*}
    where $v\sim \mathbb{S}_1(0)$ denotes uniform sampling from the $\ell^2$-unit sphere in $\mathbb{R}^d$. Furthermore, let $\ell\colon \mathbb{R}_{\geq 0}\rightarrow \mathbb{R}_{\geq 0}$ be defined by
    \begin{equation*}
        \ell(z) = \mathbb{E}_{v\sim \mathbb{S}_1(0)}\left[ 
 e^{z \langle v,e_1 \rangle} \right]
    \end{equation*}
    such that it holds $\mathcal{L}_{\lambda}(x) = \ell(\lambda \| x \|)$ due to the rotational invariance of the Lypunov function.
\end{definition}

The following lemma establishes the fact that sub-Gaussianity of a random variable is equivalent to an appropriate exponential upper bound on the expectation of the Lyapunov function.
\begin{lemma}[Connection between sub-Gaussianity and Lyapunov functions]
\label{rem:lyapunov_subgauss}
$X$ is a sub-Gaussian random vector with variance proxy $\sigma^2$ if and only if it holds for all $\lambda > 0$ that
\begin{equation*}
\mathbb{E}_{X}\mathcal{L}_{\lambda}(X) \leq e^{\frac{\sigma^2 \lambda^2}{2}}. 
\end{equation*}
\end{lemma}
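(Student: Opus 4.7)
My approach is to establish the two implications separately. The forward direction follows from Tonelli's theorem, while the reverse direction is the main work and requires extracting a direction-wise bound from the averaged Lyapunov estimate.

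For the forward direction ($\Rightarrow$), I assume $X$ is sub-Gaussian with variance proxy $\sigma^2$. By \cref{def:subgaussian_rvector} combined with the MGF characterization in \cref{def:subGauss_1}, for every $v \in \mathbb{S}_1(0)$ and every $\lambda \in \mathbb{R}$ the one-dimensional bound $\mathbb{E}_X[e^{\lambda \langle v, X \rangle}] \leq e^{\lambda^2 \sigma^2/2}$ holds. Since the integrand is non-negative, Tonelli's theorem lets me interchange the two expectations and bound the inner one uniformly in $v$:
$$
\mathbb{E}_X \mathcal{L}_\lambda(X) = \mathbb{E}_X\mathbb{E}_{v \sim \mathbb{S}_1(0)}[e^{\lambda \langle v, X \rangle}] = \mathbb{E}_v \mathbb{E}_X[e^{\lambda \langle v, X \rangle}] \leq e^{\lambda^2 \sigma^2/2}.
$$

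For the reverse direction ($\Leftarrow$), I fix an arbitrary $u \in \mathbb{S}_1(0)$ and $\lambda > 0$ and aim to derive $\mathbb{E}_X[e^{\lambda \langle u, X \rangle}] \leq e^{\lambda^2 \sigma^2/2}$. The rotational invariance of $v$ pointed out in \cref{def:lyapunov_function} yields $\mathcal{L}_\lambda(x) = \ell(\lambda \|x\|)$, so the hypothesis rewrites as $\mathbb{E}_X[\ell(\lambda\|X\|)] \leq e^{\lambda^2 \sigma^2/2}$. From this averaged bound I would first extract a sub-Gaussian tail estimate on the scalar random variable $\|X\|$ via a Chernoff-type argument:
$$
\mathbb{P}(\|X\| \geq t) \leq \frac{\mathbb{E}_X[\ell(\lambda\|X\|)]}{\ell(\lambda t)} \leq \frac{e^{\lambda^2 \sigma^2/2}}{\ell(\lambda t)},
$$
and optimize in $\lambda > 0$ using the known asymptotics of $\ell$ to obtain a Gaussian-type decay for $\|X\|$. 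Since $|\langle u, X \rangle| \leq \|X\|$ holds deterministically, this tail bound transfers directly to every projection $\langle u, X\rangle$, and \cref{def:subGauss_2} then recovers the sub-Gaussian property of $X$.

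The hard part is to match the constants exactly: both the Chernoff step and the deterministic estimate $|\langle u, X\rangle| \leq \|X\|$ are prone to inflate the resulting variance proxy above $\sigma^2$, and one must exploit the precise shape of $\ell$ to avoid this loss. A cleaner alternative I would try is to Taylor-expand both sides of the hypothesis in $\lambda$: the even moments $\mathbb{E}_{v\sim\mathbb{S}_1(0)}[\langle v,e_1\rangle^{2k}]$ of the uniform measure on the sphere are explicit (beta-type integrals), while $e^{\lambda^2\sigma^2/2}$ has the standard Gaussian power series. Matching coefficients should translate the Lyapunov bound into a control on $\mathbb{E}[\|X\|^{2k}]$, from which \cref{def:subGauss_3} applied along each fixed direction $u$ would recover sub-Gaussianity with the sharp proxy. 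I expect the identification between the sphere moments and the Gaussian moments in \cref{def:subGauss_3} to be the delicate step where the correct constant $\sigma^2$ emerges.
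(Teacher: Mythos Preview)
Your forward direction matches the paper's exactly. For the reverse direction, your approach differs substantially from the paper's, and, as you correctly anticipate, neither of your two sketches will recover the \emph{sharp} proxy $\sigma^2$: the Chernoff-through-$\|X\|$ route picks up a dimension factor (precisely the loss quantified in \cref{prop:lp_subgaussian}), and moment matching via \cref{def:subGauss_3} runs into the fact that the three characterisations in \cref{def:subgaussian_rv} are equivalent only up to universal constants, so comparing the series for $\ell$ with the Gaussian series will not cleanly reproduce the MGF bound with the same $\sigma^2$. Both of your routes ultimately pass through a bound on $\|X\|$ and then invoke $|\langle u,X\rangle|\le\|X\|$, which is exactly where the dimension-dependent slack enters.

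The paper instead exploits rotational symmetry. Writing the uniform measure on $\mathbb{S}_1(0)$ as the pushforward of Haar measure $\mathcal{R}$ on the orthogonal group, one has for every fixed $u\in\mathbb{S}_1(0)$
\[
\mathbb{E}_X\mathcal{L}_\lambda(X)
=\mathbb{E}_X\,\mathbb{E}_{R\sim\mathcal{R}}\bigl[e^{\lambda\langle Ru,X\rangle}\bigr]
=\mathbb{E}_{R,X}\bigl[e^{\lambda\langle u,RX\rangle}\bigr],
\]
so the Lyapunov hypothesis says directly that the randomly rotated vector $RX$ satisfies the MGF bound in every fixed direction $u$, i.e.\ $RX$ is sub-Gaussian with proxy exactly $\sigma^2$. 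The paper then transfers this back to $X$ via the algebraic identity $\langle u,X\rangle=\langle (R^\ast)^{-1}u,RX\rangle$ together with $(R^\ast)^{-1}u\in\mathbb{S}_1(0)$. This rotation trick is the key idea absent from your proposal; it is what allows the argument to proceed without any Chernoff optimisation or moment bookkeeping. That said, the final transfer step is worth scrutinising: the direction $(R^\ast)^{-1}u$ is random and coupled to $RX$ through $R$, so invoking the fixed-direction tail bound for $RX$ at that point is not entirely immediate.
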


\subsection{Neural networks}\label{sec:NN}

We recall the formal mathematical notation of neural networks as introduced in \cite{Petersen2018}. The neural network is defined as a set of weights and biases, and the corresponding function is defined as the realization.

\begin{definition}[Neural network architectures]
Let
\begin{equation}
    \mathcal{N} = \bigcup_{L=2}^\infty \bigcup_{(W_0,W_1,\ldots,W_{L+1})\in \mathbb{N}^{L+1}} \left(\bigtimes^L
    _{\ell=0}(\mathbb{R}^{W_{\ell+1}\times W_\ell} \times \mathbb{R}^{W_{\ell+1}} )\right)
\end{equation}
be the set of all \emph{fully connected neural networks} (FCNNs).
We call $\sigma\in\mathcal{C}(\mathbb{R},\mathbb{R})$ the \emph{activation function} and for every $n\in\mathbb{N}$ let $\sigma_n\in\mathcal{C}(\R^n,\R^n)$ be the function satisfying $\mathbf \sigma_n(x) = (\sigma(x_1),\ldots,\sigma(x_n))^{\intercal}$ for every $x\in\R^n$. Let $\mathcal{P} \colon \mathcal{N} \rightarrow \mathbb{N}$, $\mathbb{L}:\mathcal{N} \to \mathbb{N}$, $\mathcal{R} \colon \mathcal{N} \rightarrow \mathcal{C}(\mathbb{R}^{W_0},\mathbb{R}^{W_{L+1}})$ be the \emph{number of nonzero parameters}, \emph{number of layers} and the \emph{realization}, respectively, which satisfy for all $L,W_0,\ldots,W_{L+1} \in \mathbb{N}$, $\phi = ((A_0, b_0), \ldots , (A_L, b_L)) \in \times^L
_{\ell=0}(\mathbb{R}^{W_{\ell+1}\times W_\ell} \times \mathbb{R}^{W_{\ell+1}})$ and for all $x\in\mathbb{R}^{W_0}$
\begin{subequations}
\begin{align}
    x_0 &= x, \\
    x_{\ell+1} &= \sigma_{W_{\ell+1}}(A_{\ell}x_{\ell}+b_{\ell}), \qquad \ell = 0,\ldots,L-1,\\
    \mathcal{R}\phi(x) &= A_L x_L + b_L.
\end{align}
\end{subequations}
If not specified otherwise, we let $\sigma\colon \mathbb{R}\rightarrow \mathbb{R}, x \mapsto \max\{0,x\}$ be the ReLU activation function. 
\end{definition}

For a simpler notation, we also make use of the set of FCNNs of a fixed number of layers and fixed maximal numbers of parameters per layer.

\begin{definition}[Fully connected networks with fixed width and depth]\label{assumption:NN_structure}
    Let $n_0,n_1,W,L\in\mathbb{N}$ and
    \begin{equation}
    \begin{aligned}
        \mathcal{N}_{n_0,n_1}(W,L) \coloneqq \big\{ &((A_0, b_0), \ldots , (A_L, b_L)) \in (\bigtimes^{L}_{\ell=0}(\mathbb{R}^{W_{\ell+1}\times W_{\ell}} \times \mathbb{R}^{W_{\ell+1}} )) \\
        &\colon W_0=n_0, W_{L+1}=n_1, W_{\ell}\leq W ~~\forall~~ \ell\in\{1,\ldots,L\} \big\}
        \end{aligned}
    \end{equation}
    be the set of \emph{fully connected neural networks with fixed width and depth} with $n_0$ inputs, $n_1$ outputs, and a maximum of $W$ neurons in each layer $\ell \in \{1,\dots, L\}$.
\end{definition}

\section{ResNet-like architectures}\label{sec:ResNet-like}

We define neural network realizations that resemble residual neural networks as introduced in~\cite{he2016deep} including multiple skip connections from the input to intermediate results, see \Cref{fig:drn} for an illustration. This architecture allows to efficiently approximate LMC and performs well in our numerical experiments.
    
\begin{definition}[ResNet-like realization]\label{def:weird_realization}
Let $K,n\in \mathbb{N}$  and let $ \Phi \coloneqq \{ \phi_i \}_{i=1}^{K} \subset \mathcal{N}$ with $W_0=W_{L+1}=n$ fixed for every network. 
A \emph{ResNet-like realization} $\tilde{\mathcal{R}}\Phi$
$ \in \mathcal{C}(\mathbb{R}^{n}\times \bigtimes_{i=1}^K\R^{n},\mathbb{R}^{n})$ is defined for $x\in\R^n$ and $y=(y_1,\ldots,y_K)\in\bigtimes_{i=1}^K \R^n$ by
\begin{subequations}
\begin{align}
    x_0 &\coloneqq x, \\
    x_i &\coloneqq x_{i-1} + \mathcal{R}\phi_i(x_{i-1}) + y_i\quad \text{for } i=1,\dots, K,\label{eq:set_realization_weird}\\
    \widetilde{\mathcal{R}}\Phi
    (x, y) &\coloneqq x_K.
\end{align}
\end{subequations}
\end{definition}

The ResNet-like architecture is defined in a way such that it emulates a perturbed version of a Langevin Monte Carlo process, where the drift in the $k$-th step is replaced by the realization of a FCNN $\mathcal{R}\phi_k$. The following definition makes this perturbed process concrete.

\begin{definition}[Stochastic process driven by $\Phi$]\label{def:NN_driven_process}
Let $K \in \mathbb{N}$, $\phi_1,\ldots,\phi_K \colon \mathbb{R}^d\rightarrow\mathbb{R}^d$ be Lipschitz-continuous and $\Phi = \{\phi_i\}_{i=1}^K$. 
Let $Y^{\Phi}\colon [0,Kh]\times\Omega \rightarrow\R^d$ be the stochastic process defined by
\begin{equation}\label{eq:driven_process}
    Y^{\Phi}_t = Y_0 + \int_{0}^t \phi_{1+\frac{1}{h}\chi_h(s)}\left(Y^{\Phi}_{\chi_h(s)}\right) \mathrm{d}s + \sqrt{2}W_t.
\end{equation}
In this case $Y^{\Phi}$ is called a \emph{stochastic process driven by $\Phi$}. We denote the law of the process $Y^{\Phi}$ by $\mu^{\Phi}_{t}$. When considering a set of neural networks $\{ \phi_1,\ldots,\phi_K \}$, we denote the stochastic process driven by the realizations $\{\mathcal{R}\phi_k,\ldots,\mathcal{R}\phi_K\}$ by $Y^{\Phi}$, suppressing the realizations for the sake of brevity.
\end{definition}

To prepare for approximations of LMC with neural networks, we first show that the previously defined ResNet-like realizations in \cref{def:weird_realization} are the appropriate architecture to represent stochastic processes driven by neural networks in the sense of \cref{def:NN_driven_process}.

\begin{proposition}\label{prop:resnet_equals_driven_process}
Let $K\in\mathbb{N}$ 
and $\Phi = \{\phi_i\}_{i=1}^K \subset \mathcal{N}$.

Let $\xi =(\xi_1,\ldots,\xi_{K})$, 
where $\xi_i$ are the Brownian increments as defined in \cref{sec:preliminaries}. 
Let $Y_0 \sim\mu_0$ and let $Y^{\Phi}$ 
be a stochastic process driven by $\Phi$ in the sense of \cref{def:NN_driven_process}. 
Then, there exists $K$ neural networks $\psi_1,\ldots,\psi_K$ such that $\psi_i$ has the same width and number of layers as $\phi_i$ 
for $i=1,\ldots,K$, and $\Psi = \{ \psi \}_{i=1}^K$ such that
\begin{equation}
   Y^{\Phi}_{Kh} = \tilde{\mathcal{R}}\Psi (Y_0,\xi).
\end{equation}
\end{proposition}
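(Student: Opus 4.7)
The plan is to exhibit explicit networks $\psi_i$ built from the $\phi_i$ by scaling the last (linear) layer by the step size $h$, and then verify the identity by a straightforward induction on the discrete time index.

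First I would unpack the driven process at the discrete times $t_i = ih$. Because $\chi_h$ is piecewise constant on intervals of length $h$ and the integrand in \eqref{eq:driven_process} only depends on $s$ through $\chi_h(s)$, evaluating \eqref{eq:driven_process} at $t = ih$ and subtracting the value at $t = (i-1)h$ gives the Euler-Maruyama recursion
\begin{equation*}
    Y^{\Phi}_{ih} = Y^{\Phi}_{(i-1)h} + h\, \mathcal R \phi_i\!\left(Y^{\Phi}_{(i-1)h}\right) + \xi_i, \qquad Y^{\Phi}_0 = Y_0,
\end{equation*}
for $i = 1, \ldots, K$, using that $\sqrt 2 (W_{ih} - W_{(i-1)h}) = \xi_i$.

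Next I would construct $\psi_i$ from $\phi_i$. Write $\phi_i = ((A^{(i)}_0, b^{(i)}_0), \ldots, (A^{(i)}_{L_i}, b^{(i)}_{L_i}))$ with $L_i$ hidden layers, and define
\begin{equation*}
    \psi_i \coloneqq \left((A^{(i)}_0, b^{(i)}_0), \ldots, (A^{(i)}_{L_i-1}, b^{(i)}_{L_i-1}), (h A^{(i)}_{L_i}, h b^{(i)}_{L_i})\right).
\end{equation*}
Since only the coefficients of the terminal (affine) layer are rescaled, $\psi_i$ has identical widths and the same number of layers as $\phi_i$. The terminal layer is not followed by any activation in the realization, so $\mathcal R \psi_i(x) = h \mathcal R \phi_i(x)$ for every $x \in \R^d$.

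Finally I would verify by induction on $i$ that the sequence $x_i$ defined in \cref{def:weird_realization} with inputs $x = Y_0$ and $y = \xi$ satisfies $x_i = Y^{\Phi}_{ih}$. The base case $x_0 = Y_0 = Y^{\Phi}_0$ is immediate. For the inductive step, assuming $x_{i-1} = Y^{\Phi}_{(i-1)h}$,
\begin{equation*}
    x_i = x_{i-1} + \mathcal R \psi_i(x_{i-1}) + \xi_i = Y^{\Phi}_{(i-1)h} + h\, \mathcal R \phi_i\!\left(Y^{\Phi}_{(i-1)h}\right) + \xi_i = Y^{\Phi}_{ih},
\end{equation*}
by the recursion above. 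Taking $i = K$ yields $\widetilde{\mathcal R} \Psi(Y_0, \xi) = x_K = Y^{\Phi}_{Kh}$, which is the claim. There is no real obstacle here; the only point to be careful about is that the final layer of an FCNN carries no activation, so scaling it really does scale the realization, and the layer-count/width bookkeeping must be checked so that $\psi_i$ genuinely has the same architecture as $\phi_i$.
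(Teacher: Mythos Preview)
Your proposal is correct and follows essentially the same approach as the paper: both scale the final affine layer of each $\phi_i$ by $h$ to obtain $\psi_i$ with $\mathcal R\psi_i = h\,\mathcal R\phi_i$, then verify that the ResNet-like recursion coincides with the Euler--Maruyama step of the driven process. The paper writes out the telescoping sum directly rather than framing it as an induction, but the content is identical.
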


The proof can be found in~\cref{sec:Proofof-prop:resnet_equals_driven_process}.
\section{Perturbed Langevin Monte Carlo}\label{sec:perturbed Langevin}

In this section, we derive a perturbation result in Wasserstein-$2$ distance for the standard Langevin process, when the drift is replaced by approximations with small global $L^2$-error w.r.t. the current measure. This is more or less a direct consequence of \cref{thm:LMC_approx} and the proof follows similar arguments. Upper bounds for the convergence of the unperturbed Langevin process $\tilde{X}$ can be found in~\cite{dalalyan2017stronger}.

\begin{theorem}[Perturbed LMC]\label{thm:nn_approx_lmc}
    Assume $\varepsilon > 0$, $h \in (0,\frac 2 M)$ and $V$ satisfies \cref{assump:potential}. Let $Y_0\sim\mu_0$, $K\in\N$ and $\Phi = \{ \phi_i: \mathbb{R}^d \to \mathbb{R}^d\}_{i=1}^K$.
    Let $Y^{\Phi}$
    be the stochastic process driven by $\Phi$ 
    and assume that for $i=0,\ldots,K-1$,
\begin{equation}\label{eq: network approx drift}
    \| -\nabla V- \phi_{i+1} \|_{L^2_{\mu_{ih}^\Phi}(\R^d;\R^d)} < \varepsilon
\end{equation}
is satisfied. Then the law $\mu_t^{\Phi}$ of the process $Y^{\Phi}$ satisfies
    \begin{itemize}
        \item If $h \leq \frac{2}{m+M}$ then 
        \ifnum\classstyle=1
        $\mathcal W_2(\mu_{\infty},\mu_{Kh}^{\Phi}) \leq (1-mh)^K\mathcal W_2(\mu_{\infty}, \mu_{0}) + \frac{7\sqrt 2}{6} \frac M m \sqrt{hd} + \dfrac{1-(1-mh)^K}{m}\varepsilon$.
        \fi
        \ifnum\classstyle=0
        {\footnotesize
        $\mathcal W_2(\mu_{\infty},\mu_{Kh}^{\Phi}) \leq (1-mh)^K\mathcal W_2(\mu_{\infty}, \mu_{0}) + \frac{7\sqrt 2}{6} \frac M m \sqrt{hd} + \dfrac{1-(1-mh)^K}{m}\varepsilon$.
        }
        \fi
        \item If $h \geq \frac{2}{m+M}$ then 
        \ifnum\classstyle=1
        $W_2(\mu_{\infty},\mu^{\Phi}_{Kh}) \leq (Mh-1)^K\mathcal W_2(\mu_{\infty}, \mu_{0}) + \frac{7\sqrt 2}{6} \frac{Mh}{2-Mh} \sqrt{hd} + \dfrac{1-(Mh-1)^{K}}{2-Mh}h\varepsilon$.
        \fi
        \ifnum\classstyle=0
        {\footnotesize
        $W_2(\mu_{\infty},\mu^{\Phi}_{Kh}) \leq (Mh-1)^K\mathcal W_2(\mu_{\infty}, \mu_{0}) + \frac{7\sqrt 2}{6} \frac{Mh}{2-Mh} \sqrt{hd} + \dfrac{1-(Mh-1)^{K}}{2-Mh}h\varepsilon$.
        }
        \fi
    \end{itemize}
\end{theorem}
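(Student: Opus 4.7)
I would adapt the proof of \cref{thm:LMC_approx} to the perturbed setting by synchronous coupling. Concretely, let $X$ be the continuous Langevin process started at $X_0\sim\mu_\infty$ (so that $X_t\sim\mu_\infty$ for every $t\ge 0$), coupled optimally to $Y_0\sim\mu_0$ at time zero, i.e.\ $\|X_0-Y_0\|_{L^2_{\mathbb P}(\Omega;\R^d)}=\mathcal W_2(\mu_\infty,\mu_0)$, and driven by the same Brownian motion $W$ appearing in \eqref{eq:X_t} and \eqref{eq:driven_process}. Setting $u(t):=\|X_t-Y^\Phi_t\|_{L^2_{\mathbb P}(\Omega;\R^d)}$ gives $\mathcal W_2(\mu_\infty,\mu_{Kh}^\Phi)\le u(Kh)$.

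On each subinterval $[kh,(k+1)h]$ the Brownian increments cancel, so
\begin{equation*}
\frac{d}{dt}(X_t-Y_t^\Phi) \;=\; -\bigl[\nabla V(X_t)+\phi_{k+1}(Y_{kh}^\Phi)\bigr],
\end{equation*}
and I would decompose the right-hand side as
\begin{equation*}
\nabla V(X_t)+\phi_{k+1}(Y_{kh}^\Phi) \;=\; \bigl[\nabla V(X_t)-\nabla V(Y_t^\Phi)\bigr] \;+\; \bigl[\nabla V(Y_t^\Phi)-\nabla V(Y_{kh}^\Phi)\bigr] \;+\; \bigl[\nabla V(Y_{kh}^\Phi)+\phi_{k+1}(Y_{kh}^\Phi)\bigr].
\end{equation*}
Applying $2u(t)u'(t)=2\E\langle X_t-Y_t^\Phi,\partial_t(X_t-Y_t^\Phi)\rangle$ together with strong convexity on the first bracket (yielding the contractive $-2m\,u(t)^2$), Cauchy–Schwarz and $M$-Lipschitzness on the second, and Cauchy–Schwarz plus hypothesis \eqref{eq: network approx drift} (which is stated exactly with respect to $\mu^\Phi_{kh}$, the law of $Y_{kh}^\Phi$) on the third, produces the one-step differential inequality
\begin{equation*}
u'(t) \;\le\; -m\,u(t) \;+\; M\,\|Y_t^\Phi-Y_{kh}^\Phi\|_{L^2_{\mathbb P}(\Omega;\R^d)} \;+\; \varepsilon, \qquad t\in[kh,(k+1)h].
\end{equation*}

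The remaining ingredient mirrors Dalalyan's treatment of \cref{thm:LMC_approx}: from $Y_t^\Phi-Y_{kh}^\Phi=(t-kh)\phi_{k+1}(Y_{kh}^\Phi)+\sqrt 2(W_t-W_{kh})$ I would bound $\|\phi_{k+1}(Y_{kh}^\Phi)\|_{L^2_{\mathbb P}}$ via $\|\nabla V(Y_{kh}^\Phi)\|_{L^2_{\mathbb P}}+\varepsilon$ (using \eqref{eq: network approx drift} once more), and then argue exactly as in \cite{Dalalyan2017UserfriendlyGF} to obtain an $\mathcal O(\sqrt{hd})$ bound on $\|Y_t^\Phi-Y_{kh}^\Phi\|_{L^2_{\mathbb P}}$. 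Integrating the resulting linear ODE for $u$ on a single step gives, in the regime $h\le 2/(m+M)$, the affine recursion
\begin{equation*}
u((k+1)h) \;\le\; (1-mh)\,u(kh) \;+\; \tfrac{7\sqrt 2}{6}Mh\sqrt{hd} \;+\; h\,\varepsilon,
\end{equation*}
and the analogous one (with $1-mh$ replaced by $Mh-1$ and $M$ by $M/(2-Mh)$) for $h\ge 2/(m+M)$. Iterating this recursion and summing the two geometric series — one telescoping the discretization contribution to the $\frac{7\sqrt 2}{6}\frac{M}{m}\sqrt{hd}$ term of \cref{thm:LMC_approx}, the other telescoping the perturbation contribution to $\frac{1-(1-mh)^K}{m}\varepsilon$ (resp.\ $\frac{1-(Mh-1)^K}{2-Mh}h\varepsilon$) — yields the claimed bound.

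The main obstacle is bookkeeping: one must reproduce the precise $\tfrac{7\sqrt 2}{6}$ constant in \cref{thm:LMC_approx} while carrying the extra additive $\varepsilon$ through the Gronwall-type argument on the single step. This is not a conceptual difficulty, however, since the approximation error enters the ODE for $u$ as an additive driving term that is independent of both the solution and of the Brownian oscillations, and therefore decouples cleanly from the existing Dalalyan estimate.
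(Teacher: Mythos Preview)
Your approach is workable in spirit but takes a genuinely different route from the paper, and the ``bookkeeping'' you flag is more substantial than you suggest.

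The paper does \emph{not} redo Dalalyan's analysis. Instead it uses the triangle inequality
\[
\mathcal W_2(\mu_\infty,\mu_{Kh}^\Phi)\le \mathcal W_2(\mu_\infty,\mu_{Kh}^{\tilde X})+\mathcal W_2(\mu_{Kh}^{\tilde X},\mu_{Kh}^\Phi),
\]
bounds the first term by citing \cref{thm:LMC_approx} verbatim (so the constant $\tfrac{7\sqrt2}{6}$ is inherited for free), and handles the second term by synchronously coupling the \emph{unperturbed} LMC $\tilde X$ with $Y^\Phi$, both started from the \emph{same} $\mu_0$ (so $\Delta_0=0$). Writing $\Delta_{i+1}=\Delta_i-h(\nabla V(\tilde X_{ih})-\nabla V(Y_{ih}))+h(-\phi_{i+1}(Y_{ih})-\nabla V(Y_{ih}))$, one applies the discrete contraction lemma \cref{lem:bound_delta} to the first bracket and \eqref{eq: network approx drift} to the second, yielding the pure recursion $\E[\|\Delta_{i+1}\|^2]^{1/2}\le\rho(h)\E[\|\Delta_i\|^2]^{1/2}+h\varepsilon$ and hence $\mathcal W_2(\mu_{Kh}^{\tilde X},\mu_{Kh}^\Phi)\le\tfrac{1-\rho(h)^K}{1-\rho(h)}h\varepsilon$.

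By contrast, you couple the continuous stationary process $X$ directly to $Y^\Phi$ and attempt to reproduce the Dalalyan one-step analysis with $\varepsilon$ carried along. Two points make this harder than advertised. First, integrating $u'\le -mu+\cdots$ on a step yields the factor $e^{-mh}$, not $1-mh$; the exact constant $(1-mh)$ in \cref{thm:LMC_approx} comes from the \emph{discrete} gradient-step contraction $\|x-h\nabla V(x)-y+h\nabla V(y)\|\le(1-mh)\|x-y\|$ (\cref{lem:bound_delta}), which your continuous ODE does not capture. Second, your bound on $\|Y_t^\Phi-Y_{kh}^\Phi\|_{L^2}$ requires controlling $\|\nabla V(Y_{kh}^\Phi)\|_{L^2}$ for the \emph{perturbed} process; Dalalyan's argument does this for $\tilde X$ via a coupled recursion with $X$, and carrying the extra $\varepsilon$ through that recursion will produce $O(\varepsilon)$ cross-terms in the discretization constant, so the claim that $\varepsilon$ ``decouples cleanly'' from the $\tfrac{7\sqrt2}{6}$ term is not automatic.

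The paper's triangle-inequality strategy sidesteps both issues entirely: the Dalalyan constant is a black box, and the perturbation analysis is a three-line discrete recursion with no discretization term at all. Your route can likely be pushed through to give \emph{a} bound of the same shape (possibly with $e^{-mh}$ in place of $1-mh$ and a slightly different discretization constant), but it is materially more work to obtain the theorem exactly as stated.
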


The proof can be found in~\cref{sec:proofPerturbedLMC}.
\section{Approximation of Langevin Monte Carlo under linear error \ifnum\classstyle=0 \\ \fi growth constraints}\label{sec:linear_error_growth}

The analysis in this section is inspired by \cite{altschuler2022concentration}, where bounds on the sub-Gaussian variance proxy of the invariant measure of the LMC \eqref{eq:tildeX_t} are derived. We use similar arguments based on the connection between sub-Gaussianity and Lyapunov functions established in \cref{rem:lyapunov_subgauss}.

\subsection{Sub-Gaussianity of perturbed Langevin Monte Carlo}
First, we treat the standard LMC process \eqref{eq:tildeX_t}, without any approximation of the drift and without requiring any additional assumption. Instead of only bounding the variance proxy of the invariant measure, we derive a bound for all intermediate measures $\mu^{\tilde{X}}_{kh}$. In the limit of steps $k\rightarrow\infty$, our result coincides with the one obtained in \cite{altschuler2022concentration} for the invariant measure.
\begin{proposition}[Sub-Gaussianity of LMC]\label{prop:subgaussian_LMC}
    Let $h\in(0,\frac{2}{M})$ and $\tilde{X}_0 \sim \mu_0$ be sub-Gaussian with variance proxy $\sigma_0^2$. Then, for $k\in\mathbb{N}$, $\tilde{X}_{kh}$ is sub-Gaussian with variance proxy
    \begin{equation}
        \sigma_k^2 = 2h\dfrac{1-c^k}{1-c} + \sigma_0^2 c^k,
    \end{equation}
    where $c = \max_{\rho\in \{ 
m,M \}} |1-\rho h|$.
\end{proposition}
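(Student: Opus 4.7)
The plan is to leverage the Lyapunov characterization of sub-Gaussianity in \cref{rem:lyapunov_subgauss}, which reduces the claim to showing $\E[\mathcal{L}_\lambda(\tilde{X}_{kh})] \leq \exp(\sigma_k^2 \lambda^2/2)$ for every $\lambda > 0$, and then to proceed by induction on $k$. The base case $k=0$ is immediate from the sub-Gaussianity of $\mu_0$. For the inductive step, I would write the one-step LMC update as $\tilde{X}_{kh} = T(\tilde{X}_{(k-1)h}) + \xi_k$, where $T(x) := x - h\nabla V(x)$ and $\xi_k \sim \mathcal N_d(0, 2hI_d)$ is independent of $\tilde{X}_{(k-1)h}$. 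Swapping the order of integration in the $v$-sphere representation of $\mathcal{L}_\lambda$ and evaluating the Gaussian moment generating function of $\langle v, \xi_k \rangle$ pulls out a factor $e^{h\lambda^2}$, giving
\[
\E\!\left[\mathcal{L}_\lambda(\tilde{X}_{kh})\right] \;=\; e^{h\lambda^2}\,\E\!\left[\mathcal{L}_\lambda\bigl(T(\tilde{X}_{(k-1)h})\bigr)\right].
\]

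The second and heavier ingredient is the pointwise bound $\mathcal{L}_\lambda(T(x)) \leq \mathcal{L}_\lambda(x)^c$. Three facts combine to produce it: under \cref{assump:potential} with $h \in (0, 2/M)$, the map $T$ is a $c$-contraction with $c = \max_{\rho \in \{m,M\}}|1-\rho h|$ around the unique minimizer $x^*$ of $V$ (the classical strong-convexity plus Lipschitz-gradient estimate); the Lyapunov function is radial, $\mathcal{L}_\lambda(x) = \ell(\lambda\|x\|)$, and $\ell$ is monotonically increasing on $[0,\infty)$; and $\ell$ is log-convex as a moment generating function, which together with $\ell(0) = 1$ forces the interpolation inequality $\ell(cz) \leq \ell(z)^c$ for all $c \in (0,1]$ and $z \geq 0$. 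Chaining these yields $\mathcal{L}_\lambda(T(x)) \leq \mathcal{L}_\lambda(x)^c$ pointwise. Since $t \mapsto t^c$ is concave on $[0,\infty)$ for $c \in (0,1]$, Jensen's inequality lifts the bound to
\[
\E\!\left[\mathcal{L}_\lambda(T(\tilde{X}_{(k-1)h}))\right] \;\leq\; \E\!\left[\mathcal{L}_\lambda(\tilde{X}_{(k-1)h})\right]^c,
\]
which the inductive hypothesis then bounds by $e^{c\sigma_{k-1}^2 \lambda^2/2}$.

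Combining the two estimates delivers the linear-in-$\sigma^2$ recursion $\sigma_k^2 = 2h + c\,\sigma_{k-1}^2$, whose closed-form solution by geometric summation is precisely the claimed formula $\sigma_k^2 = 2h\frac{1-c^k}{1-c} + \sigma_0^2 c^k$. I expect the main obstacle to be the contraction step when $x^* \neq 0$: the Lyapunov function $\mathcal{L}_\lambda$ is radial around the origin, whereas the contraction of $T$ naturally lives in coordinates centered at $x^*$. The cleanest resolution is to carry the whole argument out in the shifted coordinates $\tilde{X}_{kh} - x^*$ and use that the symmetric Lyapunov characterization of \cref{rem:lyapunov_subgauss} tolerates a mean shift in the appropriate sense. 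A secondary technical point is the log-convexity inequality $\ell(cz) \leq \ell(z)^c$, which is the mechanism responsible for producing the contraction factor $c^k$ in the final formula rather than the factor $c^{2k}$ that a naive $\|T(x)\| \leq c\|x\|$ estimate followed by rescaling $\lambda \mapsto c\lambda$ would give.
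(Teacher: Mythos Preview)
Your proposal is correct and follows essentially the same route as the paper: both use the Lyapunov characterization from \cref{rem:lyapunov_subgauss}, pull out the Gaussian factor $e^{h\lambda^2}$ via \cref{lem:gauss_lyap}, invoke the contraction of the gradient-descent map (\cref{lem:contract}) together with the monotonicity of $\ell$, and then use the key inequality $\ell(cz)\le\ell(z)^c$---which you derive from log-convexity of $\ell$ with $\ell(0)=1$, while the paper obtains the same bound by applying Jensen's inequality for $t\mapsto t^c$ directly to the sphere-expectation defining $\ell$---before iterating and applying Jensen once more to the outer expectation. Your flag about the case $x^*\neq 0$ is apt; the paper resolves it exactly as you suggest, by assuming without loss of generality that $x^*=0$.
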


Note that in the limit $k\rightarrow\infty$, this estimate leads exactly to the bound in \cite{altschuler2022concentration} for the invariant measure, i.e.,
\begin{align}
    \sigma_k^2 \longrightarrow \frac{2h}{1-c},
\end{align}
since $c < 1$.

\begin{remark}\label{remark:subgaussian_LMC}
    Recall that the process $\tilde X$ for all $t \in [0,Kh]\setminus \N h$ can be written as
    \begin{equation*}
        \tilde X_t = \tilde X_{\chi_h(t)} - h\nabla V(\tilde X_{\chi_h(t)}) + \sqrt 2 (W_t - W_{\chi_h(t)}).
    \end{equation*}
    Therefore, since $\tilde X_{kh}$ is sub-Gaussian for all $k \in \N$ by \cref{prop:subgaussian_LMC}, by linear interpolation $\tilde X_t$ is sub-Gaussian for all $t \in [0, Kh]$. Indeed, combining the proof of \cref{prop:subgaussian_LMC} with  \cref{lem:gauss_lyap} with $\sigma^2 = 2(t-\chi_h(t))$ leads to the variance proxy
    \begin{equation*}
        \sigma_t^2 = 2(t - \chi_h(t)) + 2h\frac{1-c^{\chi_h(t)/h}}{1-c} + \sigma_0^2 c^{\chi_h(t)/h}
    \end{equation*}
    for $\tilde{X}_t$.
\end{remark}

A main ingredient of the proof of \cref{prop:subgaussian_LMC} is the fact that the contractivity constant $c$ can be factored out of the expectation of the Lyapunov function via Jensen's inequality. The basic idea of our analysis for the DNN driven LMC is to ensure that similar arguments can be applied for the perturbed process. The following theorem provides bounds on the variance proxies of the perturbed process under the condition that the global error of the neural network approximations grows at most linearly. The proof follows similar arguments as the one of \cref{prop:subgaussian_LMC}. In particular, we again make frequent use of Jensen's inequality for concave functions. The additional assumption on the networks can be seen as a way to ensure that Jensen's inequality can be applied.

\begin{proposition}[Sub-Gaussianity of DNN driven LMC]\label{prop:subgaussian_nn_process}
    For $k\in\mathbb{N}$, let $\phi_k\colon \mathbb{R}^d\rightarrow\mathbb{R}^d$ and $\Phi = \{\phi_k\}_{k\in\mathbb{N}}$. Assume that there exist $\delta > 0$, $G < m$ such that
\begin{equation}\label{eq:jentzen_assump}
        \| -\nabla V (x) - \phi_k(x) \|_{\ell^2} \leq \delta + G\|x\|_{\ell^2}, \qquad \forall x\in \mathbb{R}, k\in\mathbb{N}.
    \end{equation}
    Let $h\in (0,\frac{2}{m+M})$. Then, the stochastic process $Y^{\Phi}$ driven by $\Phi$ and given by \Cref{eq:driven_process} with $Y^{\Phi}_0 \sim \mu_0$ is sub-Gaussian for all $t$.
    In particular, at time $kh$, $k\in\mathbb{N}$, $Y^{\Phi}_{kh}$ is sub-Gaussian with variance proxy
    \begin{equation}
        \sigma_k^2 = \left(\frac{2h}{1-(c + hG)}+h^2\delta^2\right)[1-(c + hG)^k]+\sigma_0^2[c+hG]^k \leq \frac{2h}{1-(c + hG)}+h^2\delta^2 +\sigma_0^2,
    \end{equation}
    where $c = \max_{\rho\in \{ 
m,M \}} |1-\rho h|$. 
\end{proposition}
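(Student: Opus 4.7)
The plan is to proceed by induction on $k$, mirroring the proof of the unperturbed Proposition on sub-Gaussianity of LMC. By the Connection Lemma (\cref{rem:lyapunov_subgauss}), it suffices to establish the Lyapunov bound $\mathbb{E}\,\mathcal{L}_\lambda(Y^{\Phi}_{kh}) \leq \exp(\lambda^2\sigma_k^2/2)$ for all $\lambda > 0$. The base case $k=0$ is precisely the sub-Gaussianity assumed on $\mu_0$.

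For the inductive step, I would condition on $Y^{\Phi}_{kh}$ and exploit the independence and Gaussian law of $\xi_{k+1}\sim\mathcal{N}(0,2hI_d)$. A direct MGF computation against the averaged directional exponential defining $\mathcal{L}_\lambda$ yields
\begin{equation*}
\mathbb{E}\,\mathcal{L}_\lambda(Y^{\Phi}_{(k+1)h}) = e^{\lambda^2 h}\,\mathbb{E}\,\mathcal{L}_\lambda\bigl(F_{k+1}(Y^{\Phi}_{kh})\bigr),\qquad F_{k+1}(x):=x+h\phi_{k+1}(x).
\end{equation*}
Splitting $F_{k+1}(x) = (x-h\nabla V(x)) + h(\phi_{k+1}(x)+\nabla V(x))$, the first summand is a $c$-contraction (as in the unperturbed case, with $V$ centred at its minimiser) and the second is bounded by $\delta + G\|x\|_{\ell^2}$ by \eqref{eq:jentzen_assump}. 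The triangle inequality therefore gives the deterministic affine estimate
\begin{equation*}
\|F_{k+1}(x)\|_{\ell^2} \leq (c+hG)\|x\|_{\ell^2} + h\delta =: a\|x\|_{\ell^2}+h\delta,
\end{equation*}
and the hypothesis $G<m$ guarantees $a<1$.

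Rotational invariance $\mathcal{L}_\lambda(y)=\ell(\lambda\|y\|_{\ell^2})$ together with monotonicity of $\ell$ upgrades the previous display to the pointwise bound $\mathcal{L}_\lambda(F_{k+1}(x)) \leq \ell(a\lambda\|x\|_{\ell^2}+\lambda h\delta)$. Log-convexity of $\ell$ (equivalently, Hölder in the direction integration with conjugate exponents $1/a$ and $1/(1-a)$) then separates the bias from the scaled norm,
\begin{equation*}
\ell\bigl(a\lambda\|x\|_{\ell^2}+\lambda h\delta\bigr) \leq \mathcal{L}_\lambda(x)^{a}\,\bigl[\ell\bigl(\tfrac{\lambda h\delta}{1-a}\bigr)\bigr]^{1-a},
\end{equation*}
and the condition $a<1$ activates the Jensen trick of the unperturbed proof: concavity of $u\mapsto u^a$ pulls the contraction factor through the outer expectation, yielding $\mathbb{E}\bigl[\mathcal{L}_\lambda(Y^{\Phi}_{kh})^a\bigr] \leq \bigl(\mathbb{E}\,\mathcal{L}_\lambda(Y^{\Phi}_{kh})\bigr)^a \leq \exp(a\lambda^2\sigma_k^2/2)$ by the induction hypothesis. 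Bounding the remaining bias factor using the MGF of the uniform law on the sphere then closes a linear recursion of the form $\sigma_{k+1}^2 \leq a\sigma_k^2 + 2h + \kappa(h,\delta,a)$, which is solved by the standard geometric-series formula to recover the claimed closed form for $\sigma_k^2$ and its uniform-in-$k$ upper bound. Extension to intermediate $t \in [kh,(k+1)h]$ proceeds exactly as in \cref{remark:subgaussian_LMC}, with step $t-kh$ in place of $h$.

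The main obstacle I anticipate is obtaining the sharp bias constant $\kappa(h,\delta,a)=(1-a)h^2\delta^2$ stated in the proposition. Applying the crude sub-Gaussian bound $\ell(s)\leq e^{s^2/2}$ on the Hölder factor $[\ell(\lambda h\delta/(1-a))]^{1-a}$ yields the looser constant $\kappa=h^2\delta^2/(1-a)$, which still delivers a uniform-in-$k$ variance proxy but degrades as $a \to 1^-$. Recovering the advertised $(1-a)h^2\delta^2$ requires either the refined MGF estimate for the spherical-uniform law (with variance proxy of order $1/d$ rather than $1$) or a more targeted convex interpolation of $\ell$ on the shifted argument, arranged so that the contractivity gain encoded in $a$ is preserved additively in the bias term rather than appearing inversely. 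This is the step where the bookkeeping has to be done most carefully.
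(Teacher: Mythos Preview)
Your approach is essentially identical to the paper's: Gaussian convolution of the Lyapunov function, the split $F_{k+1}(x)=(x-h\nabla V(x))+h(\phi_{k+1}(x)+\nabla V(x))$, contractivity plus \eqref{eq:jentzen_assump} to reach $\ell(a\lambda\|x\|+\lambda h\delta)$ with $a=c+hG$, log-convexity of $\ell$ to separate the factors, the bound $\ell(z)\le\cosh(z)\le e^{z^2/2}$ on the bias factor, and Jensen to pass the exponent $a$ through the expectation. The intermediate-time extension is handled exactly as you indicate.

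Regarding the obstacle you flag: there is nothing to fix. The paper's own proof applies precisely the ``crude'' bound $\ell(s)\le e^{s^2/2}$ to $[\ell(\lambda h\delta/(1-a))]^{1-a}$ and obtains the per-step contribution $h^2\delta^2/(1-a)$, hence the closed form
\[
\sigma_k^2=\Bigl(\tfrac{2h}{1-a}+\tfrac{h^2\delta^2}{(1-a)^2}\Bigr)\bigl[1-a^k\bigr]+\sigma_0^2\,a^k,
\]
not the sharper $(1-a)h^2\delta^2$ displayed in the proposition. The discrepancy is a typo in the statement, not a gap in your argument; the later applications of the proposition use only the qualitative uniform-in-$k$ bound, which your recursion delivers.
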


The proofs of the results in this subsection can be found in~\cref{sec:subGaussianLMC}.

\subsection{Neural network driven LMC with approximate drift with global linear error growth}

We can now prove a first theorem on  approximations of the LMC process using ResNet-like ReLU networks. The goal is to approximate the gradient $-\nabla V$ in each step of the LMC process with an FCNN on the whole domain and apply \cref{thm:nn_approx_lmc}. We use \cref{assump:nn_approx_b} on the existence of neural networks approximating the drift with a controlled linear growth. When considering the stochastic processes driven by these neural networks, the linear growth allows us to bound the variance proxy in every step according to \cref{prop:subgaussian_nn_process}. Due to the bounded variance proxies and the arbitrarily small error growth, the sub-Gaussianity of the intermediate measures ensures the desired global errors \eqref{eq: network approx drift} of the drift approximations.

\begin{theorem}[ResNet-like realization approximated LMC (I)]\label{thm:lmc_approx_ii}
We presuppose the conditions in \cref{assump:potential} for the potential $V$ and \cref{assump:nn_approx_b} for the existence of FCNN approximations of $-\nabla V$ with parameters bounded by $N(d,\varepsilon,m, M)$. Let $h\in(0,\frac{2}{m+M})$ and $\mu_{0}$ be sub-Gaussian with variance proxy $\sigma_0^2 > 0$. Then, there exists for any $K\in\mathbb{N}$ and any $\varepsilon >0$ a neural network $\psi$ with number of parameters bounded by $N(d,c(d)\varepsilon,m,M)$, where $c(d)\in \mathcal{O}(d^{-1})$ such that the measure $\mu^{\Psi}$ of the ResNet-like realisation of $\Psi \coloneqq \{ \psi_k\coloneqq \psi \}_{k=1}^K$ with input $(Y_0,\xi)$, i.e. the law of $\tilde{\mathcal{R}}\Psi(Y_0,\xi)$, satisfies
\begin{equation}
    \mathcal W_2(\mu_{\infty},\mu^{\Psi}) \leq (1-mh)^K\mathcal W_2(\mu_{\infty}, \mu_{0}) + \frac{7\sqrt 2}{6} \frac M m \sqrt{hd} + \dfrac{1-(1-mh)^K}{m}\varepsilon.
\end{equation}
\end{theorem}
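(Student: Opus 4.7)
\textbf{Proof plan for \cref{thm:lmc_approx_ii}.} The strategy is to build the ResNet-like architecture by sharing a single FCNN $\phi_{\varepsilon'}$ across all $K$ blocks and then apply the perturbed LMC bound in \cref{thm:nn_approx_lmc}. Given $\varepsilon' > 0$ to be chosen below (in particular with $\varepsilon' < m$), \cref{assump:nn_approx_b} supplies an FCNN $\phi_{\varepsilon'}$ with $N(d,\varepsilon',m,M)$ parameters satisfying $\|-\nabla V(x) - \mathcal{R}\phi_{\varepsilon'}(x)\|_{\ell^2} \leq \varepsilon'(1+\|x\|_{\ell^2})$. Setting $\Phi = \{\mathcal{R}\phi_{\varepsilon'}\}_{k=1}^K$ and invoking \cref{prop:resnet_equals_driven_process}, I can absorb the step-size $h$ into the final linear layer to produce a single network $\psi$ (of identical size to $\phi_{\varepsilon'}$) such that, for $\Psi = \{\psi\}_{k=1}^K$, the realization $\tilde{\mathcal{R}}\Psi(Y_0,\xi)$ has law $\mu^{\Phi}_{Kh}$. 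It thus suffices to verify the hypothesis \cref{eq: network approx drift} of \cref{thm:nn_approx_lmc} for this particular $\Phi$.

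The key step is to translate the pointwise linear growth bound into the required $L^2$-bound against the evolving measure $\mu^{\Phi}_{kh}$. Squaring and using $(1+\|x\|_{\ell^2})^2 \leq 2(1+\|x\|_{\ell^2}^2)$, one obtains
\[
\|-\nabla V - \mathcal{R}\phi_{\varepsilon'}\|_{L^2_{\mu^{\Phi}_{kh}}(\mathbb{R}^d;\mathbb{R}^d)}^2 \leq 2\varepsilon'^2 \left(1 + \mathbb{E}_{x\sim\mu^{\Phi}_{kh}} \|x\|_{\ell^2}^2 \right).
\]
Because $\phi_{\varepsilon'}$ satisfies the growth hypothesis of \cref{prop:subgaussian_nn_process} with $\delta = G = \varepsilon'$, the measure $\mu^{\Phi}_{kh}$ is sub-Gaussian with variance proxy bounded uniformly in $k$ by $\sigma_{\ast}^2 \coloneqq \frac{2h}{1-(c+h\varepsilon')} + h^2 \varepsilon'^2 + \sigma_0^2$. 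Then \cref{prop:lp_subgaussian} promotes this to sub-Gaussianity of $\|Y^{\Phi}_{kh}\|_{\ell^2}$ with variance proxy at most $d^2 \sigma_{\ast}^2$, and the moment estimate \cref{def:subGauss_3} with $q=2$ yields $\mathbb{E}\|Y^{\Phi}_{kh}\|_{\ell^2}^2 \leq 4 d^2 \sigma_{\ast}^2$. Combining, the $L^2$-error in every step is bounded by $\varepsilon'\sqrt{2(1 + 4d^2 \sigma_{\ast}^2)}$.

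It remains to choose $\varepsilon' = c(d)\varepsilon$ so that this bound is at most $\varepsilon$. Since $\sigma_{\ast}^2$ is continuous in $\varepsilon'$ and converges to $\bar\sigma^2 \coloneqq \frac{2h}{1-c} + \sigma_0^2$ as $\varepsilon' \downarrow 0$, I first fix a small ceiling on $\varepsilon'$ so that $c + h\varepsilon'$ stays bounded away from $1$, and obtain a uniform bound on $\sigma_{\ast}^2$. Setting $c(d) \coloneqq (2(1+4d^2\bar\sigma^2))^{-1/2}$ then gives $c(d) \in \mathcal{O}(d^{-1})$ and ensures $\varepsilon'\sqrt{2(1+4d^2 \sigma_{\ast}^2)} \leq \varepsilon$ for every $\varepsilon$ below the ceiling (values of $\varepsilon$ above the ceiling are handled by a standard coarsening argument, as the Wasserstein bound is monotone in $\varepsilon$). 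Applying the first case of \cref{thm:nn_approx_lmc} (since $h < 2/(m+M)$) yields the claimed Wasserstein bound, and the parameter count of the single shared network $\psi$ matches that of $\phi_{\varepsilon'}$, namely $N(d,c(d)\varepsilon,m,M)$. The main subtlety is the mild self-reference between $\sigma_{\ast}^2$ and $\varepsilon'$, resolved by the two-step selection above; everything else is a bookkeeping combination of earlier results.
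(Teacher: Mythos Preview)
Your proposal is correct and follows essentially the same route as the paper: invoke \cref{assump:nn_approx_b}, feed the resulting linear-growth error bound into \cref{prop:subgaussian_nn_process} to control the variance proxies uniformly in $k$, convert this to a second-moment bound via \cref{prop:lp_subgaussian}, and then apply \cref{thm:nn_approx_lmc} and \cref{prop:resnet_equals_driven_process}. The only cosmetic differences are that the paper expands $(1+\|x\|_{\ell^2})^2$ fully and tracks first and second moments separately (obtaining an explicit formula for $\delta$ via an auxiliary lemma), whereas you use the cruder bound $(1+\|x\|_{\ell^2})^2\le 2(1+\|x\|_{\ell^2}^2)$; and the paper handles the self-reference by imposing $\delta\le m/2$ from the outset rather than your two-step ceiling argument. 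One small point: when you define $c(d)$ you should use the uniform upper bound on $\sigma_\ast^2$ obtained after fixing the ceiling (e.g.\ something like $2\bar\sigma^2$), not $\bar\sigma^2$ itself, since $\sigma_\ast^2>\bar\sigma^2$ for $\varepsilon'>0$; this does not affect the $\mathcal O(d^{-1})$ conclusion.
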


The proof of this result can be found in~\cref{sec:proofsGlobalApprox}.

\section{Approximation of Langevin Monte Carlo under local error- and Lipschitz constraints}\label{sec:combination}

Under strong Lipschitz constraints on the potential gradient $\nabla V$ it is possible to derive uniform bounds on the sub-Gaussian variance proxies even when assuming only local approximation of the drift. In this section, we construct networks that fulfill both \cref{assump:nn_approx_a} and \eqref{eq:jentzen_assump}, under the additional constraint that $M < \sqrt{2}m$, meaning that $V$ can not be ``too far away'' from a quadratic function. In this section, we assume without loss of generality that the unique minimizer of $V$ is given by $x^\ast = 0$ to simplify notation. First, we show that the drift can be approximated by a linear function with the required linear error growth.

\begin{proposition}[Linear approximations of potential gradient with bounded error]\label{thm:linear_approx}
    Assume \cref{assump:potential} with $m \leq M < \sqrt{2}m$ and assume that the unique minimizer of $V$ is given by $x^\ast = 0$. Then for all $x\in\mathbb{R}^d$ 
    \begin{equation}
        \| -\nabla V(x) - mx \|_{\ell^2} < \sqrt{M^2-m^2}\|x\|_{\ell^2} < m\|x\|_{\ell^2}.
    \end{equation}
\end{proposition}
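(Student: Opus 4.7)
The strategy is to expand the squared $\ell^2$-norm, use the fact that $\nabla V(0) = 0$ (since $0$ is the unique minimizer, the gradient vanishes there), and then control the three resulting terms using the $M$-Lipschitz property and the $m$-strong convexity of $V$ from \cref{assump:potential}.

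First I would note that $M$-Lipschitz continuity of $\nabla V$ applied at the points $x$ and $0$ gives
\begin{equation*}
    \|\nabla V(x)\|_{\ell^2} \leq M\|x\|_{\ell^2},
\end{equation*}
and that the standard co-coercivity-style consequence of $m$-strong convexity (obtained by summing the strong convexity inequality at $(x,y)$ and $(y,x)$ and specializing to $y=0$) yields
\begin{equation*}
    \langle \nabla V(x), x\rangle \geq m\|x\|_{\ell^2}^2.
\end{equation*}

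Now I would expand the squared error (interpreting the claim as $\|-\nabla V(x) + mx\|_{\ell^2}^2 = \|\nabla V(x) - mx\|_{\ell^2}^2$, which is the quantity consistent with approximating the drift $-\nabla V$ by the linear map $x\mapsto -mx$ in \cref{assump:nn_approx_a} and \eqref{eq:jentzen_assump}):
\begin{equation*}
    \|\nabla V(x) - mx\|_{\ell^2}^2 = \|\nabla V(x)\|_{\ell^2}^2 - 2m\langle \nabla V(x), x\rangle + m^2\|x\|_{\ell^2}^2.
\end{equation*}
Plugging in the Lipschitz upper bound on the first term and the strong convexity lower bound on the inner product (which enters with a negative sign) gives
\begin{equation*}
    \|\nabla V(x) - mx\|_{\ell^2}^2 \leq M^2\|x\|_{\ell^2}^2 - 2m^2\|x\|_{\ell^2}^2 + m^2\|x\|_{\ell^2}^2 = (M^2-m^2)\|x\|_{\ell^2}^2,
\end{equation*}
and taking square roots yields the first inequality. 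The second inequality $\sqrt{M^2-m^2}\|x\|_{\ell^2} < m\|x\|_{\ell^2}$ follows immediately from the standing hypothesis $M < \sqrt{2}m$, which on squaring gives $M^2 - m^2 < m^2$.

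I do not expect any substantial obstacle: the proof is a direct expansion combined with the two basic inequalities guaranteed by \cref{assump:potential}. The only subtle point is the strictness of the first inequality, which really requires $m<M$ (or $x\neq 0$ with at least one of the bounds strict); in the degenerate case $m=M$ the potential must be a shifted quadratic $\tfrac{m}{2}\|x\|_{\ell^2}^2$ for which the left-hand side and right-hand side both vanish, so the statement is naturally read with ``$\leq$'' in that edge case. Apart from this, the argument is completely routine once the correct quantity $\|\nabla V(x)-mx\|_{\ell^2}$ (equivalently, $\|-\nabla V(x)+mx\|_{\ell^2}$) is identified as the meaningful object to bound.
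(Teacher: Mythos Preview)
Your proposal is correct and follows essentially the same route as the paper: expand $\|\nabla V(x)-mx\|_{\ell^2}^2$, bound $\|\nabla V(x)\|_{\ell^2}\le M\|x\|_{\ell^2}$ via Lipschitzness, bound $\langle\nabla V(x),x\rangle\ge m\|x\|_{\ell^2}^2$ via strong convexity, and conclude. The only cosmetic difference is that the paper derives the inner-product bound by combining two instances of the strong-convexity inequality (at $(x,0)$ and at $(0,0)$) rather than via the monotonicity form; and the paper carries a general coefficient $a$ before specializing to $a=m$. Your remarks on the sign of the linear approximant and on strictness at $x=0$ or $m=M$ are also apt---the paper's own argument yields ``$\le$'' just as yours does.
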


Second, we need to be able to truncate a neural network, such that the output is uniformly bounded from above and below but the approximation quality on a specified domain remains the same. The resulting function is still a neural network with just 2 more ReLU layers, as the following lemma shows. The construction of this network is visualized in the one-dimensional case in \Cref{fig:cutoff}.

\begin{figure}
    \centering
    \includestandalone{images/tikz/NN_Cutoff.tex}
    \caption{\textit{Visualization of the bounding of the neural network used in \cref{thm: bounded network} in one dimension. Here, $f(x)=-\sigma(-x+1)+1$ defines a bound from above by $1$ and $g(x) = \sigma( x+1) -1$ defines a bound from below by 1. Note that $g\circ f$ is the identity on $[-1,1]$ and bounded by $\pm 1$ on $\mathbb{R}$. The bounding of the network in \cref{thm: bounded network} corresponds to an application of similar functions in every dimension.}}
    \label{fig:cutoff}
\end{figure}

\begin{proposition}\label{thm: bounded network}
    Let $p\in [1,\infty]$, $\mu$ be a measure on $\mathbb{R}^d$, $\Omega \subset \mathbb{R}^d$ and $\nabla V$ bounded on $\Omega$.
    Let $\phi_{L-2}$ be a neural network. 
    There exists a neural network $\phi_L$ entrywise bounded by $c\in\mathbb{R}^d$ with $c_i\coloneqq \|\nabla 
    V_i\|_{L^\infty(\Omega;\mathbb{R}^d)}$ with two more layers than $\phi_{L-2}$ and $2d^2 + 2$ additional weights such that
    \ifnum\classstyle=1
    \begin{align*}
        \|- \nabla V - \mathcal{R}\phi_L\|^p_{L^{p}_{\mu}(\Omega; \mathbb{R}^d)} &\leq\|-\nabla V - \mathcal{R}\phi_{L-2}\|^p_{L^p_\mu(\Omega, \mathbb{R}^d)} \quad \text{and}\\ 
        \|- \nabla V - \mathcal{R}\phi_L\|^p_{L^{p}_{\mu}(\mathbb{R}^d; \mathbb{R}^d)}
        &\leq \|-\nabla V - \mathcal{R}\phi_{L-2}\|^p_{L^p_\mu(\Omega, \mathbb{R}^d)} 
        + 2^{p-1}( \|\nabla V\|_{L^p_\mu(\R^d \setminus \Omega; \R^d)}^p + \|c\|_{L^p_\mu(\R^d \setminus \Omega; \R^d)}^p).
    \end{align*}
    \fi
    \ifnum\classstyle=0
    {\footnotesize
    \begin{align*}
        \|- \nabla V - \mathcal{R}\phi_L\|^p_{L^{p}_{\mu}(\Omega; \mathbb{R}^d)} &\leq\|-\nabla V - \mathcal{R}\phi_{L-2}\|^p_{L^p_\mu(\Omega, \mathbb{R}^d)} \quad \text{{\normalsize and}}\\ 
        \|- \nabla V - \mathcal{R}\phi_L\|^p_{L^{p}_{\mu}(\mathbb{R}^d; \mathbb{R}^d)}
        &\leq \|-\nabla V - \mathcal{R}\phi_{L-2}\|^p_{L^p_\mu(\Omega, \mathbb{R}^d)} 
        + 2^{p-1}( \|\nabla V\|_{L^p_\mu(\R^d \setminus \Omega; \R^d)}^p + \|c\|_{L^p_\mu(\R^d \setminus \Omega; \R^d)}^p).
    \end{align*}
    }
    \fi
\end{proposition}

With these two results in place, we can prove our final theorem on  approximations of the LMC process using ResNet-like ReLU networks. The goal is to approximate the gradient $-\nabla V$ in each step of the LMC process with an FCNN on the whole domain and apply \cref{thm:nn_approx_lmc}. First, we use \cref{thm:linear_approx} and assumption \cref{assump:nn_approx_a} to construct appropriate neural networks approximating the drift arbitrarily well on a ball and with a controlled linear growth of the error outside the ball. The key ideas are the following. First, by the bound on the Lipschitz constant $M$, the potential gradient $-\nabla V$ can be globally approximated by the linear function $x \mapsto mx$ with pointwise error growing at most linearly and with slope $m$. According to our assumptions and results on the addition of neural networks, $-\nabla V - m\cdot$ can be approximated to arbitrary accuracy on any ball. With two additional ReLU layers, the resulting network can be ``cut off'', according to \cref{thm: bounded network}, so that it stays bounded even outside the considered ball. The output of the network is thus confined to a hypercube, say $[-c,c]^d$ for some $c>0$. The next step is to construct a network with the same output on the ball but with zero output outside of it. This can be done by approximating the multiplication of the cut-off network with an (approximate) indicator function constructed with ReLUs. Note that the cut-off is required for this approximation, because it ensures that all the inputs of the multiplication $(x,y)\mapsto xy$, i.e. the indicator function and the output of the cut-off network, live on compact sets. 
Adding to the resulting network the neural network representation of the function $x \mapsto mx$ provides us with networks which satisfies \eqref{eq:jentzen_assump} in addition to \cref{assump:nn_approx_a}. In particular, the error is controlled both on the ball ($\varepsilon/2$-accuracy) and outside of it (linear error growth). The ingredients for the construction of this neural network are visualized in \Cref{fig:Network construction} for the one-dimensional case. The existence of such a network and its formal derivation are the subjects of the following proposition. 

\begin{figure}[ht]
\begin{center}
\resizebox{10cm}{6cm}{
\begin{tikzpicture}[
declare function={
func(\x)= (\x < -pi/2) * (0)   +
              and(\x >= -pi/2, \x < pi/2) * (pi/2-abs(\x))     +
              (\x >= pi/2) * (0)
   ;
   indicator(\x) = 1-(max(0,abs(\x)-1)-max(0,abs(\x)-1.2))/(1.2-1);
   linear_func(\x) = \x;
   network(\x) = (\x < 0.33)*1.1*\x 
                + and(\x < 0.66, \x >= 0.33) *( 1.2*\x -0.1*0.33)
                + and(\x >= 0.66, \x < 1 ) * (1.3*\x - 0.3*0.33)
                + and(\x >= 1, \x < 1.2) * ((1-(max(0,abs(\x)-1)-max(0,abs(\x)-1.2))/(1.2-1))*1.2 + 1.2*(\x-1)/(1.2-1))
                + (\x >= 1.2) * ((\x-1)+1)
                ;
    true_gradient(\x) = 1.1*\x;
  }]

    \begin{axis}[
        height=5cm,
        width=8.05cm,
        xmin=0,xmax=4,
        ymin=0,ymax=2,
        axis x line=middle,
        axis y line=middle,
        axis line style=-,
        xlabel={$x$},
        ytick = {0,1},
        yticklabels={\footnotesize{$0$},\footnotesize{$1$}},
        xtick = {1,1.2},
        xticklabels={\footnotesize{$r$},\footnotesize{$b$}},
    samples=1001
        ]
        \node at (axis cs:1.2,1.6) {{\footnotesize{\textcolor{blue}{$-\nabla V$}}}};
        \node at (axis cs:1.5,1) {{\footnotesize{\textcolor{cyan}{$x\mapsto mx$}}}};
        \node at (axis cs:1.8,1.4) {{\footnotesize{\textcolor{purple}{FCNN
        }}}};
        \node at (axis cs:2.2,0.5) {{\footnotesize{Appr. indicator on $B_r^{1}(0)$}}};
        
        \addplot [black,dashed] {indicator(x)};
        \addplot [cyan, dashed] {linear_func(x)};
        \addplot [blue, dashed] {true_gradient(x)};
        \addplot [purple, thick] {network(x)};
    \end{axis}
\end{tikzpicture}}
\end{center}
\captionsetup{width=\linewidth}
\caption{%Illustration of the proposed scheme. 
\textit{Sketch of the construction of the network from \cref{prop:combined_assump}. On the $\ell^1$-ball of radius $r$, $B^1_r(0)$, the network approximates $-\nabla V$. On $B_b^1(0)\setminus B_r^1(0)$, where $b > r$, the network (approximately) interpolates linearly between $-\nabla V$ and $x\mapsto mx$. On $\mathbb{R}^d\setminus B_b^1(0)$, the network is identical to $x\mapsto mx$. In this way, global approximation with linearly growing error is achieved. For the precise construction and the choice of $b$, we refer to the proof of \cref{prop:combined_assump}.
}
\label{fig:Network construction}
}
\end{figure}

\begin{proposition}\label{prop:combined_assump}
    Presuppose \cref{assump:potential}, let $\varepsilon>0$ and $r>0$. Assume $m\leq M<\sqrt{2}m$. Furthermore, assume the existence of FCNN approximations, as in \cref{assump:nn_approx_a} with ReLU activation function. Let $\delta = 9\varepsilon/\sqrt{2d}$ and $G = \sqrt{M^2-m^2}$. Then, there exists a FCNN $\phi$ with ReLU activation function and number of nonzero parameters
    \ifnum\classstyle=1
    \begin{equation}
    \begin{aligned}\label{eq:complexity_compnet}
        \mathcal{P}(\phi) = \mathcal{O}\left( d\log(2d\max\{ 1,rG \}/\varepsilon) + N(d,r,\sqrt{2}\varepsilon/\sqrt{d},m,M) + dL(d,r,\sqrt{2}\varepsilon/\sqrt{d},m,M) + 2d^2 \right),
    \end{aligned}
    \end{equation}
    \fi
    \ifnum\classstyle=0
    \footnotesize{
    \begin{equation}
    \begin{aligned}\label{eq:complexity_compnet}
        \mathcal{P}(\phi) = \mathcal{O}\left( d\log(2d\max\{ 1,rG \}/\varepsilon) + N(d,r,\sqrt{2}\varepsilon/\sqrt{d},m,M) + dL(d,r,\sqrt{2}\varepsilon/\sqrt{d},m,M) + 2d^2 \right),
    \end{aligned}
    \end{equation}}
    \fi
    such that
    \begin{align*}
         &\| -\nabla V - \mathcal{R}\phi \|_{L^{\infty}(B^2_r(0))} \leq \varepsilon/\sqrt{2d}, \\
         &\|-\nabla V (x) - \mathcal{R}\phi(x)\|_{\ell^2} \leq \delta + G\|x\|_{\ell^2}, \qquad \text{ for } x\in \mathbb{R}^d.
    \end{align*}
\end{proposition}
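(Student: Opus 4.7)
The strategy is to realize the construction sketched in \cref{fig:Network construction}: combine a local FCNN approximation of $-\nabla V$ on $B^2_r(0)$ (supplied by \cref{assump:nn_approx_a}) with the global affine approximation $x\mapsto mx$ provided by \cref{thm:linear_approx}, which already satisfies $\|-\nabla V(x)-mx\|_{\ell^2}\le G\|x\|_{\ell^2}$ with $G=\sqrt{M^2-m^2}$. The two pieces are stitched together via a ReLU-based smooth indicator of the ball, and because the linear approximation alone already achieves the desired growth bound far from the origin, the delicate region is a thin transition annulus surrounding $B^2_r(0)$.

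Concretely, I would proceed as follows. First, invoke \cref{assump:nn_approx_a} with parameter $\sqrt{2}\varepsilon/\sqrt d$ and radius $r$ to obtain a ReLU network $\phi_1$ with $N(d,r,\sqrt{2}\varepsilon/\sqrt d,m,M)$ parameters and depth $L(d,r,\sqrt{2}\varepsilon/\sqrt d,m,M)$ such that $\|-\nabla V-\mathcal R\phi_1\|_{L^{\infty}(B^2_r(0);\mathbb R^d)}\le \varepsilon/\sqrt d$. Propagate $x$ through $L$ identity layers in parallel to $\phi_1$ (contributing the $dL$ term), then form $\psi := \mathcal R\phi_1 - mx$. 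Apply the cutoff construction of \cref{thm:bounded_nn_whole_domain} at level $C\simeq Gr+\varepsilon/\sqrt d$ to obtain $\tilde\psi$ satisfying $\|\tilde\psi\|_{L^\infty(\mathbb R^d;\mathbb R^d)}\le C$ and $\tilde\psi\equiv\psi$ on $B^2_r(0)$, at the cost of $2d^2+2$ parameters. In parallel, build an approximate indicator $I\colon\mathbb R^d\to[0,1]$ equal to $1$ on $B^2_r(0)$ and $0$ outside a slightly larger ball, exploiting the exact ReLU representation $\|x\|_{\ell^1}=\sum_i(\sigma(x_i)+\sigma(-x_i))$ composed with a ramp, at a cost of $\mathcal O(d)$ parameters. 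Finally, apply a coordinatewise Yarotsky-style ReLU multiplication on $[0,1]\times[-C,C]$ of accuracy $\alpha$ (requiring $\mathcal O(\log(C/\alpha))$ parameters each) and set $\mathcal R\phi(x) := mx + \widehat{I\,\tilde\psi}(x)$, where $\widehat{\cdot}$ denotes the multiplication approximation. Choosing $\alpha$ to match the total target accuracy yields the $\mathcal O(d\log(2d\max\{1,rG\}/\varepsilon))$ term.

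The two desired inequalities are then verified via the identity
\begin{equation*}
-\nabla V(x)-\mathcal R\phi(x) = R(x) - I(x)\tilde\psi(x) + \mathcal O(\text{mult err}),
\end{equation*}
where $R(x):=-\nabla V(x)-mx$ obeys $\|R(x)\|_{\ell^2}\le G\|x\|_{\ell^2}$ by \cref{thm:linear_approx}. On $B^2_r(0)$ we have $I=1$ and $\tilde\psi=\mathcal R\phi_1-mx$, so the right-hand side reduces to $-\nabla V-\mathcal R\phi_1$ plus a small multiplication error, giving the local $\varepsilon/\sqrt{2d}$ accuracy. Outside the enlarged ball we have $I\equiv 0$ and (arranging the Yarotsky multiplication to vanish at $y=0$) the error is exactly $\|R(x)\|_{\ell^2}\le G\|x\|_{\ell^2}$, well inside the linear-growth envelope.

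The main obstacle lies in the transition annulus. Using the decomposition $I\tilde\psi-R = I(\tilde\psi-R)-(1-I)R$, the pointwise error is bounded by $I\|\tilde\psi-R\|_{\ell^2}+(1-I)\|R\|_{\ell^2}$; the second summand is absorbed into $G\|x\|_{\ell^2}$, while the first must be controlled by $\delta$. This forces the annulus to lie inside the region where $\phi_1$ still approximates $-\nabla V$ (dictating the choice of ball sizes relative to $r$), and requires the cutoff level $C$, the indicator transition width, and the multiplication accuracy $\alpha$ to be tuned so that their contributions sum to at most $\delta=9\varepsilon/\sqrt{2d}$; the constant $9$ in $\delta$ reflects the accumulated $\varepsilon/\sqrt{2d}$-sized error terms from the approximation, cutoff, ramp, and multiplication stages.
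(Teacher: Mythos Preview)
Your construction is exactly the paper's: subtract $mx$, clip to $[-rG,rG]^d$, approximate-multiply by a ReLU ramp indicator built from $\|\cdot\|_{\ell^1}$, then add $mx$ back; the complexity accounting matches as well.

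The one real gap is in the annulus. Your decomposition $R-I\tilde\psi=I(R-\tilde\psi)+(1-I)R$ forces a pointwise bound on $\|R-\tilde\psi\|=\|{-}\nabla V-\mathcal R\phi_1\|$ there, but \cref{assump:nn_approx_a} supplies this only on $B^2_r(0)$. You cannot have $I\equiv 1$ on $B^2_r(0)$ (which you invoke for the first displayed inequality) and simultaneously place the annulus, where $I<1$, inside $B^2_r(0)$; and enlarging the assumption's ball to some $r'>r$ would alter the $N$ and $L$ arguments appearing in the complexity bound.

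The paper sidesteps this by never estimating $\|{-}\nabla V-\mathcal R\phi_1\|$ in the annulus at all. It uses that the clipped ReLU network $\phi^{(1)}$ is globally Lipschitz with some constant $L_{\phi^{(1)}}$, chooses the annulus width $b-r=\varepsilon/\max\{L_{\phi^{(1)}},M\}$, and for $x$ in the annulus projects to $\hat x=rx/\|x\|_{\ell^1}\in\partial B^1_r(0)$. Then
\[
\|{-}\nabla V(x)-\phi(x)\|_{\ell^2}\le\|\nabla V(x)-\nabla V(\hat x)\|_{\ell^2}+\|{-}\nabla V(\hat x)-\phi(\hat x)\|_{\ell^2}+\|\phi(\hat x)-\phi(x)\|_{\ell^2},
\]
with the outer two terms each at most $\varepsilon$ by Lipschitz continuity across a gap of length $\le b-r$, and the middle term already controlled on the inner ball. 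This projection/Lipschitz step---rather than an approximation bound in the annulus---is the missing ingredient, and it is precisely the accumulation of these $\varepsilon$-sized pieces (together with the multiplication and linear-shift errors) that produces the constant $9$ in $\delta$.
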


In a nutshell, \cref{prop:combined_assump} states that \cref{assump:nn_approx_a} is sufficient to get global linear error bounds on the network approximations, provided that the Lipschitz constant $M$ is not too far away from the convexity parameter $m$. This makes sense intuitively: The closeness of $M$ and $m$ implies that $V$ is close to a quadratic function, and hence $\nabla V$ can be well approximated by a linear function, as \cref{thm:linear_approx} states. The important ingredient for proving the above FCNN result is the fact that this linear function can be represented by a ReLU network. More precisely, in the proof, we construct a network that combines the local approximation on $B_r^2(0)$ granted by \cref{assump:nn_approx_a} with the global approximation property of the function $x\mapsto -mx$ on the complement of $B_r^2(0)$.

When considering the stochastic processes driven by these neural networks, the linear growth allows us to bound the variance proxy in every step, according to \cref{prop:subgaussian_nn_process}. Due to the bounded variance proxies and the arbitrarily small errors on any ball, 
properties of the sub-Gaussian measures can be applied to obtain the desired global errors \eqref{eq: network approx drift} of the drift approximations. 

\begin{theorem}[ResNet-like realization approximated LMC (II)]\label{thm:dnn_lmc_lipschitz2}
    We presuppose the conditions in \cref{assump:potential} for the potential $V$ and \cref{assump:nn_approx_b} for the existence of FCNN approximations of $-\nabla V$ with parameters bounded by $N(d,r,\varepsilon,m, M)$. Furthermore, we assume $M < \sqrt{2}m$. Let $h\in(0,\frac{2}{m+M})$ and $\mu_{0}$ be sub-Gaussian with variance proxy $\sigma_0^2 > 0$. Then, there exists for any $K\in\mathbb{N}$ and any $\varepsilon >0$ a neural network $\psi$ with number of parameters bounded by
\eqref{eq:complexity_compnet} with $r = \mathcal{O}(d(1+\ln(d^2\varepsilon^{-4})^{\frac{1}{2}}))$,
    such that the measure $\mu^{\Psi}$ of the ResNet-like realization of $\Psi \coloneqq \{ \psi_k\coloneqq \psi \}_{k=1}^K$ with input $(Y_0,\xi)$, i.e. the law of $\tilde{\mathcal{R}}\Psi(Y_0,\xi)$, satisfies
\begin{equation}
    \mathcal W_2(\mu_{\infty},\mu^{\Psi}) \leq (1-mh)^K\mathcal W_2(\mu_{\infty}, \mu_{0}) + \frac{7\sqrt 2}{6} \frac M m \sqrt{hd} + \dfrac{1-(1-mh)^K}{m}\varepsilon.
\end{equation}
\end{theorem}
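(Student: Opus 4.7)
The plan is to combine \cref{prop:combined_assump} with \cref{prop:subgaussian_nn_process} and \cref{thm:nn_approx_lmc} in the following way. Since $M<\sqrt{2}m$ ensures $G:=\sqrt{M^{2}-m^{2}}<m$, \cref{prop:combined_assump} supplies, for the chosen radius $r$, a single FCNN $\phi$ satisfying the local bound $\|-\nabla V-\mathcal{R}\phi\|_{L^{\infty}(B_{r}^{2}(0))}\le\varepsilon/\sqrt{2d}$ together with the global growth bound $\|-\nabla V(x)-\mathcal{R}\phi(x)\|_{\ell^{2}}\le\delta+G\|x\|_{\ell^{2}}$ with $\delta=9\varepsilon/\sqrt{2d}$. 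I take $\psi_{1}=\dots=\psi_{K}=h\phi$ and set $\Phi=\{\mathcal{R}\phi\}_{k=1}^{K}$, so that by \cref{prop:resnet_equals_driven_process} the output $\tilde{\mathcal R}\Psi(Y_{0},\xi)$ is exactly $Y^{\Phi}_{Kh}$.

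Next, I apply \cref{prop:subgaussian_nn_process} to the driven process: because $G<m$ and $h\in(0,\tfrac{2}{m+M})$, each $Y^{\Phi}_{kh}$ is sub-Gaussian with variance proxy uniformly bounded by
\begin{equation*}
\sigma_{\max}^{2}:=\tfrac{2h}{1-(c+hG)}+h^{2}\delta^{2}+\sigma_{0}^{2},\qquad c=\max_{\rho\in\{m,M\}}|1-\rho h|,
\end{equation*}
which is crucially \emph{independent} of $r$. This uniform sub-Gaussianity, combined with \cref{prop:lp_subgaussian}, lets me control the tail of each $\mu^{\Phi}_{kh}$ outside a ball: $\mu^{\Phi}_{kh}(B_{r}^{2}(0)^{c})\le 2\exp(-r^{2}/(2d^{2}\sigma_{\max}^{2}))$, and similarly the truncated second moment $\int_{B_{r}^{2}(0)^{c}}\|x\|_{\ell^{2}}^{2}\,d\mu^{\Phi}_{kh}(x)$ decays as $r\to\infty$ (via Cauchy--Schwarz and the sub-Gaussian fourth-moment bound from \cref{def:subGauss_3}).

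The central step is then to verify the hypothesis \eqref{eq: network approx drift} of \cref{thm:nn_approx_lmc} with threshold $\varepsilon$. Splitting the squared $L^{2}$ error gives
\begin{equation*}
\|-\nabla V-\mathcal{R}\phi\|_{L^{2}_{\mu^{\Phi}_{kh}}(\mathbb{R}^{d};\mathbb{R}^{d})}^{2}\le\tfrac{\varepsilon^{2}}{2d}\,\mu^{\Phi}_{kh}(B_{r}^{2}(0))+\int_{B_{r}^{2}(0)^{c}}\bigl(2\delta^{2}+2G^{2}\|x\|_{\ell^{2}}^{2}\bigr)\,d\mu^{\Phi}_{kh}.
\end{equation*}
Using the tail estimates above, a choice $r^{2}\sim 2d^{2}\sigma_{\max}^{2}\ln(d^{2}\varepsilon^{-4})$, i.e.\ $r\in\mathcal{O}(d(1+\ln(d^{2}\varepsilon^{-4})^{1/2}))$, makes the complementary contribution dominated by $\varepsilon^{2}/2$, and together with the $\varepsilon^{2}/(2d)\le\varepsilon^{2}/2$ bound on the ball yields a uniform bound of $\varepsilon^{2}$ on the squared $L^{2}$ drift error. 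This is the main technical step and the place where the assumption $M<\sqrt{2}m$ is essential: without it, $G\ge m$ and the variance proxy bound from \cref{prop:subgaussian_nn_process} would blow up, so the radius $r$ needed to suppress the tail contribution would depend on $K$ (as in \cref{thm: nn lmc approximation}) rather than only on $d$ and $\varepsilon$.

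With \eqref{eq: network approx drift} verified, \cref{thm:nn_approx_lmc} (the regime $h\le\frac{2}{m+M}$) immediately yields the claimed Wasserstein-$2$ bound. Finally, the parameter count is obtained from \cref{prop:resnet_equals_driven_process}: the ResNet-like network $\Psi$ re-uses the single block $\phi$ in $K$ layers, so its total parameter count is $K\cdot\mathcal{P}(\phi)$ with $\mathcal{P}(\phi)$ given by \eqref{eq:complexity_compnet} for the chosen $r$. The only delicate points beyond bookkeeping are the radius-balancing argument in the $L^{2}$ split and confirming that the uniform variance-proxy bound from \cref{prop:subgaussian_nn_process} truly allows $r$ to be chosen independently of $K$; both follow directly once the inequality $G<m$ has been secured by $M<\sqrt{2}m$.
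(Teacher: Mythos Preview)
Your proposal is correct and follows essentially the same route as the paper's proof: invoke \cref{prop:combined_assump} to obtain a single network with both the local $L^{\infty}$ bound and the global linear-growth bound, use \cref{prop:subgaussian_nn_process} (enabled by $G=\sqrt{M^2-m^2}<m$) to get a uniform, $r$-independent variance proxy for all intermediate measures, then split the $L^{2}_{\mu^{\Phi}_{kh}}$ drift error over $B_r^2(0)$ and its complement, choose $r$ to kill the tail, and finish with \cref{thm:nn_approx_lmc} and \cref{prop:resnet_equals_driven_process}. The only cosmetic difference is that the paper bounds the truncated second moment $\int_{B_r^c}\|x\|_{\ell^2}^2\,d\mu^{\Phi}_{kh}$ via the layer-cake representation (yielding $(2d^2\sigma^2+r^2)\exp(-r^2/(2d^2\sigma^2))$) and then applies \cref{rem:r_inequality} to solve for $r$, whereas you use Cauchy--Schwarz and the sub-Gaussian fourth moment; both give the same asymptotic $r\in\mathcal{O}(d(1+\ln(d^2\varepsilon^{-4})^{1/2}))$.
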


The proof of the results in this section can be found in~\cref{sec:proofsLocalApprox}.
\section{Experiments}\label{sec:Numerics}

The theoretical results show that if the Langevin dynamics can be approximated correctly, exponential convergence to the target distribution $\mu_{\infty}$ can be expected~(\cref{thm:nn_approx_lmc}). With the following experiments, we want to prove that such an approximation has practical use in computing quantities of interest of the target distribution. As target distributions, we consider a simple Gaussian distribution, a Gaussian mixture and a Bayesian posterior based on a forward problem defined by the Darcy problem with random data, which can be formulated as parametric partial differential equation (PDE). Note that only the Gaussian density is covered by the theory since any nontrivial Gaussian mixture violates the strong convexity assumption and the Darcy posterior is not available in closed form. However, we can experimentally show that the proposed approach is also feasible for more complex densities. The Wasserstein distance is calculated with the python package POT~\cite{flamary2021pot}.

\subsection{Gaussian distribution}

As a first example, we define $\mu_{\infty} = \mathcal{N}((2,2)^T,I_d)$ to be a simple Gaussian distribution, which is just a translation of a standard normal. The initial distribution is chosen as standard normal, i.e. $\mu_0 = \mathcal{N}(0,I_d)$.

We use $K=1000$ LMC steps with a step-size of $h=0.01$. In \cref{fig:LMC_gauss1}, the green curve shows the mean Wasserstein distance of an ensemble of $100$ LMC particles, $\hat{\mu}_t^{(LMC)}$, to the discrete measure defined by an ensemble of $100$ samples from the true target distribution, which we denote by $\hat{\mu}_{\infty}$. Averages are taken over $200$ runs over the LMC distribution and over target ensembles. The upper and lower boundaries of the shaded region are the mean plus/minus one standard deviation, respectively. The constant black line is the mean distance of $100$-particle target ensembles $\hat{\mu}_{\infty}$ to other (independently sampled) $100$-particle target ensembles $\tilde{\mu}_{\infty}$, again averaged over $200$ different draws. Note that, first, this distance is expected to be larger than zero for finite numbers of particles and, second, the best that the LMC sampler can achieve is to meet the same larger-than-zero distance. After $1000$ steps of the sampler, LMC seems to have converged to the target, measured in terms of the mean Wasserstein distance and its variance.

We use the average of the LMC samples at $K=1000$ steps to estimate the mean of the target $\mu_\infty$, $(2,2)^T$. The green line in 
\cref{fig:LMC_gauss2} shows the mean error (averaged over $200$ independent LMC runs) of this estimate over the used number of LMC samples. Until a slight increase of the error from $80$ to $100$ samples, which we attribute to the variance of the estimate, the error is decreasing with increasing sample size. 
For each number of LMC samples, a FCNN is trained to approximate the drift using all drift evaluations from the whole trajectories of the LMC solutions. This FCNN is then used in a ResNet-like architecture as described in the previous sections. 
The trained ResNet-like network can now be used to generate an arbitrary amount of samples from the (approximate) target. In this example, we average over $1000$ samples generated by the ResNet-like architecture to estimate the mean of the target distribution. The resulting errors of the mean estimate are given by the blue line in \Cref{fig:LMC_gauss2}. 

\begin{figure}
    \centering
    \begin{subfigure}{0.45\textwidth}
        \centering
        \includegraphics[width=\linewidth]{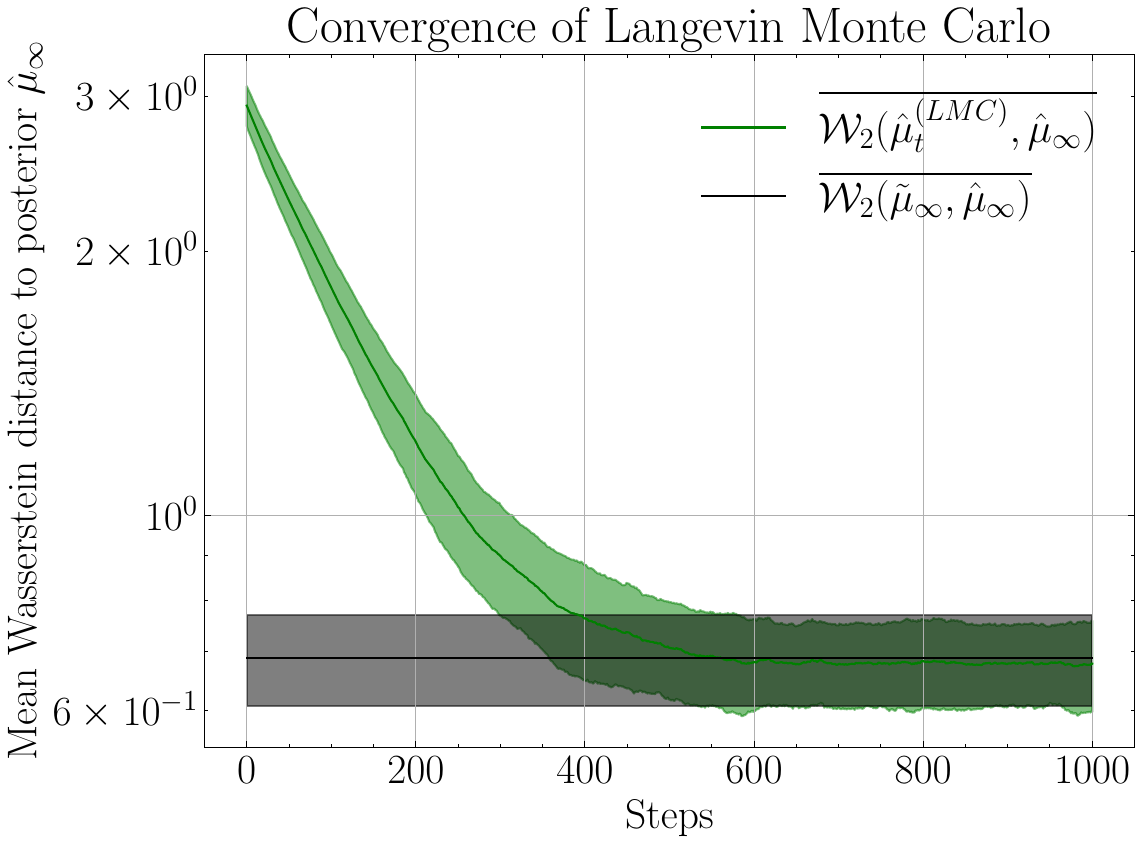}
        \caption{The convergence of the LMC sample ensemble to the Gaussian target distribution is shown. The black line shows the mean Wasserstein distance between particle ensembles drawn from the Gaussian, the shaded region is $\pm$ its standard deviation. The green line is the mean Wasserstein distance of the LMC ensemble to ensembles drawn from the Gaussian $\pm$ the standard deviation. Both means are approximated over $200$ draws.}
        \label{fig:LMC_gauss1}
    \end{subfigure}
    \hfill
    \begin{subfigure}{0.45\textwidth}
        \centering
        \includegraphics[width=\linewidth]{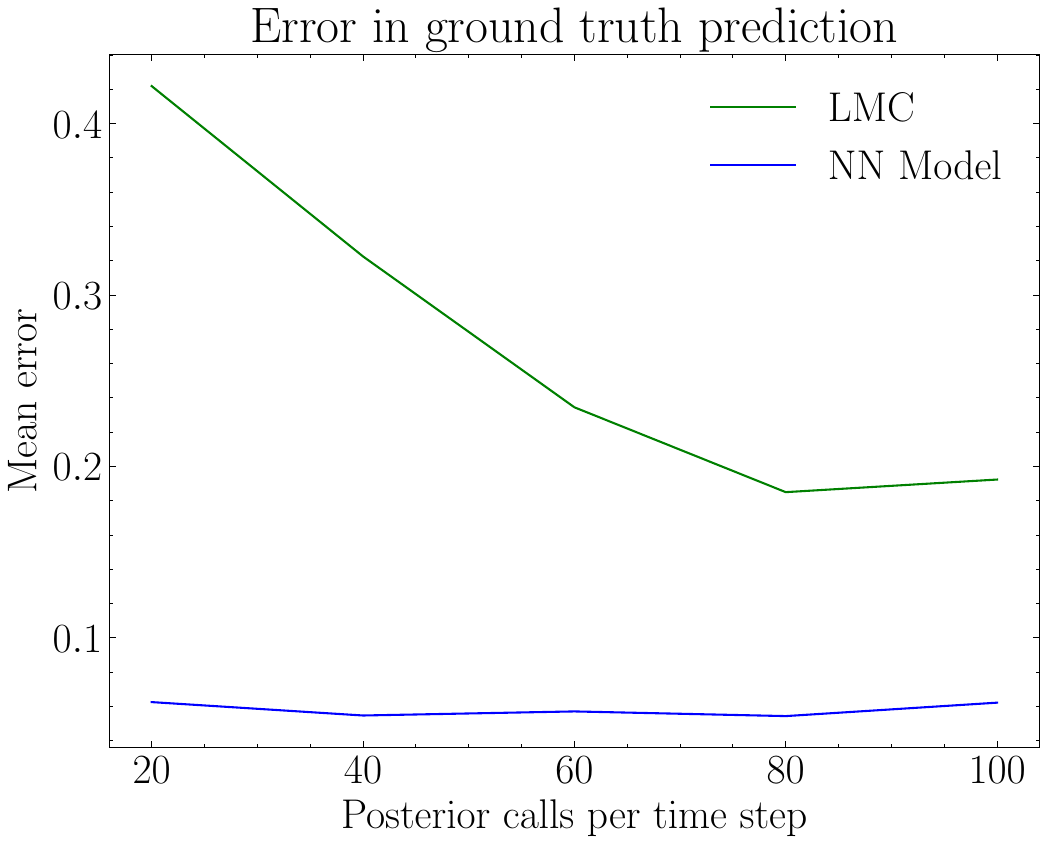}
        \caption{The averaged $\ell^2$ error of the approximation of the mean of the Gaussian using LMC (green) and an NN approximation (blue) and using the same number of target distribution calls per time step for both methods is shown. The NN approximation can produce additional ``cheap'' samples for approximation of the mean. Hence, the approximation is better than for LMC for all numbers of used distribution calls.}
        \label{fig:LMC_gauss2}
    \end{subfigure}
    \caption{Comparison between LMC and NN model on a Gaussian target distribution.}
    \label{fig:GaussianNumerics}
\end{figure}

In all cases, the estimate of the NN is more accurate than the LMC estimate. We stress that this is due to the simple fact that the network uses \textit{more} samples for its estimate compared to LMC while being trained with the same amount of evaluations of the target. In the Gaussian case, evaluations of the target are of course trivial.
Hence, there is no reason why we should not also run LMC with $1000$ samples. However, consider the case where evaluation of the target is expensive, i.e. because it involves the solution of a PDE. In this case, increasing the number of samples for LMC would lead to a significantly increased computational complexity. In contrast to this,  once it is trained trained, the NN model can produce additional samples with no additional cost (measured in required target calls).
We argue that the simple statistical advantage of being able to produce more samples from the NN surrogate without having to invest in more target calls is a strong motivation for such a model in practical uses. In the last example of this section, we demonstrate this property for an inverse problem arising from a parametric PDE model.

\subsection{Gaussian mixture distribution}

To showcase that the proposed approach may also viable for problems which do not adhere to the strong convexity assumption imposed by the theory, we consider a target measure $\mu_{\infty}$ defined by a mixture of Gaussians in $d=2$ dimensions with two modes in $(-1,-1)^T$ and $(1,1)^T$ and with identity covariance each. The initial distribution is again standard normal $\mu_0 = \mathcal{N}(0,I_d)$. Note that the multimodality breaks the strong convexity required in \cref{assump:potential}.

We use $K=1000$ LMC steps with a step-size of $h=0.002$. \Cref{fig:LMC_MM1} shows the mean Wasserstein distance of an ensemble of $100$ LMC particles, $\hat{\mu}_t^{(LMC)}$, to an ensemble of $100$ samples from the true target, $\hat{\mu}_{\infty}$ and compares this to the ``target-target'' distance $\mathcal{W}_{2}(\tilde{\mu}_\infty,\hat{\mu}_\infty)$ just like described in the previous section. Averages are taken over $1000$ runs over both the LMC distribution as well as over target ensembles. This high number of runs is chosen because of the high variance of the estimates, resulting from the multimodality. After $1000$ steps of the sampler, LMC seems to have converged to the target as measured by the mean Wasserstein distance and its variance.

We use the average of the LMC samples at $K=1000$ steps to estimate the mean of the target $\mu_\infty$, which is $(0,0)^T$. The green line in 
\Cref{fig:LMC_MM2} shows the mean error (averaged over $70$ independent LMC runs) of this estimate over the number of LMC samples used. 
We train ResNet-like NNs with the LMC samples as described in the previous section to generate an arbitrary amount of samples from the (approximate) target. Again, we average over $1000$ samples generated by the ResNet-like architecture to estimate the mean of the target distribution. The resulting errors of the mean estimate are given by the blue line in \Cref{fig:LMC_MM2}. Just like in the Gaussian case, the estimates produced by the NN are more accurate than the LMC estimates with the same number of used target calls.

\begin{figure}
    \centering
    \begin{subfigure}{0.45\textwidth}
        \centering
        \includegraphics[width=\linewidth]{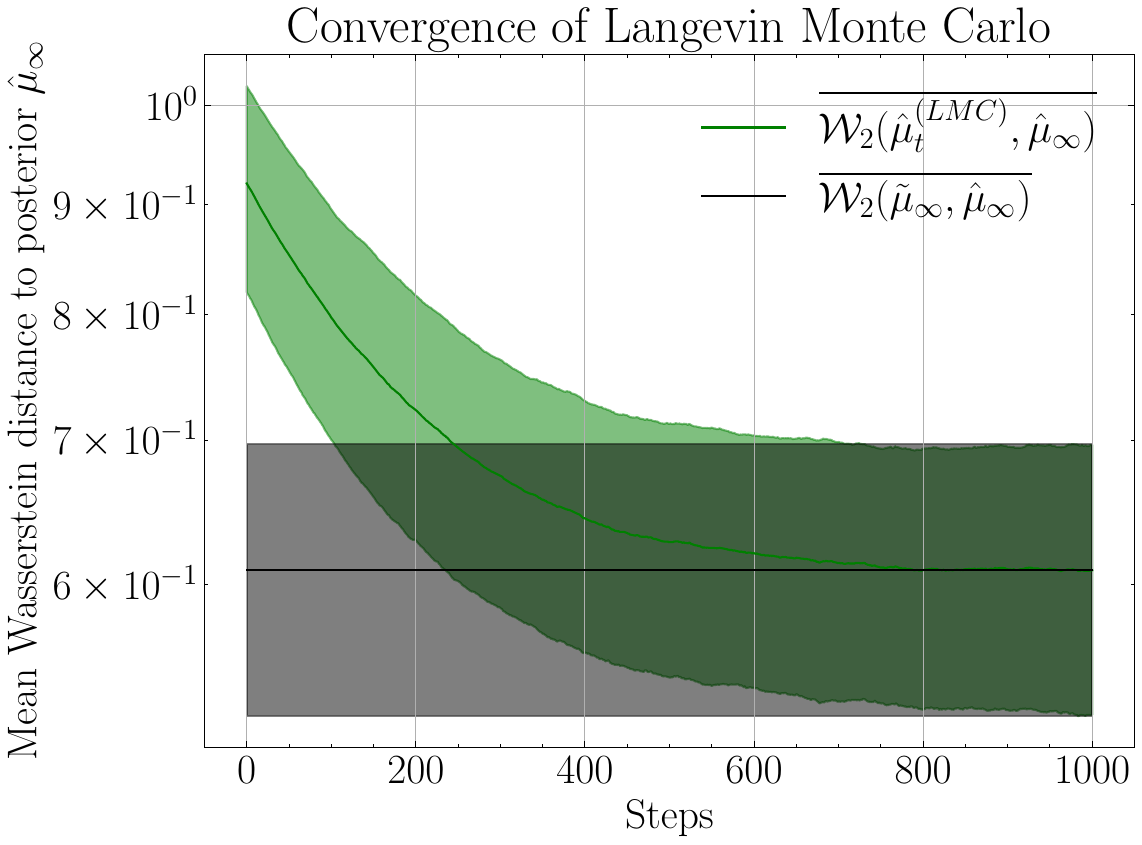}
        \caption{The convergence of the LMC sample ensemble to the Gaussian mixture target distribution is shown. The black line shows the mean Wasserstein distance between particle ensembles drawn from the Gaussian mixture, the shaded region is $\pm$ its standard deviation. The green line is the mean Wasserstein distance of the LMC ensemble to ensembles drawn from the Gaussian mixture $\pm$ the standard deviation. Both means are approximated over $1000$ draws.}
        \label{fig:LMC_MM1}
    \end{subfigure}
    \hfill
    \begin{subfigure}{0.45\textwidth}
        \centering
        \includegraphics[width=\linewidth]{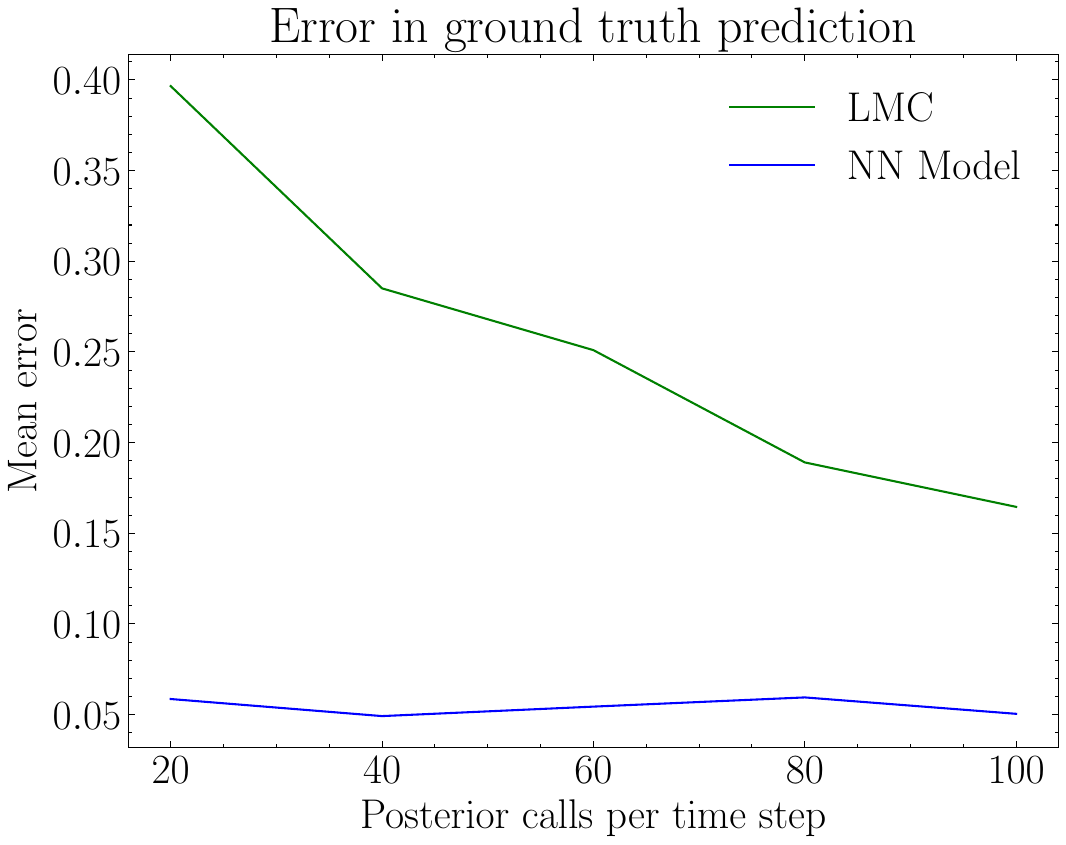}
        \caption{The averaged $\ell^2$ error of the approximation of the mean of the Gaussian mixture using LMC (green) and an NN approximation (blue) and using the same number of target distribution calls per time step for both methods is shown. The NN approximation can produce additional ``cheap'' samples for approximation of the mean. Hence, the approximation is better than for LMC for all numbers of used distribution calls.}
        \label{fig:LMC_MM2}
    \end{subfigure}
    \caption{Comparison between LMC and NN model for a Gaussian mixture target density.}
    \label{fig:GMMNumerics}
\end{figure}

\subsection{Parametric PDE - inverse parametric Darcy problem}

As a last experiment, we consider a linear elliptic partial differential equation (PDE) of the form
\begin{align}
    -\nabla_x \cdot (a(\cdot, y)\nabla_x u) &= f, \quad \text{on}~D\label{eq:darcy-flow}\\
    u_{|\partial D} &= 0, \label{eq:darcy-flow-bc}
\end{align}
where $D\subset \mathbb{R}^d$ is a bounded Lipschitz domain and $f\colon D \to \mathbb{R}$ is a \textit{source term} or \textit{force}.
The interesting aspect, which makes this a common benchmark in the field of Uncertainty Quantification (UQ), is the diffusion coefficient $a(x,y)$ defined for $x\in D$ and a $p$-dimensional parameter vector $y \in \mathcal Y \subseteq \R^p$, rendering this a high-dimensional problem.
There is a proven repertoire of numerical methods to efficiently solve this forward problem such as~\cite{EigelGittelson2014asgfem, eigel2023adaptive, carstensen2012review, eigel2018adaptive, stochafem} and also NN approaches~\cite{cosi, schutte2024multilevelcnnsparametricpdes, schutte2024adaptive, marcati2022exponential, elbrachter2022dnn, gitta, moritz}, to name but a few.

A problem closely related to and even more challenging than this forward problem is the \emph{Bayesian inverse problem}. Here, the task is to infer the parameter $y$ based on a set of noisy measurements of the solution $u$ for a specific parameter realization.
The representation of this problem with NN has been approached in the generative modeling framework with various architectures, e.g. continuous normalizing flows \cite{ruthotto2021introduction}, diffusion models~\cite{yang2022diffusion}, invertible neural networks~\cite{ardizzone2018analyzing} and others, see~\cref{sec:relatedWork}.
For the Bayesian framework used here, the inverse problem is treated by associating a prior distribution (which we represent by a density $f_{\mathrm{prior}}$, assuming tacitly that it exists) with $y$ and by updating it successively via Bayes' rule \cite{kaipio2006statistical}.
To that end, assume an observation given by $$\mathbf u_{\mathrm{obs}} = \mathcal G(y) + \eta$$ of $u$, where $\mathcal G := L \circ \mathcal S : \mathcal Y \to \mathbb{R}^m$ is the \emph{forward operator}, $L : H_0^1(D) \to \R^m$ is a bounded \emph{linear observation operator} such as pointwise evaluations of $u$, $\mathcal S : \mathcal Y \to H_0^1(D)$ is the \emph{solution operator} that maps $y$ to the solution of \eqref{eq:darcy-flow}, and $\eta \sim \mathcal N_m(0, \Sigma)$ is zero mean Gaussian observation noise with covariance $\Sigma\in\mathbb{R}^{m\times m}$.
The resulting \emph{posterior density} over the parameters has the form
\begin{equation}
    \mu_{\infty} \coloneqq f_{\mathrm{post}}(y|\mathbf u_{\mathrm{obs}}) \propto \exp\left(-\dfrac{1}{2}\| \mathcal{G}(y)-\mathbf u_{\mathrm{obs}} \|^2_\Gamma \right)f_{\text{prior}}(y).
    \label{eq:darcy-flow-ivp-post}
\end{equation}

Even though $L$ is computationally cheap, the \emph{forward operator} $\mathcal G$ is expensive due to the \emph{solution operator} $\mathcal S$ that is used to solve the PDE \eqref{eq:darcy-flow} given some parameter $y$. The PDE can for instance be solved by the common \emph{finite elements method} (FEM)~\cite{Hackbusch, adaptivenochetto}.
In practice, one often wants to sample from \eqref{eq:darcy-flow-ivp-post} in order to perform a Monte Carlo approximation of "quantities of interest" such as moments of the distribution. Thus, it often requires a lot of solution computations to get accurate results. 

We give a simple example of the usefulness of a learned NN approximation of LMC, by considering approximations of the ground truth diffusion coefficient $a(\cdot,y)$ in \eqref{eq:darcy-flow}. In our example, the PDE is one-dimensional, $d=1$, and discretized on a \emph{computational grid} $x_i = 2\pi i / D$, $i=0,\ldots,D-1$, with $D=20$ points and mesh size $h=\frac{2\pi}{D}$.
The forcing on each point of the computational grid is given by $f_i = \exp\left(-(2x_i-2\pi)^2/40\right) - \frac{3}{5}$. The ground truth on each grid point is given by $a_i(y) = \exp(y_i)$, where $y_i = \frac{1}{2}\sin(x_i-\frac{h}{2})$. The solution operator $\mathcal{S}$ is now defined via a finite difference (FD) approximation of \eqref{eq:darcy-flow} on the computational grid. For the observation operator $L$, we define an \textit{observation grid} consisting of every second point in the computational grid, i.e. $x_{2i+1}$ for $i=0,\ldots,9$. The observation operator $L$ then maps the FD approximation of the PDE computed on the computation grid to its restriction to the observation grid. Thus, our forward operator $\mathcal{G} = L\circ \mathcal{S}$ is defined. Additionally, we add i.i.d. Gaussian noise $\eta_i\sim \mathcal{N}(0, 10^{-4}), i=0,\dots,9$ to every entry of the solution on the observation grid. Generating an observation $\mathbf{u}_{\mathrm{obs}}$ by the procedure described above and specifying a prior density $f_{\mathrm{prior}}$ on the parameter $y$, we have fixed a posterior density \eqref{eq:darcy-flow-ivp-post}. For details regarding the FD approximation and a suitable prior, we refer to \cite{eigel2024less}.

We use LMC to generate samples from \eqref{eq:darcy-flow-ivp-post}. To compute the potential gradient, which involves the gradient of the forward operator, we use the inbuilt \texttt{autograd} method in \texttt{pytorch}. We use $K=2000$ LMC steps with a step-size of $h=10^{-4}$. In \Cref{fig:LMC_darcy1}, the green curve shows the mean Wasserstein distance of an ensemble of $100$ LMC particles, $\hat{\mu}_t^{(LMC)}$, to an ensemble of $100$ samples from the true posterior, $\hat{\mu}_{\infty}$. Averages are taken over $70$ runs over both the LMC distribution as well as over posterior ensembles. The upper and lower boundaries of the shaded region are the mean plus/minus one standard deviation respectively. For the posterior reference, we use an affine invariant Markov Chain Monte Carlo (MCMC) sampler provided by the Python package \texttt{emcee}~\cite{2013PASP..125..306F}. The mean distance can be seen to decrease over time to just under $10^{-1}$. The constant black line at this level is the mean distance of $100$-particle posterior ensembles, $\hat{\mu}_{\infty}$, to other (independently sampled) $100$-particle posterior ensembles, $\tilde{\mu}_{\infty}$, again averaged over $70$ different draws. Note that, first, this distance is expected to be larger than zero for finite numbers of particles and that, second, the best that the LMC sampler can achieve is meet the same larger-than-zero distance. Around the $2000$ step mark, the LMC sampler seems to have converged with small relative error compared to a true posterior sampler. We stress that this is only a heuristic. To really track convergence in distribution, higher moments than expectation and variance would have to be considered. However, we are not interested in achieving perfect convergence and are content with reasonable approximations. Hence, we interpret \Cref{fig:LMC_darcy1} as sufficient evidence of convergence for the sake of this section.

\begin{figure}
    \centering
    \begin{subfigure}{0.45\textwidth}
        \centering
        \includegraphics[width=\linewidth]{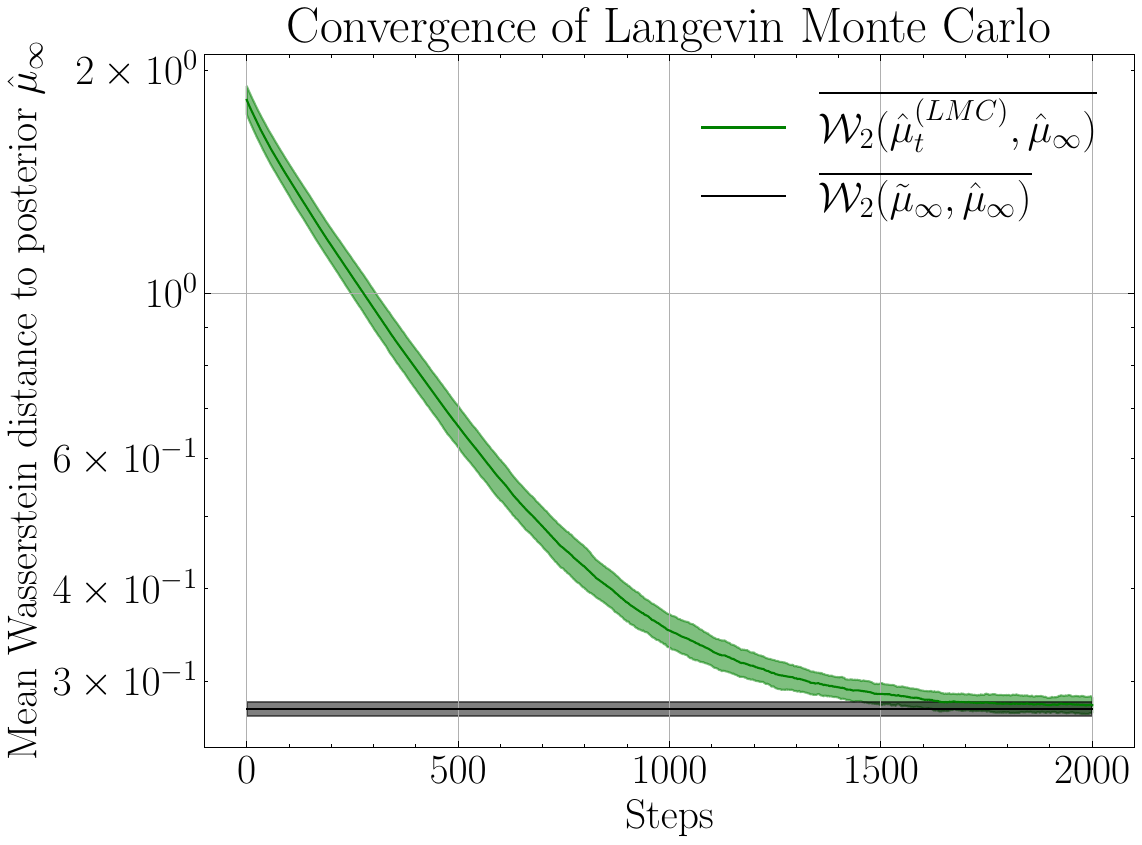}
        \caption{The convergence of the LMC sample ensemble to the Darcy posterior distribution is shown. The black line shows the mean Wasserstein distance between particle ensembles drawn from the Darcy posterior (using the MCMC package \texttt{emcee}~\cite{2013PASP..125..306F}), the shaded region is $\pm$ its standard deviation. The green line is the mean Wasserstein distance of the LMC ensemble to ensembles drawn from the Darcy posterior $\pm$ the standard deviation. Both means are approximated over $70$ draws.}
        \label{fig:LMC_darcy1}
    \end{subfigure}
    \hfill
    \begin{subfigure}{0.45\textwidth}
        \centering
        \includegraphics[width=\linewidth]{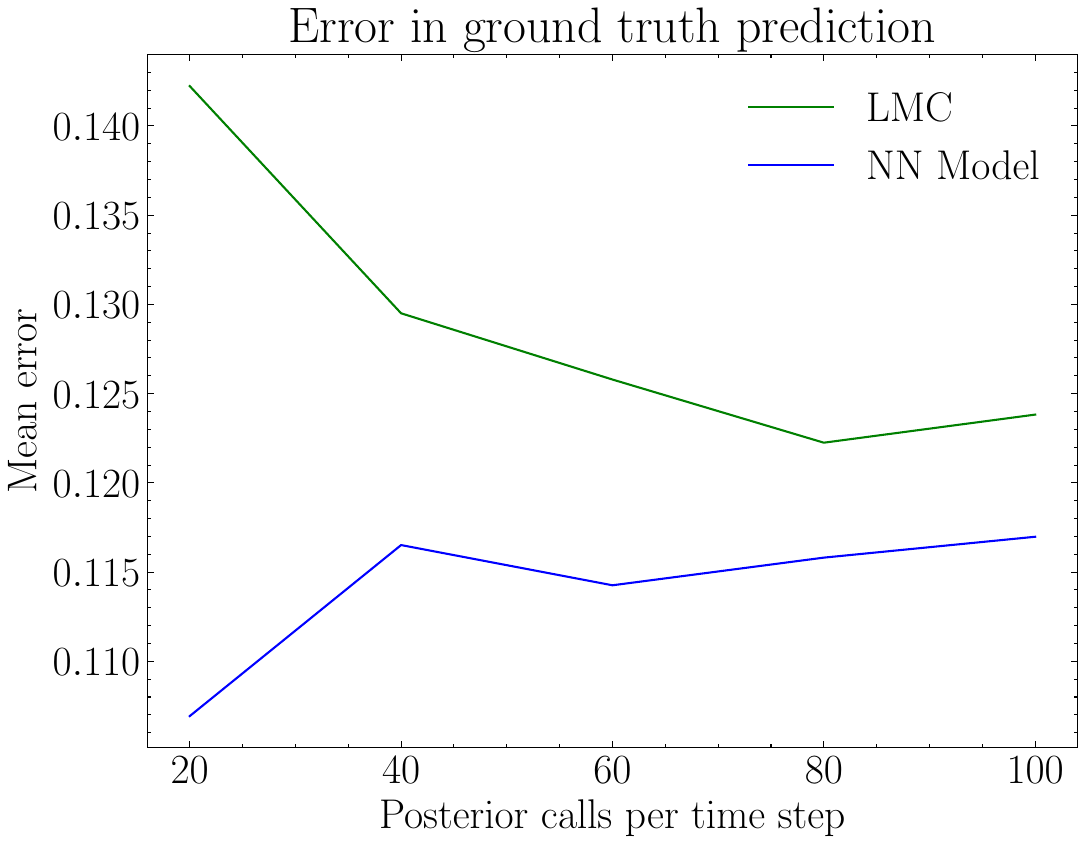}
        \caption{The averaged $\ell^2$ error of the approximation of the mean of the posterior distribution using LMC (green) and an NN approximation (blue) and using the same number of target distribution calls per time step for both methods. The NN approximation can produce additional ``cheap'' samples for approximation of the mean. Hence, the approximation is better than for LMC for all numbers of used distribution calls.}
        \label{fig:LMC_darcy2}
    \end{subfigure}
    \caption{Comparison between LMC and NN model on the inverse parametric Darcy problem.}
    \label{fig:DarcyNumerics}
\end{figure}

We use the average of the LMC samples at $K=2000$ steps to estimate the mean of the posterior $\mu_\infty$ as an approximation to the ground truth $y$. The green line in \Cref{fig:LMC_darcy2} shows the mean error (averaged over $70$ independent LMC runs) of this estimate compared to the actual ground truth over the number of LMC samples, which is equal to the number of posterior calls necessary per LMC step. Until a slight increase of the error from $80$ to $100$ samples, which we attribute to a high variance of the estimate, the error is decreasing with increasing sample size. 
As described for the Gaussian problem, we train ResNet-like NNs using the full LMC trajectories to generate more samples from the (approximate) posterior, with which the ground truth estimate is computed. In this example, $1000$ samples generated by the ResNet-like architecture were used. The resulting errors are given by the blue line in \Cref{fig:LMC_darcy2}. In all cases, the estimate of the NN is more accurate than the LMC estimate. We stress again that this is due to the simple fact that the network can produce \textit{more} samples without requiring more calls to the solution operator $\mathcal{G}$. Once the model is trained, it can produce new samples ``for free'', in the sense that no additional calls to the solution operator are required. 

\section{Conclusion}\label{sec:Conclusion}

Expressivity results for ResNet-like neural networks mapping samples from a prior distribution to a smooth log-concave posterior distribution with arbitrary accuracy were derived. To that end, an upper bound for the decay of the Wasserstein-$2$ distance for the perturbed Langevin Monte Carlo process with approximate gradient steps was shown.
Neural networks are used as an approximation for the drift in every step under different approximation assumptions.

In a first approach, global approximation of the drift with linear error growth is assumed. In this case, the variance proxies of the sub-Gaussian intermediate measures of the perturbed LMC process can be bounded uniformly (\cref{prop:subgaussian_nn_process}).
Hence, the size of the ResNet-like neural network only depends linearly on the number of steps carried out, see \cref{thm:lmc_approx_ii}.

If only a local approximation on a compact ball is assumed, we require more structure of the target potential. In particular, if we assume that the Lipschitz constant of the potential gradient is smaller than $\sqrt{2}$ times the strong convexity constant, the local approximation on a ball is sufficient to ensure a uniform bound for the variance proxies of the intermediate distributions of the process. Hence, robust error bounds, depending only linearly on the number of steps taken, can be derived. This is the result of \cref{thm:dnn_lmc_lipschitz2}.

The proposed architecture is verified numerically on a Gaussian, a Gaussian mixture and a parametric PDE posterior as target distributions. We observe that the ResNet-like architecture with intermediate feed-forward networks of the same size shows better expectation approximations due to being able to generate more samples without more posterior evaluations, see \Cref{fig:GaussianNumerics,fig:GMMNumerics,fig:DarcyNumerics}. The architecture of the networks allows to train small networks in every step, which enables short training processes. 
 
When approximating Langevin dynamics with the described ResNet-like neural network, an upper bound on the required steps for an accurate posterior distribution approximation is shown. As an avenue for future research, a potential decrease in complexity could be achieved by learning multiple Langevin steps with a single FCNN approximation.

\section*{Acknowledgements}
ME \& JS acknowledge funding from the Deutsche Forschungsgemeinschaft (DFG, German Research Foundation) in the priority programme SPP 2298 "Theoretical Foundations of Deep Learning". ME \& CS \& DS acknowledge support by the ANR-DFG project \textit{COFNET: Compositional functions networks - adaptive learning for high-dimensional approximation and uncertainty quantification}. This study does not have any conflicts to disclose.

\newpage
\appendix
\section{Proofs of main results}

\setcounter{subsection}{1}

\subsection{Preliminaries: Sub-Gaussianity and Lyapunov functions}

\begin{proof}[Proof of \cref{prop:lp_subgaussian}]
    By definition of a sub-Gaussian random vector, for any $u \in \mathbb S_1(0)$ and any $p\in\mathbb{N}$, we have
    \begin{equation*}
        \E[|\langle u, Z\rangle|^p] \leq (\sqrt 2 \sigma)^p p \Gamma\left(p/2\right).
    \end{equation*}
    Now, let $q \geq 1$ be arbitrary and note that by Jensen's inequality we have
    \begin{equation*}
        \|Z\|_{\ell^p} \leq \max(d^{1/p - 1/q}, 1) \|Z\|_{\ell^q}.
    \end{equation*}
    Furthermore, for any $i\in\{ 1,\ldots,d \}$ and $u=e_i$ we have $\E[|Z_i|^q] = \E[|\langle u, Z\rangle|^q] \leq (\sqrt 2 \sigma)^q q \Gamma\left(q/2\right)$.
    Taking the expectation leads to
    \begin{align*}
        \E[\|Z\|_{\ell^p}^q] &\leq \max(d^{q/p - 1}, 1) \E[\|Z\|_{\ell^q}^q]\\
        &\leq \max(d^{q/p},d) (\sqrt 2 \sigma)^q q \Gamma\left(q/2\right)\\
        &\leq (\sqrt 2 d\sigma)^q q \Gamma\left(q/2\right).
    \end{align*}
    Hence, $\|Z\|_{\ell^p}$ is sub-Gaussian.
\end{proof}

\begin{proof}[Proof of \cref{rem:lyapunov_subgauss}]
    If $X$ is a sub-Gaussian random vector with variance proxy $\sigma^2$ then by \cref{def:subgaussian_rvector} and \cref{def:subgaussian_rv} it holds that $\mathbb{E}\left[e^{s\langle v,X\rangle }\right] \leq e^{\frac{\sigma^2 s^2}{2}}$ for all $s\in\mathbb{R}$ and $v\sim \mathbb{S}_1(0)$. In particular, it holds for any $\lambda > 0$ that
    \begin{equation*}
\mathbb{E}_{X}\mathcal{L}_{\lambda}(X) \leq e^{\frac{\sigma^2 \lambda^2}{2}}.
    \end{equation*}
    The reverse is not obvious, but does in fact also hold. Let $\sigma^2$ be such that $\mathbb{E}_{X}\mathcal{L}_{\lambda}(X) \leq e^{\frac{\sigma^2 \lambda^2}{2}}$. Let $u\in\mathbb{S}_1$ be arbitrary and $\mathcal{R}$ denote the Haar measure over rotations in $\mathbb{R}^d$. Then it holds for $\lambda > 0$ that
    \begin{align*}
\mathbb{E}_{X}\mathcal{L}_{\lambda}(X) = \mathbb{E}_{X}\mathbb{E}_{v\sim \mathbb{S}_1(0)}\left[ 
 e^{\lambda \langle v,X \rangle} \right] = \mathbb{E}_{X}\mathbb{E}_{R\sim \mathcal{R}}\left[ 
 e^{\lambda \langle Ru,X \rangle} \right] = \mathbb{E}_{X}\mathbb{E}_{R\sim \mathcal{R}}\left[ 
 e^{\lambda \langle u,RX \rangle} \right].
    \end{align*}
    Hence,
    \begin{align*}
\mathbb{E}_{X}\mathbb{E}_{R\sim \mathcal{R}}\left[ 
 e^{\lambda \langle u,RX \rangle} \right] \leq e^{\frac{\sigma^2 \lambda^2}{2}} \qquad \text{for all } u\in \mathbb{S}_1,\ \lambda \in \mathbb{R},
    \end{align*}
    where the inequality for negative $\lambda$ follows since the negative sign can be absorbed into the rotation.
    By definition, this implies that the random variable $RX$ is sub-Gaussian with variance proxy $\sigma^2$. Let $R^\ast$ denote the adjoint of the rotation $R$. It follows for any $r>0$ and $u\in\mathbb{S}_1$ that
    \begin{align*}
        \mathbb{P}(|\langle u, X \rangle| \geq r ) = \mathbb{P}(|\langle R^\ast 
 (R^\ast)^{-1}u, X \rangle| \geq r ) = \mathbb{P}(|\langle  
 (R^\ast)^{-1}u, R X \rangle| \geq r ) \leq \exp\left( -\frac{r^2}{2\sigma^2} \right),
    \end{align*}
since $(R^\ast)^{-1} u$ is again an element of $\mathbb{S}_1$. By definition, this yields the sub-Gaussianity of $X$ with variance proxy $\sigma^2$.
\end{proof}
\subsection{ResNet-like architectures}\label{sec:Proofof-prop:resnet_equals_driven_process}

\begin{proof}[Proof of \cref{prop:resnet_equals_driven_process}]
    For $i=1,\ldots,K$, let $L_i,w_0,\ldots,w_{L_i+1}\in\N$, and $$((A^i_0, b^i_0), \ldots , (A^i_{L_i}, b^i_{L_i})) \in \bigtimes^{L_i}
_{\ell=0}(\mathbb{R}^{w_{\ell+1}\times w_\ell} \times \mathbb{R}^{w_{n+1}})$$
    such that $\phi_{i} = ((A^{i}_0, b^{i}_0), \ldots , (A^{i}_{L_i}, b^{i}_{L_i}))$ and let  
\begin{equation}
   \psi_i = ((A^{i}_0, b^{i}_0), \ldots , (hA^{i}_{L_i}, hb^{i}_{L_i})). 
\end{equation}
Then the ResNet-like realization of $\Psi = \{\psi_i\}_{i=1}^K$ satisfies for every $\omega\in\Omega$ with $x \coloneqq Y_0(\omega)$ that
\begin{equation}
\begin{aligned}
    \widetilde{\mathcal R}\Psi( x, \xi(\omega)) &= x_{k-1} + \mathcal R\psi_{k}(x_{K-1}) + \xi_K(\omega) \\
        &= x + \sum_{i=1}^{K}{\mathcal R \psi_i (x_{i-1}) + \xi_i(\omega)}\\
        &= x + \sum_{i=1}^{K}{h\mathcal R \phi_i (x_{i-1}) + \xi_i(\omega)} \\
        &= x + \int_0^{Kh}{\mathcal R\phi_{{1+\frac{1}{h}\chi_{h}(s)}} (Y_{\chi_h(s)}^{\Phi}(\omega))\mathrm ds} + \sqrt{2} W_{Kh}(\omega) = Y_{Kh}^{\Phi}(\omega).
\end{aligned}
\end{equation}
\end{proof}
\subsection{Perturbed Langevin Monte Carlo}

\label{sec:proofPerturbedLMC}

\begin{proof}[Proof of \cref{thm:nn_approx_lmc}]
Let $\rho\colon \left(0,\frac{2}{M}\right)\rightarrow [0,1)$ be defined by
\begin{equation*}
    \rho(h) := 
    \begin{cases}
        1-mh & \text{if}~0 < h < \frac{2}{m+M}\\
        Mh-1 & \text{if}~\frac{2}{m+M} \leq h < \frac 2 M
    \end{cases}.
\end{equation*}
Consider the triangle inequality
\begin{align*}
    \mathcal{W}_2(\mu^\infty, \mu_{Kh}^{\Phi}) \leq  \mathcal{W}_2(\mu_\infty, \mu^{\tilde{X}}_{Kh}) + \mathcal{W}_2(\mu^{\tilde{X}}_{Kh}, \mu^{\Phi}_{Kh}).
\end{align*}
By \cref{thm:LMC_approx}, the first term can be bounded by 
\begin{equation}\label{eq:first_bound}
    \mathcal W_2(\mu_{\infty},\mu^{\tilde X}_{Kh}) \leq \rho(h)^K\mathcal W_2(\mu_{\infty}, \mu_{0}) +\begin{cases}
        \frac{7\sqrt 2}{6} \frac M m \sqrt{hd}, \qquad &h\in\left( 0,\frac{2}{m+M} \right] \\
        \frac{7\sqrt 2}{6} \frac{Mh}{2-Mh} \sqrt{hd}, \qquad &h\in\left[ \frac{2}{m+M}, \frac{2}{M}\right)
    \end{cases}.
\end{equation}
For $i=0,\ldots,K$, define $\Delta_i = \tilde{X}_{ih}-Y_{ih}$ and note that for $i=0,\ldots,K-1$ it holds that
\begin{align*}
    \Delta_{i+1} &= \tilde{X}_{ih}-Y_{ih} + h(-\nabla V(\tilde X_{ih}) - \phi_{i+1}(Y_{ih})) \\
    &= \tilde{X}_{ih}-Y_{ih} - h\underbrace{(\nabla V(\tilde X_{ih})-\nabla V(Y_{ih}))}_{\coloneqq u_i} + h\underbrace{(-\phi_{i+1}(Y_{ih}) - \nabla V(Y_{ih}))}_{\coloneqq v_i}
\end{align*}
and
\begin{align*}
    \mathbb{E}[\| \Delta_{i+1} \|_{\ell^2}^2]^{1/2} \leq \mathbb{E}[\| \Delta_{i} - hu_i \|_{\ell^2}^2]^{1/2} + \mathbb{E}[\| hv_i \|_{\ell^2}^2]^{1/2}.
\end{align*}
By the assumption that $\|-\nabla V - \phi_{i+1}\|_{L_{\mu^{\Phi}_{ih}}^2(\mathbb{R}^d;\mathbb{R}^d)} < \varepsilon$ for all $i=0,\ldots,K-1$, we have
\begin{align*}
    \mathbb{E}[\| hv_i \|_{\ell^2}^2]^{\frac{1}{2}} \leq h\varepsilon.
\end{align*}
Furthermore, we get for all $i=0,\ldots,K$ that
\begin{align*}
    \mathbb{E}[\| \Delta_{i} - hu_i \|_{\ell^2}^2]^{1/2} &\leq (1-mh)\mathbb{E}[\| \Delta_i \|_{\ell^2}^2]^{1/2}, \qquad h\in\left(0,2/(m+M)\right], \\
    \mathbb{E}[\| \Delta_{i} - hu_i \|_{\ell^2}^2]^{1/2} &\leq (Mh-1)\mathbb{E}[\| \Delta_i \|_{\ell^2}^2]^{1/2}, \qquad h\in\left[ 2/(m+M), 2/M\right),
\end{align*}
by \cref{lem:bound_delta}. Combining these estimates, we arrive at
\begin{equation}\label{eq:second_bound}
    \begin{aligned}
        \mathcal{W}_2(\mu^{\tilde{X}}_{Kh}, \mu^{\Phi}_{Kh}) & \leq \mathbb{E}[\| \Delta_{K} \|_{\ell^2}^2]^{1/2} \leq h\varepsilon + \rho(h)\mathbb{E}[\| \Delta_{K-1} \|_{\ell^2}^2]^{1/2}  \\
        &\leq \sum_{\ell=0}^{K-1} \rho(h)^{\ell}h\varepsilon + \rho(h)^K \mathbb{E}[\| \Delta_0 \|_{\ell^2}^2]^{1/2} \\
    &= \dfrac{1-\rho(h)^K}{1-\rho(h)}h\varepsilon + \rho(h)^K \mathbb{E}[\| \Delta_0 \|_{\ell^2}^2]^{1/2}.
    \end{aligned}
\end{equation}
Now, since $\mu_0^{\tilde X} = \mu_0^{\Phi} = \mu_0 $, the term $\mathbb{E}[\| \Delta_0 \|_{\ell^2}^2]^{1/2}$ vanishes. Combining \Cref{eq:first_bound} and \Cref{eq:second_bound}, we arrive at
\begin{align*}
     \mathcal W_2(\mu_{\infty},\mu^{\Phi}_{Kh}) \leq \rho(h)^K\mathcal W_2(\mu_{\infty}, \mu_{0}) +\begin{cases}
        \frac{7\sqrt 2}{6} \frac M m \sqrt{hd} + \frac{1-(1-mh)^K}{m}\varepsilon, \qquad &h\in\left( 0,\frac{2}{m+M} \right] \\
        \frac{7\sqrt 2}{6} \frac{Mh}{2-Mh} \sqrt{hd} + \frac{1-(Mh-1)^K}{2-Mh}h\varepsilon, \qquad &h\in\left[ \frac{2}{m+M}, \frac{2}{M}\right)
    \end{cases}.
\end{align*}
\end{proof}

\subsubsection{Auxiliary results}

\begin{lemma}[{\cite[Lemma~1]{dalalyan2017stronger}}]\label{lem:bound_delta}
    Let $\Delta_i := \tilde X_{ih} - Y_{ih}$ and $u_i := \nabla V(\tilde X_{ih}) - \nabla V(Y_{ih})$. Let
    \begin{equation*}
        \gamma := \begin{cases}
            1-mh & \text{if}~h \leq 2/(m+M)\\
            Mh-1 & \text{if}~h \geq 2/(m+M)
        \end{cases},
    \end{equation*}
    which is in $(0,1)$ since $h \leq 2/M$ by assumption. It holds that
    \begin{equation*}
        \E[\|\Delta_i - hu_i\|_{\ell^2}^2]^{1/2} \leq \gamma \E[\|\Delta_i\|_{\ell^2}^2]^{1/2}.
    \end{equation*}
\end{lemma}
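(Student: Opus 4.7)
The plan is to establish the bound pathwise in $\omega\in\Omega$, i.e., to show that $\|\Delta_i(\omega) - hu_i(\omega)\|_{\ell^2} \leq \gamma \|\Delta_i(\omega)\|_{\ell^2}$ for every realization. Taking square and then $L^2(\Omega)$-expectation on both sides then gives the announced inequality without further work. This reduces the problem to a deterministic estimate on vectors $x=\tilde X_{ih}(\omega)$ and $y=Y_{ih}(\omega)$ in $\mathbb R^d$.

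The central tool is Nesterov's co-coercivity inequality, which holds for any $V$ satisfying \cref{assump:potential} (i.e. $m$-strongly convex with $M$-Lipschitz gradient): for all $x,y \in \mathbb R^d$,
\begin{equation*}
    \langle \nabla V(x) - \nabla V(y), x-y\rangle \;\geq\; \tfrac{mM}{m+M}\|x-y\|_{\ell^2}^2 + \tfrac{1}{m+M}\|\nabla V(x)-\nabla V(y)\|_{\ell^2}^2.
\end{equation*}
Applied to $x=\tilde X_{ih}$, $y=Y_{ih}$, this reads $\langle u_i, \Delta_i\rangle \geq \frac{mM}{m+M}\|\Delta_i\|^2 + \frac{1}{m+M}\|u_i\|^2$. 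Expanding the square
\begin{equation*}
    \|\Delta_i - h u_i\|_{\ell^2}^2 = \|\Delta_i\|_{\ell^2}^2 - 2h\langle \Delta_i, u_i\rangle + h^2\|u_i\|_{\ell^2}^2
\end{equation*}
and substituting the co-coercivity bound yields
\begin{equation*}
    \|\Delta_i - hu_i\|_{\ell^2}^2 \;\leq\; \Bigl(1 - \tfrac{2hmM}{m+M}\Bigr)\|\Delta_i\|_{\ell^2}^2 + h\Bigl(h-\tfrac{2}{m+M}\Bigr)\|u_i\|_{\ell^2}^2.
\end{equation*}

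The proof then splits according to the sign of the coefficient $h-\frac{2}{m+M}$ in front of $\|u_i\|^2$. In the regime $h\leq 2/(m+M)$, that coefficient is non-positive, so the upper bound is maximized by using the \textit{lower} bound $\|u_i\|^2 \geq m^2\|\Delta_i\|^2$ coming from strong convexity; an elementary rearrangement of the resulting coefficient gives exactly $1-2mh+m^2h^2=(1-mh)^2$. In the regime $h\geq 2/(m+M)$, the coefficient is non-negative, so we instead apply the $M$-Lipschitz upper bound $\|u_i\|^2 \leq M^2\|\Delta_i\|^2$; the same algebraic simplification produces $1-2Mh+M^2h^2=(Mh-1)^2$. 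Under the standing assumption $h<2/M$, both prefactors $\gamma$ lie in $(0,1)$.

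The main obstacle, such as it is, lies in invoking the precise form of Nesterov's inequality rather than the weaker convexity and Lipschitz bounds separately; without the cross term $\frac{1}{m+M}\|u_i\|^2$ one cannot collapse the prefactor to a perfect square. Everything else is bookkeeping: taking $(\cdot)^{1/2}$ pathwise preserves the inequality, and passing to $\mathbb E[\,\cdot\,]^{1/2}$ is just monotonicity of the $L^2(\Omega)$-norm, which completes the proof.
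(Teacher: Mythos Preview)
Your proof is correct and follows the standard argument from the cited reference \cite[Lemma~1]{dalalyan2017stronger}; the paper itself does not give a proof but simply cites that lemma. Your use of the co-coercivity inequality, the case split on the sign of $h-\tfrac{2}{m+M}$, and the subsequent substitution of the bound $\|u_i\|_{\ell^2}\geq m\|\Delta_i\|_{\ell^2}$ (from strong convexity via Cauchy--Schwarz) or $\|u_i\|_{\ell^2}\leq M\|\Delta_i\|_{\ell^2}$ (from the Lipschitz gradient) to collapse the coefficient to $(1-mh)^2$ or $(Mh-1)^2$ is exactly the intended route.
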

\subsection{Approximation of Langevin Monte Carlo under linear error growth constraints}

\subsubsection{Sub-Gaussianity of perturbed Langevin Monte Carlo}\label{sec:subGaussianLMC}

\begin{proof}[Proof of \cref{prop:subgaussian_LMC}]
    We bound the Lyapunov-function of the LMC algorithm for a generic step. To simplify notation, we write $X_k$ instead of $\tilde{X}_{kh}$ in this proof. Consider the process
\begin{align*}
    X_{k} = X_{k-1} - h\nabla V (X_{k-1}) + \xi_{k}, \quad X_0 \sim \mu_0. 
\end{align*}
Let $c = \max_{\rho\in\{ m,M \}}|1-\rho h|$ and note that $c < 1$. Hence, $x\mapsto x^c$ is a concave function and Jensen's inequality yields $\mathbb{E}[Z^c] \leq (\mathbb{E}[Z])^c$ for any random variable $Z$. We further assume without loss of generality that the unique minimizer of $V$ is given by $x^\ast = 0$. Then, the Lyapunov function satisfies
\begin{align*}
    \mathbb{E}_{\xi_k}[\mathcal{L}_\lambda(X_{k})] &= e^{h\lambda^2}  \mathcal{L}_\lambda(X_{k-1} - h\nabla V(X_{k-1}))  \\
    &\leq e^{h\lambda^2} \ell (c\lambda \| X_{k-1} \|_{\ell^2}) \leq  e^{h\lambda^2} \ell(\lambda \| X_{k-1} \|_{\ell^2})^c \\
    &= e^{h\lambda^2} \left( \mathcal{L}_\lambda(X_{k-1}) \right)^c,
\end{align*}
where we have used the behaviour of the Lyapunov function under Gaussian convolution (\cref{lem:gauss_lyap}), the contractivity of the gradient descent step (\cref{lem:contract}) and Jensen's inequality. Furthermore,
\begin{align*}
    \mathbb{E}_{\xi_k,\xi_{k-1}}[\mathcal{L}_\lambda(X_{k})] &\leq e^{h\lambda^2} \mathbb{E}_{\xi_{k-1}}\left[\left( \mathcal{L}_\lambda(X_{k-1}) \right)^c\right] \\
    &\leq e^{h\lambda^2} \left(\mathbb{E}_{\xi_{k-1}}\left[ \mathcal{L}_\lambda(X_{k-1}) \right]\right)^c
\end{align*}
by Jensen's inequality. Iteratively, this leads to
\begin{align*}
    \mathbb{E}_{\xi_k,\xi_{k-1},\ldots,\xi_1}[\mathcal{L}_\lambda(X_{k})] \leq e^{h\lambda^2\sum_{i=0}^{k-1}c^i }\left( \mathcal{L}_\lambda (X_0) \right)^{c^k} = e^{h\lambda^2\frac{1-c^k}{1-c} }\left( \mathcal{L}_\lambda (X_0) \right)^{c^k}.
\end{align*}
Finally, taking the expectation with respect to $X_0$ on both sides we find (again using Jensen's inequality) that
\begin{align*}
    \mathbb{E}_{X_k}[\mathcal{L}_\lambda(X_{k})] \leq e^{h\lambda^2\frac{1-c^k}{1-c} }\left( \mathbb{E}_{X_0}\left[\mathcal{L}_\lambda (X_0)\right] \right)^{c^k}.
\end{align*}
Since $X_0 \sim \mu_0$ is sub-Gaussian with variance proxy $\sigma_0^2$, this yields
\begin{align*}
    \mathbb{E}_{X_k}[\mathcal{L}_\lambda(X_{k})] \leq \exp\left(h\lambda^2\frac{1-c^k}{1-c}\right) \exp\left(\frac{\sigma_0^2\lambda^2c^k}{2}\right) = \exp\left(h\lambda^2\frac{1-c^k}{1-c}+\frac{\sigma_0^2\lambda^2c^k}{2}\right).
\end{align*}
By \cref{rem:lyapunov_subgauss} it follows that $X_k$ is a sub-Gaussian RV with variance proxy
\begin{align*}
    \sigma_k^2 = 2h\frac{1-c^k}{1-c}+\sigma_0^2c^k.
\end{align*}
\end{proof}

\begin{proof}[Proof of \cref{prop:subgaussian_nn_process}]
    We simplify notation and consider the process $Y_k = Y_{k-1} + h\phi_{k}(Y_{k-1}) + \xi_k$. A similar analysis as in the proof of \cref{prop:subgaussian_LMC} leads to 
\begin{align*}
    \mathbb{E}_{\xi_k}\left[ \mathcal{L}_\lambda (Y_k) \right] &= e^{h\lambda^2}  \mathcal{L}_\lambda(Y_{k-1} + h\phi_k(Y_{k-1})) \\
    &= e^{h\lambda^2}  \mathcal{L}_\lambda(Y_{k-1} - h\nabla V(Y_{k-1}) + h\nabla V (Y_{k-1}) + h\phi_k(Y_{k-1})) \\
    &= e^{h\lambda^2}  \ell(\lambda \|Y_{k-1} - h\nabla V(Y_{k-1}) + h\nabla V (Y_{k-1}) + h\phi_k(Y_{k-1})\|_{\ell^2}) \\
    &\leq e^{h\lambda^2}  \ell(\lambda \|Y_{k-1} - h\nabla V(Y_{k-1})\|_{\ell^2} + \lambda \|h\nabla V (Y_{k-1}) + h\phi_k(Y_{k-1})\|_{\ell^2}) \\
    &\leq e^{h\lambda^2} \ell(\lambda (c + hG) \|Y_{k-1}\|_{\ell^2} + \lambda h \delta),
\end{align*}
where we have used the behaviour of the Lyapunov function under Gaussian convolution (\cref{lem:gauss_lyap}), the fact that $\ell$ is monotonically increasing (\cref{lem:ell_monoton}), assumption \Cref{eq:jentzen_assump} and the contractivity of the gradient descent step (\cref{lem:contract}).
Now, note that $(c+hG) < 1$ since for $h < \frac{2}{m+M}$,\begin{align*}
    c + hG = 1-mh +hG < 1.
\end{align*}
The fact that $\ell$ is a log-convex function (\cref{lem:ell_logconvex}) together with the upper bound $\ell(z) \leq \cosh(z)$ (\cref{lem:lyap_bound}) and the fact that $\cosh(z) \leq e^{\frac{z^2}{2}}$ for all $z\in \mathbb{R}$ yields
\begin{equation*}
\begin{aligned}
    \mathbb{E}_{\xi_k}\left[ \mathcal{L}_\lambda (Y_k) \right]
    &\leq e^{h\lambda^2} \ell\left(\lambda \|Y_{k-1}\|\right)^{(c + hG)} \ell\left(\dfrac{\lambda h \delta}{1-(c + hG)}\right)^{1-(c + hG)} \\
    &\leq \exp\left(h\lambda^2 + \frac{\lambda^2h^2 \delta^2}{2(1-(c + hG)
    )} \right) \mathcal{L}_{\lambda}\left(  Y_{k-1}\right)^{(c + hG)}.
\end{aligned}
\end{equation*}
Jensen's inequality further yields
\begin{align*}
\mathbb{E}_{\xi_k}\mathbb{E}_{\xi_{k-1}}\left[ \mathcal{L}_\lambda (Y_k) \right] &\leq \exp\left(h\lambda^2 + \frac{\lambda^2h^2 \delta^2}{2(1-(c + hG)
    )} \right) \left(\mathbb{E}_{\xi_{k-1}} [\mathcal{L}_{\lambda}\left(  Y_{k-1}\right)]\right)^{(c + hG)}.
\end{align*}
Iteratively it holds that
\begin{equation*}
\begin{aligned}
    &\mathbb{E}_{\xi_k,\xi_{k-1},\ldots, \xi_1}\left[ \mathcal{L}_\lambda (Y_k) \right] \\
    &\leq \exp\left(h\lambda^2 + \frac{\lambda^2h^2 \delta^2}{2(1-(c + hG)
    )} \right)^{\sum_{i=0}^{k-1}(c + hG)^i} \left( \mathcal{L}_{\lambda}\left(  Y_{0}\right)\right)^{(c + hG)^k} \\
    &= \exp\left(h\lambda^2 + \frac{\lambda^2h^2 \delta^2}{2(1-(c + hG)
    )} \right)^{\frac{1-(c + hG)^k}{1-(c + hG)}} \left( \mathcal{L}_{\lambda}\left(  Y_{0}\right)\right)^{(c + hG)^k}.
\end{aligned}
\end{equation*}
Finally, taking the expectation with respect to $Y_0\sim \mu_0$, which is sub-Gaussian with variance proxy $\sigma_0^2$ and Jensen's inequality lead to
\begin{equation*}
\begin{aligned}
    \mathbb{E}_{Y_{k}}\left[ \mathcal{L}_\lambda (Y_k) \right] &\leq \exp\left(h\lambda^2 + \frac{\lambda^2h^2 \delta^2}{2(1-(c + hG)
    )} \right)^{\frac{1-(c + hG)^k}{1-(c + hG)}} \left( \mathbb{E}_{Y_0}[\mathcal{L}_{\lambda}\left(  Y_{0}\right)]\right)^{(c + hG)^k} \\
    &\leq \exp\left(\frac{1-(c + hG)^k}{1-(c + hG)}\left( 
h\lambda^2 + \dfrac{\lambda^2h^2\delta^2}{2(1-(c + hG))} \right)+\frac{\sigma_0^2\lambda^2[c+hG]^k}{2}\right).
\end{aligned}
\end{equation*}
 By \cref{rem:lyapunov_subgauss}, $Y_k$ is sub-Gaussian with variance proxy
\begin{equation}
    \sigma_k^2 = \left(\frac{2h}{1-(c + hG)}+h^2\delta^2\right)[1-(c + hG)^k]+\sigma_0^2[c+hG]^k.
\end{equation}
As a consistency check, note that for $k\rightarrow\infty$ we have
\begin{equation*}
    \sigma_k^2 \longrightarrow \dfrac{2h}{1-c-hG}+ h^2 \delta^2,
\end{equation*}
which in the case of ``perfect aproximation'' with $\delta=0$ and $G=0$ leads to the known formula of $\frac{2h}{1-c}$ for the variance proxy of the invariant measure of LMC. 

Since $c+hG < 1$, the sequence of sub-Gaussian proxies is bounded by
\begin{align}
    \sigma^2_k \leq \underbrace{\left(\frac{2h}{1-(c + hG)}+h^2\delta^2\right)}_{\textnormal{proxy of target dist.}}+\underbrace{\sigma_0^2}_{\textnormal{proxy of initial dist.}}.
\end{align}

Sub-Gaussianity for all $t$ can now be shown in exactly the same way as in \cref{remark:subgaussian_LMC} for the standard LMC process.
Recall that the process $Y$ can be written for all $t \in [0,Kh]\setminus \N h$ as
\begin{equation*}
    Y_t = Y_{\chi_h(t)} + h\phi_{\chi_h(t)+1}(Y_{\chi_h(t)}) + \sqrt 2 (W_t - W_{\chi_h(t)}).
\end{equation*}
Therefore, since $Y_{kh}$ is sub-Gaussian for all $k \in \N$, then by linear interpolation, $Y_t$ is sub-Gaussian for all $t \in [0, Kh]$. Indeed, applying \cref{lem:gauss_lyap} with $\sigma^2 = 2(t-\chi_h(t))$ leads to the variance proxy
\begin{equation*}
    \sigma_t^2 = 2(t - \chi_h(t)) + \left(\frac{2h}{1-(c + hG)}+h^2\delta^2\right)[1-(c + hG)^{\chi_h(t)/h}]+\sigma_0^2[c+hG]^{\chi_h(t)/h}
\end{equation*}
for $Y_t$.
\end{proof}

\subsubsection{Neural network driven LMC with approximate drift with global linear error growth}\label{sec:proofsGlobalApprox}

\begin{proof}[Proof of \cref{thm:lmc_approx_ii}]
    \cref{assump:nn_approx_b} guarantees for any $\delta_0 > 0$ the existence of a neural network $\phi_{\delta_0}$ with $N(d,\delta_0,m,M)$ parameters such that
    \begin{align*}
        \| -\nabla V -\phi_{\delta_0} \|_{\ell^2} \leq \delta_0 (1+\|x\|_{\ell^2})
    \end{align*}
    for all $x\in\mathbb{R}^d$. Let $\phi \coloneqq \phi_{\delta}$, where 
    \begin{align*}
         \delta = \varepsilon \left( 1 + \left(2\pi d\sqrt{\dfrac{2}{m}} 
+ \dfrac{4d^2}{m} \right)\left[ 4 + \frac{64}{m(m+M)^2} + m\sigma_0^2
 \right]\right)^{-1/2}.
    \end{align*}
    Let $\Phi \coloneqq \{ \phi \}_{k=0}^{K-1}$ and $Y^{\Phi}\colon \Omega \times [0,Kh]\rightarrow\mathbb{R}^d$ be the stochastic process driven by $\Phi$, i.e.,
\begin{equation*}
    Y^{\Phi}_t = Y_0 + \int_{0}^t \mathcal{R}\phi_{\frac{1}{h}\chi_h(s)}\left(Y^{\Phi}_{\chi_h(s)}\right) \mathrm{d}s + \sqrt{2}W_t.
\end{equation*}
By \cref{prop:subgaussian_nn_process}, $Y^{\Phi}_{kh} \sim \mu^{\Phi}_{kh}$ is sub-Gaussian for all $k=0,\ldots,K$ with variance proxy $\sigma_k^2$ bounded by
\begin{align*}
    \sigma_k^2 \leq \frac{2h}{1-(c + hG)}+h^2\delta^2 +\sigma_0^2.
\end{align*}
We have 
    \begin{equation}
        \begin{aligned}
            \|-\nabla V - \phi_{k+1}\|^2_{L_{\mu^{\Phi}_{kh}}^2(\mathbb{R}^d;\mathbb{R}^d)} &= \int_{\mathbb{R}^d} \|-\nabla V(x) - \phi_{k+1}(x)\|^2_{\ell^2} \mathrm{d}\mu^{\Phi}_{kh}(x) \\
            &\leq \int_{\mathbb{R}^d} \delta^2  + 2\delta^2 \| x \|_{\ell^2} + \delta^2 \|x\|_{\ell^2}^2~\mathrm{d}\mu^{\Phi}_{kh}(x) \\
            &= \delta^2 + 2\delta^2 \mathbb{E}_{Y^{\Phi}_{kh}}[\|Y^{\Phi}_{kh}\|_{\ell^2}] + \delta^2 \mathbb{E}_{Y^{\Phi}_{kh}}[\|Y^{\Phi}_{kh}\|_{\ell^2}^2].
        \end{aligned}
    \end{equation}
    Furthermore, by \cref{prop:lp_subgaussian}, it holds for all $q\in\mathbb{N}$ that
    $\E_{Y^{\Phi}_{kh}}[\|Y^{\Phi}_{kh}\|_{\ell^2}^{q}] \leq (\sqrt 2 d\sigma_k)^q q \Gamma(q/2)$, leading to
    \begin{align*}
        \|-\nabla V - \phi_{k+1}\|^2_{L_{\mu^{\Phi}_{kh}}^2(\mathbb{R}^d;\mathbb{R}^d)} \leq \delta^2 + 2\delta^2 \sqrt{2}d\sigma_k\Gamma(1/2) + 4\delta^2 d^2\sigma_k^2\Gamma(1).
    \end{align*}
    A simple calculation shows that this term is bounded from above by $\varepsilon^2$ for all $k$  with the chosen $\delta$ (see \cref{lem:eps_error_linear_growth}). Applying \cref{thm:nn_approx_lmc} for $h< \frac{2}{m+M}$, we get
    \begin{equation*}
        \mathcal W_2(\mu_{\infty},\mu_{Kh}^{\Phi}) \leq (1-mh)^K\mathcal W_2(\mu_{\infty}, \mu_{0}) + \frac{7\sqrt 2}{6} \frac M m \sqrt{hd} + \dfrac{1-(1-mh)^K}{m}\varepsilon.
    \end{equation*}
    Finally, \cref{prop:resnet_equals_driven_process} guarantees the existence of a network $\psi$ with number of parameters equal to the number of parameters of $\phi$ such that for $\Psi \coloneqq \{\psi\}_{k=0}^{K-1}$ it holds that $\mu^{\Phi}_{Kh} = \mu^{\Psi}$, where $\xi = (\xi_1,\ldots,\xi_{K})$ and $\mathcal{R}\Psi(Y_0,\xi) \sim \mu^{\Psi}$. This yields the claim.\qedhere    
\end{proof}

\subsubsection{Auxiliray results: Lyapunov functions and contractivity}

\begin{lemma}[Monotonicity of $\ell$, {\cite[Lemma 3.4]{altschuler2022concentration}}]\label{lem:ell_monoton}
    For any $d\in\mathbb{N}$ the function $\ell$ in \cref{def:lyapunov_function} is monotonically increasing on $\mathbb{R}_{\geq 0}$.
\end{lemma}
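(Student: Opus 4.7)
The plan is to exploit the rotational symmetry of the uniform measure on $\mathbb{S}_1(0) \subset \mathbb{R}^d$. First, I would observe that the reflection $R\colon v \mapsto v - 2\langle v, e_1\rangle e_1$ maps $\mathbb{S}_1(0)$ to itself isometrically and hence preserves the uniform distribution, while sending $\langle v, e_1\rangle$ to $-\langle v, e_1\rangle$. Therefore the real random variable $X := \langle v, e_1\rangle$ with $v \sim \mathbb{S}_1(0)$ satisfies $X \stackrel{d}{=} -X$.

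The second step is to use this symmetry to rewrite $\ell$ in a manifestly monotone form. Averaging the two equal expectations $\mathbb{E}[e^{zX}]$ and $\mathbb{E}[e^{-zX}]$ gives
\begin{equation*}
    \ell(z) \;=\; \mathbb{E}[e^{zX}] \;=\; \tfrac{1}{2}\mathbb{E}\bigl[e^{zX} + e^{-zX}\bigr] \;=\; \mathbb{E}[\cosh(zX)].
\end{equation*}
Since $\cosh$ is even, $\cosh(zX) = \cosh(z\,|X|)$ for $z \geq 0$, and since $\cosh$ is monotonically increasing on $[0,\infty)$ while $z \mapsto z\,|X|$ is non-decreasing on $\mathbb{R}_{\geq 0}$ for each fixed realization of $X$, the integrand $\cosh(zX)$ is pointwise non-decreasing in $z \geq 0$. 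Monotonicity of expectation then transfers this pointwise monotonicity to $\ell$.

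As an alternative closing argument I could instead differentiate: by dominated convergence $\ell'(z) = \mathbb{E}[X e^{zX}]$, and the same symmetry trick applied to the antisymmetric decomposition yields $\ell'(z) = \mathbb{E}[X\sinh(zX)] \geq 0$, since $X$ and $\sinh(zX)$ share sign for $z \geq 0$. Either route suffices.

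I do not anticipate any genuine obstacle in this proof; the entire substance reduces to the symmetry of $\langle v, e_1\rangle$ under the uniform distribution on the sphere, after which the monotonicity of $\ell$ is an immediate consequence of the monotonicity of $\cosh$ on $[0,\infty)$.
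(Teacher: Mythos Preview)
Your proof is correct. The paper does not supply its own proof of this lemma; it simply cites \cite[Lemma~3.4]{altschuler2022concentration} and moves on. Your argument via the reflection symmetry $X \stackrel{d}{=} -X$ (where $X = \langle v, e_1\rangle$) and the consequent rewriting $\ell(z) = \mathbb{E}[\cosh(zX)] = \mathbb{E}[\cosh(z|X|)]$ is clean, self-contained, and in fact provides more than the paper does here.
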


\begin{lemma}[Log-convexity of $\ell$]\label{lem:ell_logconvex}
    For any $d\in\mathbb{N}$ the function $\ell$ is log-convex.
\end{lemma}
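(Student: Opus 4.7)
The plan is to recognize $\ell$ as a moment generating function and exploit the standard fact that moment generating functions are log-convex (a direct consequence of Hölder's inequality). Concretely, let $X := \langle v, e_1 \rangle$ where $v \sim \mathbb{S}_1(0)$ is drawn uniformly from the unit sphere. Then by definition $\ell(z) = \mathbb{E}[e^{zX}]$, so $\ell$ is precisely the moment generating function of the bounded real random variable $X$ (in particular $\ell(z)$ is finite for every $z \in \mathbb{R}$ since $|X| \leq 1$).

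The main step is to verify the multiplicative convexity inequality
\begin{equation*}
    \ell\bigl(\theta z_1 + (1-\theta) z_2\bigr) \leq \ell(z_1)^{\theta}\, \ell(z_2)^{1-\theta}
\end{equation*}
for all $z_1, z_2 \in \mathbb{R}$ and $\theta \in [0,1]$. This follows by writing $e^{(\theta z_1 + (1-\theta) z_2) X} = \bigl(e^{z_1 X}\bigr)^{\theta} \bigl(e^{z_2 X}\bigr)^{1-\theta}$ and applying Hölder's inequality with conjugate exponents $p = 1/\theta$ and $q = 1/(1-\theta)$ (dealing with the trivial corner cases $\theta \in \{0,1\}$ separately). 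Taking the logarithm of both sides yields the desired convexity of $z \mapsto \log \ell(z)$.

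I do not anticipate any real obstacle: the only subtle point is that $\ell$ must be strictly positive so that $\log \ell$ is well-defined, which is immediate because $e^{zX} > 0$ pointwise and the sphere has positive measure. No further regularity beyond measurability of $X$ is needed, and dimension $d$ enters only through the distribution of $X$ but plays no role in the argument.
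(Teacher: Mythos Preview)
Your proposal is correct and is essentially identical to the paper's own proof: both apply H\"older's inequality with exponents $p=1/\theta$, $q=1/(1-\theta)$ to the factorization $e^{(\theta z_1+(1-\theta)z_2)X}=(e^{z_1X})^\theta(e^{z_2X})^{1-\theta}$ and then take logarithms. Your additional remarks on the corner cases $\theta\in\{0,1\}$ and on the strict positivity of $\ell$ are tidy observations the paper leaves implicit.
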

\begin{proof}
    By H\"older's inequality it holds that
$$\mathbb{E}(UV)\leq (\mathbb{E}|U|^p)^{1/p}(\mathbb{E}|V|^q)^{1/q}$$
for any $1<p,q<\infty$
 with $1/p+1/q=1$. Now, for $\theta\in(0,1)$ let $U = \exp(z_1\theta\langle v, e_1 \rangle )$, $V = \exp(z_2(1-\theta)\langle v, e_1 \rangle )$, $p = 1/\theta$, $q = 1/(1-\theta)$.
 Then 
 \begin{align}
     \ell(\theta z_0 + (1-\theta)z_1) &= \mathbb{E}_{v\sim \mathbb{S}^{2}_1(0)}\left[ 
 e^{(\theta z_0 + (1-\theta)z_1) \langle v,e_1 \rangle} \right] \\
 &\leq \left(\mathbb{E}_{v\sim \mathbb{S}^{2}_1(0)}\left[ 
 e^{z_0\langle v,e_1 \rangle} \right] \right)^{\theta} \left(\mathbb{E}_{v\sim \mathbb{S}^{2}_1(0)}\left[ 
 e^{z_1\langle v,e_1 \rangle} \right] \right)^{1-\theta} \\
 &= \ell(z_0)^{\theta} \ell(z_1)^{1-\theta}.
 \end{align}
 Taking the logarithm on both sides yields
 \begin{equation}
     \log\ell(\theta z_0 + (1-\theta)z_1) \leq \theta\log(\ell(z_0)) + (1-\theta)\log(\ell(z_1)),
 \end{equation}
 showing log-convexity.
\end{proof}

\begin{lemma}[Behavior of the Lyapunov function under Gaussian convolution, see {\cite[Lemma 3.3]{altschuler2022concentration}}]\label{lem:gauss_lyap}
    For any dimension $d\in\mathbb{N}$, point $x\in\mathbb{R}^d$, weight $\lambda > 0$ and noise variance $\sigma^2$ it holds that
    \begin{equation}
        \mathbb{E}_{Z\sim\mathcal{N}(0,\sigma^2I)}\left[ \mathcal{L}_\lambda(x+Z) \right] = e^{\frac{\lambda^2\sigma^2}{2}}\mathcal{L}_\lambda (x).
    \end{equation}
\end{lemma}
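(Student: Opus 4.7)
The plan is to unpack the definition of $\mathcal{L}_\lambda$, swap the two expectations via Fubini, and then evaluate the inner Gaussian moment generating function. The crucial observation is that for a unit vector $v$, the random variable $\langle v, Z\rangle$ with $Z\sim\mathcal{N}(0,\sigma^2 I)$ is one-dimensional Gaussian with variance exactly $\sigma^2$, independent of $v$. This makes the inner integral trivial and yields a constant factor that pulls out of the outer expectation over $v$.

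More concretely, I would start by writing
\begin{equation*}
\mathbb{E}_{Z}\left[\mathcal{L}_\lambda(x+Z)\right] = \mathbb{E}_{Z}\mathbb{E}_{v\sim\mathbb{S}_1(0)}\left[e^{\lambda \langle v, x+Z\rangle}\right] = \mathbb{E}_{Z}\mathbb{E}_{v\sim\mathbb{S}_1(0)}\left[e^{\lambda \langle v, x\rangle}\, e^{\lambda \langle v, Z\rangle}\right].
\end{equation*}
Since the integrand is strictly positive and both the sphere (compact) and the Gaussian law are $\sigma$-finite probability measures, Tonelli's theorem lets me exchange the order of integration, giving
\begin{equation*}
\mathbb{E}_{Z}\left[\mathcal{L}_\lambda(x+Z)\right] = \mathbb{E}_{v\sim\mathbb{S}_1(0)}\left[e^{\lambda\langle v,x\rangle}\,\mathbb{E}_{Z}\!\left[e^{\lambda\langle v,Z\rangle}\right]\right].
\end{equation*}

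Next, for fixed $v\in\mathbb{S}_1(0)$, I observe that $\langle v,Z\rangle$ is a real Gaussian with mean $0$ and variance $v^\top (\sigma^2 I) v = \sigma^2\|v\|_{\ell^2}^2 = \sigma^2$. Using the standard MGF formula for a real Gaussian $\mathbb{E}[e^{tN}] = e^{t^2\sigma^2/2}$ with $t = \lambda$ (which is deterministic here) gives
\begin{equation*}
\mathbb{E}_{Z}\!\left[e^{\lambda\langle v,Z\rangle}\right] = e^{\lambda^2\sigma^2/2}.
\end{equation*}
Since this value does not depend on $v$, it pulls out of the expectation over the sphere, and the remaining factor is exactly $\mathcal{L}_\lambda(x)$, yielding the claim
\begin{equation*}
\mathbb{E}_{Z}\!\left[\mathcal{L}_\lambda(x+Z)\right] = e^{\lambda^2\sigma^2/2}\,\mathbb{E}_{v\sim\mathbb{S}_1(0)}\!\left[e^{\lambda\langle v,x\rangle}\right] = e^{\lambda^2\sigma^2/2}\,\mathcal{L}_\lambda(x).
\end{equation*}

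There is no real obstacle here; the only point requiring a moment's care is justifying the use of Fubini/Tonelli, which is immediate from the nonnegativity of the exponential integrand. Everything else is a one-line application of the Gaussian MGF, with the key structural fact being that the unit-norm constraint on $v$ makes the variance of $\langle v,Z\rangle$ independent of $v$, so the factor $e^{\lambda^2\sigma^2/2}$ is truly constant and not merely an upper bound.
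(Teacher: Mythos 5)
Your proof is correct: Tonelli is justified by the nonnegative integrand, $\langle v,Z\rangle\sim\mathcal N(0,\sigma^2)$ for every unit vector $v$, and the resulting MGF factor $e^{\lambda^2\sigma^2/2}$ is independent of $v$ and pulls out, leaving $\mathcal{L}_\lambda(x)$. The paper gives no proof of this lemma and simply cites \cite[Lemma~3.3]{altschuler2022concentration}, where the argument is essentially the same as yours, so there is nothing further to reconcile.
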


In order to derive upper and lower bounds on the Lyapunov function, an explicit formula is useful.

\begin{lemma}[Explicit formula for the Lyapunov function, see {\cite[Lemma 3.2]{altschuler2022concentration}}]\label{lem:exp_lyap}
    For any dimensions $d\geq 2$ and argument $z > 0$ it holds that
    \begin{equation}
        \ell(z) = \Gamma(\alpha + 1)\cdot \left( 
\dfrac{2}{z} \right)^\alpha \cdot I_\alpha (z),
    \end{equation}
    where $\alpha = (d-2)/d$, $\Gamma$ is the Gamma function and $I_n$ is the modified Bessel function of the first kind. For $d=1$, it holds that $\ell(z) = \frac{1}{2}(e^{-z}+e^z) = \cosh(z)$.
\end{lemma}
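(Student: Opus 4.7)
The plan is to reduce the sphere average defining $\ell(z)$ to a one-dimensional integral by exploiting rotational invariance, and then recognize the resulting integral as the classical Poisson representation of the modified Bessel function $I_\alpha$.

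First I would observe that, because the uniform measure on $\mathbb{S}_1(0) \subset \mathbb{R}^d$ is invariant under all rotations, the law of $t := \langle v, e_1 \rangle$ depends only on the fact that $e_1$ has unit length. Slicing the sphere by hyperplanes orthogonal to $e_1$ (equivalently, using spherical coordinates with polar angle $\theta$, $t = \cos\theta$) gives that $t$ has density
\begin{equation*}
    p(t) = \frac{\Gamma(d/2)}{\sqrt{\pi}\,\Gamma((d-1)/2)}(1-t^2)^{(d-3)/2}, \qquad t \in [-1,1],
\end{equation*}
where the normalising constant follows from the Beta-function identity $\int_{-1}^1 (1-t^2)^{\beta - 1/2}\,\mathrm{d}t = \sqrt{\pi}\,\Gamma(\beta+1/2)/\Gamma(\beta+1)$ applied with $\beta = (d-2)/2$. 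Consequently
\begin{equation*}
    \ell(z) = \mathbb{E}_{v \sim \mathbb{S}_1(0)}[e^{z\langle v, e_1\rangle}] = \frac{\Gamma(d/2)}{\sqrt{\pi}\,\Gamma((d-1)/2)} \int_{-1}^1 e^{zt}(1-t^2)^{(d-3)/2}\,\mathrm{d}t.
\end{equation*}

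Next I would invoke the Poisson integral representation of the modified Bessel function of the first kind,
\begin{equation*}
    I_\nu(z) = \frac{(z/2)^\nu}{\sqrt{\pi}\,\Gamma(\nu + 1/2)} \int_{-1}^1 e^{zt}(1-t^2)^{\nu - 1/2}\,\mathrm{d}t, \qquad \operatorname{Re}\nu > -1/2,
\end{equation*}
and choose $\nu = \alpha = (d-2)/2$, so that $\nu - 1/2 = (d-3)/2$ exactly matches the exponent appearing above. Solving this identity for the integral and substituting it into the formula for $\ell(z)$, the factors $\sqrt{\pi}\,\Gamma((d-1)/2) = \sqrt{\pi}\,\Gamma(\nu + 1/2)$ cancel, leaving
\begin{equation*}
    \ell(z) = \Gamma(d/2)\,(2/z)^{(d-2)/2}\,I_{(d-2)/2}(z) = \Gamma(\alpha+1)\,(2/z)^\alpha\,I_\alpha(z),
\end{equation*}
which is the claim for $d \geq 2$. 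The base case $d=1$ must be treated separately: here $\mathbb{S}_1(0) = \{-1, +1\}$ with each point carrying mass $1/2$, so $\ell(z) = \tfrac{1}{2}(e^z + e^{-z}) = \cosh(z)$ directly from the definition.

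The argument is essentially bookkeeping once the right identity is picked; the only delicate point is matching the exponents and normalisation between the spherical marginal $p(t)$ and the Poisson kernel in the Bessel representation, so one must be careful that the Gamma-function prefactors align. A sanity check at $d = 3$ (where $p$ is uniform on $[-1,1]$, $\ell(z) = \sinh(z)/z$, and $I_{1/2}(z) = \sqrt{2/(\pi z)}\,\sinh(z)$) confirms the algebra, as does $d=2$ (where the formula collapses to the well-known $\ell(z) = I_0(z)$). No further approximation or analytic continuation is needed since the identity holds pointwise for all $z > 0$.
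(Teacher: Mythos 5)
Your argument is correct and complete: the reduction of the sphere average to the one-dimensional marginal with density proportional to $(1-t^2)^{(d-3)/2}$, the Beta-function normalisation, and the Poisson integral representation of $I_\nu$ combine exactly as you say, and your sanity checks at $d=2,3$ and the separate treatment of $d=1$ are all sound. Note, however, that the paper itself gives no proof of this lemma --- it is quoted verbatim from Lemma~3.2 of the cited reference \cite{altschuler2022concentration}, whose proof is the same classical computation you carried out --- so there is no in-paper argument to compare against; your contribution is a self-contained derivation of an imported fact. One discrepancy worth making explicit: your derivation (correctly) produces $\alpha = (d-2)/2$, i.e.\ $\Gamma(d/2)\,(2/z)^{(d-2)/2} I_{(d-2)/2}(z)$, whereas the lemma as printed states $\alpha = (d-2)/d$. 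The printed exponent is a typo (it already fails at $d=3$, where the true value is $\sinh(z)/z$, matching $\nu=1/2$, not $\nu=1/3$), and the original source indeed states $\alpha=(d-2)/2$; you silently corrected it, but since you are proving the literal statement you should flag that the claim holds only with $\alpha=(d-2)/2$. Nothing else in the paper is affected, since downstream only the monotonicity, log-convexity and the bounds $1 \leq \ell(z) \leq \cosh(z)$ are used, and those are insensitive to the exact value of $\alpha$.
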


The following bounds are shown in \cite{LUKE197241}.

\begin{lemma}[Lower and upper bound of the Lyapunov function \cite{LUKE197241}]\label{lem:lyap_bound}
    For $z > 0$ and $\alpha > -\frac{1}{2}$ it holds that
    \begin{equation}
        1 < \Gamma(\alpha + 1)\cdot \left( 
\dfrac{2}{z} \right)^\alpha \cdot I_\alpha (z) < \cosh(z).
    \end{equation}
    In particular it holds for all $d$ and $z > 0$ that
    \begin{equation}
        1 \leq \ell(z) \leq \cosh(z).
    \end{equation}
\end{lemma}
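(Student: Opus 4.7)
The plan is to work directly from the series representation of the modified Bessel function of the first kind,
\begin{equation*}
I_\alpha(z) = \sum_{k=0}^{\infty} \frac{1}{k!\,\Gamma(\alpha+k+1)} \left(\frac{z}{2}\right)^{2k+\alpha},
\end{equation*}
which converges for all $z \in \mathbb{R}$. Substituting this into the expression of interest and factoring out $(z/2)^\alpha$ yields the power series
\begin{equation*}
F_\alpha(z) \;:=\; \Gamma(\alpha+1)\left(\frac{2}{z}\right)^\alpha I_\alpha(z) \;=\; \sum_{k=0}^{\infty} a_k(\alpha)\,\left(\frac{z}{2}\right)^{2k}, \qquad a_k(\alpha) := \frac{\Gamma(\alpha+1)}{k!\,\Gamma(\alpha+k+1)}.
\end{equation*}
All coefficients $a_k(\alpha)$ are strictly positive for $\alpha > -1/2$, and the $k=0$ term equals $1$; this immediately gives the strict lower bound $F_\alpha(z) > 1$ for $z > 0$.

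For the upper bound, I would compare $F_\alpha$ termwise with the Taylor series $\cosh(z) = \sum_{k\geq 0} z^{2k}/(2k)!$. The inequality $F_\alpha(z) < \cosh(z)$ reduces to checking, for each $k \geq 1$,
\begin{equation*}
\frac{a_k(\alpha)}{4^k} < \frac{1}{(2k)!}, \quad \text{i.e.,} \quad \frac{(2k)!}{4^k\,k!} \;<\; (\alpha+1)(\alpha+2)\cdots(\alpha+k).
\end{equation*}
Using the elementary identity $(2k)!/(2^k k!) = (2k-1)!! = 1\cdot 3 \cdots (2k-1)$, the left-hand side becomes $(2k-1)!!/2^k = \prod_{j=1}^{k}\frac{2j-1}{2}$. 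Since $\alpha > -1/2$ implies $\alpha + j > (2j-1)/2$ for every $j\geq 1$, taking the product over $j=1,\dots,k$ yields the required strict inequality. Summing over $k \geq 1$ (the $k=0$ terms agree) then gives $F_\alpha(z) < \cosh(z)$ for all $z > 0$.

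Finally, for the ``in particular'' statement about $\ell$, I would invoke the explicit formula from \cref{lem:exp_lyap}: for $d \geq 2$ one has $\ell(z) = F_\alpha(z)$ with $\alpha = (d-2)/d \in [0, 1)$, which satisfies $\alpha > -1/2$, so the bounds just established apply (in non-strict form, handling $z=0$ by continuity since $F_\alpha(0) = 1$). For $d = 1$, \cref{lem:exp_lyap} gives $\ell(z) = \cosh(z)$, making both bounds trivial (the upper bound is an equality, and $\cosh(z) \geq 1$ for all $z$). The main subtlety is the termwise comparison argument; the key observation that makes it clean is that the condition $\alpha > -1/2$ is exactly what is needed to ensure $\alpha + j > (2j-1)/2$ for all $j \geq 1$, so the threshold on $\alpha$ appears naturally rather than as an ad hoc hypothesis.
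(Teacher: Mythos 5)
Your proof is correct, but note that the paper does not prove this lemma at all: it is quoted directly from the literature (Luke's 1972 inequalities for the modified Bessel functions), so your argument supplies a self-contained derivation where the paper simply cites. Your route — expanding $\Gamma(\alpha+1)(2/z)^\alpha I_\alpha(z)$ as the power series $\sum_{k\geq 0} \frac{\Gamma(\alpha+1)}{k!\,\Gamma(\alpha+k+1)}(z/2)^{2k}$, reading off the lower bound from positivity of the coefficients and the constant term $1$, and obtaining the upper bound by the termwise comparison $\frac{(2k)!}{4^k k!}=\prod_{j=1}^k \frac{2j-1}{2} < \prod_{j=1}^k(\alpha+j)$, which holds factor by factor precisely because $\alpha > -\frac12$ — is sound; the reduction via $(2k)! = 2^k k!\,(2k-1)!!$ is the standard identity, all factors involved are positive so the strict product inequality is legitimate, and summing against $\cosh(z)=\sum_k z^{2k}/(2k)!$ gives strictness for $z>0$. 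The handling of the "in particular" part also matches the paper's usage: for $d\geq 2$ you apply the bound to the $\alpha$ from \cref{lem:exp_lyap}, which satisfies $\alpha > -\frac12$, and for $d=1$ one has $\ell = \cosh$ so the bounds are trivial. What your approach buys is transparency (it shows the threshold $\alpha>-\frac12$ is exactly where the termwise comparison is tight, consistent with $I_{-1/2}(z)=\sqrt{2/(\pi z)}\cosh z$ giving equality), at the cost of a page of elementary computation that the paper avoids by citation.
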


\begin{lemma}[Contractivity of gradient descent step, {\cite[Lemma 4.2]{altschuler2022concentration}}]\label{lem:contract}
Suppose $V$ is $m$-strongly convex and has $M$-Lipschitz gradient and let $h\in(0,\frac{2}{M})$. Then it holds for all $x\in\mathbb{R}^d$ that
\begin{equation}
    \| x - h\nabla V (x) - x^\ast \|_{\ell^2} \leq c\| x-x^\ast \|_{\ell^2},
\end{equation}
where $x^\ast$ is any minimizer of $V$ and $c \coloneqq \max_{\rho\in\{m,M\}}|1-\rho h| < 1$.
\end{lemma}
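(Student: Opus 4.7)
The goal is to prove Lemma~\ref{lem:contract}: for $m$-strongly convex $V$ with $M$-Lipschitz gradient and $h \in (0, 2/M)$, the map $T(x) := x - h\nabla V(x)$ contracts distances to the minimizer $x^\ast$ by a factor $c = \max_{\rho \in \{m,M\}}|1-\rho h|$.

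The plan is the standard expansion trick, combined with Nesterov's coupled strong-monotonicity/co-coercivity inequality. First I would exploit $\nabla V(x^\ast) = 0$ to write
\begin{equation*}
T(x) - x^\ast = A - hB, \qquad A := x - x^\ast, \qquad B := \nabla V(x) - \nabla V(x^\ast),
\end{equation*}
so that $\|T(x) - x^\ast\|_{\ell^2}^2 = \|A\|_{\ell^2}^2 - 2h\langle A, B\rangle + h^2 \|B\|_{\ell^2}^2$. The central ingredient is the Nesterov inequality (a standard consequence of Baillon--Haddad combined with $m$-strong convexity, see Nesterov, \emph{Introductory Lectures on Convex Optimization}, Thm.~2.1.12):
\begin{equation*}
\langle A, B\rangle \;\geq\; \frac{mM}{m+M}\|A\|_{\ell^2}^2 + \frac{1}{m+M}\|B\|_{\ell^2}^2.
\end{equation*}
Substituting this bound yields
\begin{equation*}
\|T(x)-x^\ast\|_{\ell^2}^2 \;\leq\; \left(1 - \tfrac{2hmM}{m+M}\right)\|A\|_{\ell^2}^2 \;+\; h\left(h - \tfrac{2}{m+M}\right)\|B\|_{\ell^2}^2.
\end{equation*}

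Next I would split on the sign of $h - 2/(m+M)$, which is exactly the threshold appearing in $c$. If $h \leq 2/(m+M)$, the coefficient of $\|B\|_{\ell^2}^2$ is non-positive, so I replace $\|B\|_{\ell^2}^2$ from below by $m^2\|A\|_{\ell^2}^2$ (which follows from strong monotonicity and Cauchy--Schwarz); a short computation collapses the bracket to $1 - 2mh + m^2h^2 = (1-mh)^2$. If instead $h \geq 2/(m+M)$, the coefficient of $\|B\|_{\ell^2}^2$ is non-negative, so I replace $\|B\|_{\ell^2}^2$ from above by $M^2\|A\|_{\ell^2}^2$ via Lipschitzness of $\nabla V$, and the bracket collapses to $1 - 2Mh + M^2h^2 = (Mh-1)^2$. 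In both cases the right-hand side equals $c^2\|A\|_{\ell^2}^2$, which is the claim upon taking square roots (noting $Mh - 1 \geq 0$ when $h \geq 2/(m+M) \geq 1/M$).

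The only genuinely non-trivial step is the Nesterov inequality, which I would either cite from the book above or derive in two lines: apply Baillon--Haddad to $V - \tfrac{m}{2}\|\cdot\|^2$ (convex with $(M-m)$-Lipschitz gradient) to obtain $\langle A, B - mA\rangle \geq \tfrac{1}{M-m}\|B - mA\|_{\ell^2}^2$, then expand and rearrange. The subsequent case split and algebraic simplification are routine, so I do not expect any further obstacle.
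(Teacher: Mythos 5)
Your proof is correct. Note that the paper does not prove this lemma at all: it is imported verbatim as Lemma~4.2 of \cite{altschuler2022concentration}, so there is no in-paper argument to compare against. Your write-up is the standard textbook proof of gradient-descent contractivity: expand $\|A-hB\|_{\ell^2}^2$ with $A=x-x^\ast$, $B=\nabla V(x)-\nabla V(x^\ast)$ (using $\nabla V(x^\ast)=0$), invoke Nesterov's coupled inequality $\langle A,B\rangle \geq \frac{mM}{m+M}\|A\|_{\ell^2}^2+\frac{1}{m+M}\|B\|_{\ell^2}^2$, and split on the sign of $h-\frac{2}{m+M}$, bounding $\|B\|_{\ell^2}^2$ below by $m^2\|A\|_{\ell^2}^2$ (strong monotonicity plus Cauchy--Schwarz) or above by $M^2\|A\|_{\ell^2}^2$ (Lipschitzness); the algebra collapsing to $(1-mh)^2$ and $(Mh-1)^2$ checks out, and both are dominated by $c^2$. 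Two cosmetic points you may wish to add for completeness: the derivation of the Nesterov inequality via Baillon--Haddad applied to $V-\frac m2\|\cdot\|_{\ell^2}^2$ requires the degenerate case $M=m$ to be handled separately (there $\nabla V$ is exactly $m(\cdot-x^\ast)$ by monotonicity plus Lipschitzness, and the claim is immediate), and the assertion $c<1$ in the statement should be noted explicitly, which follows from $0<mh\leq Mh<2$ for $h\in(0,\frac 2M)$ and $m\leq M$.
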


\begin{lemma}[Achieving $\varepsilon$-error with linear error growth assumption]\label{lem:eps_error_linear_growth}
Let $\varepsilon > 0$. Let $\delta \leq \frac{m}{2}$ satisfy
\begin{equation*}
   \delta \leq  \varepsilon \left( 1 + \left(2\pi d\sqrt{\dfrac{2}{m}} 
+ \dfrac{4d^2}{m} \right)\left[ 4 + \frac{64}{m(m+M)^2} + m\sigma_0^2
 \right]\right)^{-\frac{1}{2}}.
\end{equation*}
    Let $\phi\colon \mathbb{R}^d \rightarrow \mathbb{R}^d$ satisfy
    \begin{equation}
        \|-\nabla V(x) - \phi(x)\|_{\ell^2} \leq \delta (1 + \| x \|_{\ell^2})
    \end{equation}
    for all $x\in\mathbb{R}^d$. Let $h\in (0,\frac{2}{m+M})$ and $\Phi = \{\phi_k=\phi\}_{k\in\mathbb{N}}$. Then, the stochastic process $Y^{\Phi}$ driven by $\Phi$, given by \cref{eq:driven_process}, with $Y^{\Phi}_0 \sim \mu_0$ has intermediate measures $Y^{\Phi}_{ih}\sim \mu^{\Phi}_{ih}$ satisfying for all $i\in\mathbb{N}$ that
    \begin{equation}
         \|-\nabla V - \phi_{i+1}\|_{L_{\mu^{\Phi}_{ih}}^2(\mathbb{R}^d;\mathbb{R}^d)} \leq \varepsilon.
    \end{equation}
\end{lemma}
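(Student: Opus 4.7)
The plan is to estimate the $L^2_{\mu^\Phi_{ih}}$ error by expanding the square of the pointwise linear-growth bound on $\|-\nabla V - \phi\|_{\ell^2}$ and then controlling the first and second moments of $\|Y^\Phi_{ih}\|_{\ell^2}$ via the sub-Gaussianity of the iterates guaranteed by \cref{prop:subgaussian_nn_process}. The self-consistency question (the sub-Gaussian variance proxy itself depends on $\delta$) is resolved by the assumption $\delta \leq m/2$, which ensures both the $G < m$ hypothesis of \cref{prop:subgaussian_nn_process} with $G = \delta$ and a uniform (in $i$) bound on $\sigma_i^2$.

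First, I would square the pointwise inequality to obtain
\begin{equation*}
    \|-\nabla V - \phi\|_{L^2_{\mu^\Phi_{ih}}(\mathbb{R}^d;\mathbb{R}^d)}^2 \leq \delta^2\,\mathbb{E}\bigl[(1+\|Y^\Phi_{ih}\|_{\ell^2})^2\bigr] = \delta^2\bigl(1 + 2\,\mathbb{E}[\|Y^\Phi_{ih}\|_{\ell^2}] + \mathbb{E}[\|Y^\Phi_{ih}\|_{\ell^2}^2]\bigr).
\end{equation*}
Next, I would invoke \cref{prop:subgaussian_nn_process} with the identification $G = \delta$ and shift $\delta$ coming directly from the linear-growth assumption. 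The precondition $G < m$ is immediate from $\delta \leq m/2$, and since $h \in (0, \tfrac{2}{m+M})$, the contractivity constant is $c = 1 - mh$. Then
$1-(c+h\delta) = h(m-\delta) \geq hm/2,$
so that $\sigma_i^2 \leq \tfrac{2h}{1-(c+h\delta)} + h^2\delta^2 + \sigma_0^2 \leq \tfrac{4}{m} + h^2\delta^2 + \sigma_0^2$, uniformly in $i$.

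Third, I would use \cref{prop:lp_subgaussian} to convert sub-Gaussianity with variance proxy $\sigma_i^2$ into the moment bounds $\mathbb{E}[\|Y^\Phi_{ih}\|_{\ell^2}] \leq \sqrt{2\pi}\,d\,\sigma_i$ (from $q=1$, using $\Gamma(1/2)=\sqrt{\pi}$) and $\mathbb{E}[\|Y^\Phi_{ih}\|_{\ell^2}^2] \leq 4d^2\sigma_i^2$ (from $q=2$, using $\Gamma(1)=1$). Plugging these into the expansion yields
\begin{equation*}
    \|-\nabla V - \phi\|_{L^2_{\mu^\Phi_{ih}}}^2 \leq \delta^2\bigl(1 + 2\sqrt{2\pi}\,d\,\sigma_i + 4d^2\sigma_i^2\bigr).
\end{equation*}

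Finally, I would substitute the uniform bound on $\sigma_i^2$. Using $h \leq 2/(m+M)$ and $\delta \leq m/2$ to control $h^2\delta^2$, and bounding $\sigma_i \leq \sqrt{\sigma_i^2}$, the bracketed quantity admits an upper estimate of the form $1 + \bigl(2\pi d\sqrt{2/m} + 4d^2/m\bigr)\bigl[4 + \tfrac{64}{m(m+M)^2} + m\sigma_0^2\bigr]$. Multiplication by $\delta^2$ and the hypothesized inequality on $\delta$ then give the desired bound $\leq \varepsilon^2$. The main obstacle is the last step: the algebra is routine but the constants are tight, and one has to match the first-moment contribution (of order $\sigma_i/\sqrt{m}$ scaling) with the coefficient $2\pi d\sqrt{2/m}$ while using the crude estimate $\sigma_i^2 \leq C/m$ (with $C$ as in the bracketed quantity), exploiting $C \geq 4 > 1/\pi$ to absorb the $\sqrt{C}$ factor into $C$. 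This is the only place where care must be taken; the rest is a direct application of the two earlier propositions.
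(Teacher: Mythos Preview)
Your proposal is correct and follows essentially the same approach as the paper: expand the squared pointwise bound, invoke \cref{prop:subgaussian_nn_process} with $G=\delta$ (using $\delta\le m/2$ to verify $G<m$ and to simplify $1-(c+h\delta)=h(m-\delta)\ge hm/2$), apply the moment bounds of \cref{prop:lp_subgaussian}, and then do the algebra to match the constant in the hypothesis on $\delta$. Your observation that the square-root first-moment contribution is absorbed into the full bracket via $[\cdot]^{1/2}\le[\cdot]$ (valid since the bracket is $\ge 4$) is exactly the move the paper makes.
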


\begin{proof}
   We have 
    \begin{equation}
        \begin{aligned}
            \|-\nabla V - \phi_{i+1}\|^2_{L_{\mu^{\Phi}_{ih}}^2(\mathbb{R}^d;\mathbb{R}^d)} &= \int_{\mathbb{R}^d} \|-\nabla V(x) - \phi_{i+1}(x)\|^2_{\ell^2} \mathrm{d}\mu^{\Phi}_{ih}(x) \\
            &\leq \int_{\mathbb{R}^d} \delta^2  + 2\delta^2 \| x \|_{\ell^2} + \delta^2 \|x\|_{\ell^2}^2~\mathrm{d}\mu^{\Phi}_{ih}(x) \\
            &= \delta^2 + 2\delta^2 \mathbb{E}_{X\sim \mu^{\Phi}_{ih}}[\|X\|_{\ell^2}] + \delta^2 \mathbb{E}_{X\sim \mu^{\Phi}_{ih}}[\|X\|_{\ell^2}^2].
        \end{aligned}
    \end{equation}
    Now, by \cref{prop:subgaussian_nn_process} it holds that $\mu^{\Phi}_{ih}$ is sub-Gaussian with variance proxy 
    \begin{equation}
        \sigma_i \leq \frac{2h}{1-(c + h\delta)}+h^2\delta^2 + \sigma_0^2.
    \end{equation}
    Furthermore, by \cref{prop:lp_subgaussian}, it holds for all $q\in\mathbb{N}$ that
    $\E_{X\sim \mu^{\Phi}_{ih}}[\|X\|_{\ell^2}^{q}] \leq (\sqrt 2 d\sigma_i)^q q \Gamma(q/2)$. Hence,
    \begin{equation}
    \begin{aligned}
        &\|-\nabla V - \phi_{i+1}\|^2_{L_{\mu^{\Phi}_{ih}}^2(\mathbb{R}^d;\mathbb{R}^d)} \leq \delta^2 + 2\delta^2 \sqrt{2}d\sigma_i\Gamma(1/2) + \delta^2 4d^2\sigma_i^2\Gamma(1) \\ 
        &\leq \delta^2 + 2\delta^2 \sqrt{2}d\left[ \frac{2h}{1-(c + h\delta)}+h^2\delta^2 + \sigma_0^2
         \right]^{\frac{1}{2}}\Gamma(1/2) \\
         &~~~+ \delta^2 4d^2\left[ \frac{2h}{1-(c + h\delta)}+h^2\delta^2 + \sigma_0^2
         \right]\Gamma(1) \\
         &= \delta^2 + 2\delta^2 \sqrt{2}d\left[ \frac{2h}{1-(c + h\delta)}+h^2\delta^2 + \sigma_0^2
         \right]^{\frac{1}{2}}\sqrt{\pi} + \delta^2 4d^2\left[ \left(\frac{2h}{1-(c + h\delta)}+h^2\delta^2\right) + \sigma_0^2
         \right].
    \end{aligned}
    \end{equation}
   Note that $h < \frac{2}{m+M}$ implies $c = 1 - mh$. Since $\delta < \frac{m}{2}$, it holds that
   \begin{align*}
       \frac{2h}{1-(c + h\delta)} = \dfrac{2}{m-\delta} < \dfrac{2}{m-\frac{m}{2}} = \dfrac{4}{m}.
   \end{align*}
   Hence,
   \ifnum\classstyle=1
   \begin{equation}
       \begin{aligned}
           \|-\nabla V - \phi_{i+1}\|^2_{L_{\mu^{\Phi}_{ih}}^2(\mathbb{R}^d;\mathbb{R}^d)} &\leq \delta^2 + 2\delta^2 \sqrt{\dfrac{2\pi}{m}}d\left[ 4 + mh^2\delta^2 + m\sigma_0^2 \right]^{\frac{1}{2}}\sqrt{\pi} + \delta^2 \dfrac{4}{m}d^2\left[ 4 + mh^2\delta^2 + m\sigma_0^2 \right] \\
            &\leq  \delta^2 + 2\delta^2d \sqrt{\dfrac{2\pi}{m}}\left[ 4 + 16h^2/m + m\sigma_0^2 \right]^{\frac{1}{2}}\sqrt{\pi} + \delta^2 \dfrac{4}{m}d^2\left[ 4 + 16h^2/m + m\sigma_0^2 \right] \\
            &= \delta^2\left(1 + 2\pi d\sqrt{\dfrac{2}{m}}\left[ 4 + 16h^2/m + m\sigma_0^2 \right]^{\frac{1}{2}} + \dfrac{4d^2}{m}\left[ 4 + 16h^2/m + m\sigma_0^2 \right]\right) \\
            &\leq \delta^2 \left( 1 + \left(2\pi d\sqrt{\dfrac{2}{m}}+ \dfrac{4d^2}{m} \right)\left[ 4 + 16h^2/m + m\sigma_0^2 \right] \right) \\
            &\leq \delta^2 \left( 1 + \left(2\pi d\sqrt{\dfrac{2}{m}} + \dfrac{4d^2}{m} \right)\left[ 4 + \frac{64}{m(m+M)^2} + m\sigma_0^2\right]\right) \\
            &\leq \varepsilon^2,
       \end{aligned}
   \end{equation}
   \fi
   \ifnum\classstyle=0
   \begin{equation}
       \begin{aligned}
           \|-\nabla V - &\phi_{i+1}\|^2_{L_{\mu^{\Phi}_{ih}}^2(\mathbb{R}^d;\mathbb{R}^d)}\\
           &\leq \delta^2 + 2\delta^2 \sqrt{\dfrac{2\pi}{m}}d\left[ 4 + mh^2\delta^2 + m\sigma_0^2 \right]^{\frac{1}{2}}\sqrt{\pi} + \delta^2 \dfrac{4}{m}d^2\left[ 4 + mh^2\delta^2 + m\sigma_0^2 \right] \\
            &\leq  \delta^2 + 2\delta^2d \sqrt{\dfrac{2\pi}{m}}\left[ 4 + 16h^2/m + m\sigma_0^2 \right]^{\frac{1}{2}}\sqrt{\pi} + \delta^2 \dfrac{4}{m}d^2\left[ 4 + 16h^2/m + m\sigma_0^2 \right] \\
            &= \delta^2\left(1 + 2\pi d\sqrt{\dfrac{2}{m}}\left[ 4 + 16h^2/m + m\sigma_0^2 \right]^{\frac{1}{2}} + \dfrac{4d^2}{m}\left[ 4 + 16h^2/m + m\sigma_0^2 \right]\right) \\
            &\leq \delta^2 \left( 1 + \left(2\pi d\sqrt{\dfrac{2}{m}}+ \dfrac{4d^2}{m} \right)\left[ 4 + 16h^2/m + m\sigma_0^2 \right] \right) \\
            &\leq \delta^2 \left( 1 + \left(2\pi d\sqrt{\dfrac{2}{m}} + \dfrac{4d^2}{m} \right)\left[ 4 + \frac{64}{m(m+M)^2} + m\sigma_0^2\right]\right) \\
            &\leq \varepsilon^2,
       \end{aligned}
   \end{equation}
   \fi
with the choice of $\delta$ in the lemma.
\end{proof}

\subsection{Approximation of Langevin Monte Carlo under local error- and Lipschitz constraints}\label{sec:proofsLocalApprox}

\begin{proof}[Proof of \cref{thm:linear_approx}]
Since the minimizer of $V$ is given by $0$, we have that $\nabla V(0)=0$. The strong convexity then implies for all $x,z\in\mathbb{R}^d$ that
    \begin{align*}
        V(x) \geq m/2 \|x\|_{\ell^2}^2 + V(0) 
    \end{align*}
    and
    \begin{align*}
         -\langle \nabla V(x), z-x \rangle \geq m/2 \| z-x \|_{\ell^2}^2 - V(z) + V(x).
    \end{align*}
    We get for $z=0$ that
    \begin{align*}
        \langle \nabla V(x), x \rangle \geq m/2 \| x \|_{\ell^2}^2  - V (0) + V(x)
    \end{align*}
    and hence
    \begin{align*}
        \langle \nabla V(x), x \rangle \geq m \| x \|_{\ell^2}^2
    \end{align*}
    for all $x\in\mathbb{R}^d$.
 With the Lipschitz continuity, we get for any $a>0$ that
    \begin{align*}
        \| \nabla V (x) - ax \|_{\ell^2}^2 &= \| \nabla V(x) \|_{\ell^2}^2 - 2a \langle \nabla V(x), x \rangle + a^2 \| x \|_{\ell^2}^2\\
        &\leq (M^2 + a^2) \| x\|^2 - 2a m \| x\|^2\\ %+ 2aV(0)\\
        &= (M^2 + a^2 - 2am) \| x\|^2. %+ 2aV(0).
    \end{align*}
    The inequality $g \coloneqq \sqrt{M^2 + a^2 - 2am} < m$ is fulfilled for $a = m$, if $0<M<\sqrt{2}m$. In this case, $g = \sqrt{M^2 - m^2}$.
\end{proof}

\begin{proof}[Proof of \cref{thm: bounded network}]
    We construct a bounded neural network $\phi_L$, which does not change the approximation on $\Omega$. In the proof, a network $\phi$ and its realization $\mathcal{R}\phi$ are denoted by $\phi$ to avoid overloading notation.
    
    Let $c_i \coloneqq \|\nabla V_i\|_{L^\infty(\Omega)} \geq 0$.
    Define the neural networks $\phi_{L-1}(x) = -\sigma( - \phi_{L-2}(x) + c) + c$ and $\phi_L(x) = \sigma(\phi_{L-1}(x) + c) - c$.
    We show that $\|\phi_L(x)_i\|_{L^\infty(\Omega)}\leq c_i$ for all $x\in\mathbb{R}^d$ by contradiction. 
    \begin{itemize}
        \item First, assume that there exists $i\in[d]$ and $x\in\mathbb{R}^d$ such that $\phi_L(x)_i > c_i$. Then $\sigma(\phi_{L-1}(x)_i + c_i) > 2c_i \geq 0$ implying $\phi_{L-1}(x)_i + c_i = \sigma(\phi_{L-1}(x)_i + c_i) > 2c_i$.
        Hence, $\phi_{L-1}(x)_i > c_i$ 
        and it follows that {$-\sigma(-\phi_{L-2}(x)_i + c_i) > 0$}.
        This is a contradiction that $\sigma$ is non-negative. Therefore, $\phi_L(x)_i \leq c_i$ holds true.
        \item Second, assume that there exist $i\in [d]$ and $x\in\mathbb{R}^d$ such that $\phi_L(x) < -c_i$. Then, $\sigma(\phi_{L-1}(x) + c_i)< 0$. This again contradicts that $\sigma$ is non-negative. 
    \end{itemize}
    Therefore, we have that $-c_i \leq \phi_L(x)_i \leq c_i$ for all $i\in[d]$ and $x\in\mathbb{R}^d$ implying $\|\phi_L(x)_i\|_{L^\infty(\mathbb{R}^d)} \leq c_i$.

    Furthermore, we show that the approximation on the domain $\Omega$ does not get worse. We consider the following cases.
    \begin{itemize}
        \item Let $x\in \Omega$ and $i\in [d]$ such that $\phi_{L-2}(x)_i > c_i$. Then $\phi_{L-1}(x)_i= c_i$ and $\phi_L(x)_i = c_i$. 
        We observe that $-\nabla V(x)_i\leq c_i = \phi_L(x)_i < \phi_{L-2}(x)_i$ implies $|-\nabla V(x)_i - \phi_L(x)_i| = \phi_L(x)_i + \nabla  V(x)_i < \phi_{L-2}(x)_i + \nabla  V(x)_i = |-\nabla  V(x)_i - \phi_{L-2}(x)_i|$.
        \item Let $x\in \Omega, i\in [d]$ such that $\phi_{L-2}(x)_i< -c_i$. Then $\phi_{L-1}(x)_i < -c_i$ and $\phi_L(x)_i = - c_i$. 
        With $-\nabla V(x)_i \geq -c_i =\phi_L(x)_i > \phi_{L-2}(x)_i$, we observe that $|-\nabla V(x)_i - \phi_L(x)_i | = -\nabla V(x)_i - \phi_L(x)_i \leq -\nabla V(x)_i - \phi_{L-2}(x)_i = |-\nabla V(x)_i - \phi_{L-2}(x)_i | $.
        \item Let $x\in \Omega, i\in[d]$ such that $-c_i \leq \phi_{L-2}(x)_i \leq c_i$. Then $\phi_{L-1}(x)_i = \phi_{L-2}(x)_i$ and $\phi_{L}(x)_i = \phi_{L-2}(x)_i$. Therefore by assumption $| -\nabla  V(x)_i - \phi_L(x)_i | = | - \nabla  V(x)_i - \phi_{L-2}(x)_i| $.
    \end{itemize}
    Hence, for all $x\in \Omega$ we get $| -\nabla V(x)_i - \phi_L(x)_i | \leq |-\nabla  V(x)_i - \phi_{L-2}(x)_i|$. 
    We have 
    \ifnum\classstyle=1
    \begin{align*}
        \| -\nabla  V - \phi_L \|_{L^p_\mu(\mathbb{R}^d, \mathbb{R}^d)}^p 
        &= \|-\nabla  V-\phi_L\|^p_{L^p_\mu (\Omega;\mathbb{R}^d)} 
        + \|-\nabla  V-\phi_L\|^p_{L^p_\mu(\mathbb{R}^d \backslash \Omega;\mathbb{R}^d)}\\
        &= \|-\nabla  V-\phi_L\|^p_{L^p_\mu (\Omega;\mathbb{R}^d)} + \int_{\mathbb{R}^d \backslash \Omega} \| -\nabla  V-\phi_L\|_{\ell^p}^p \mathrm{d}x\mu\\
        &= \|-\nabla  V-\phi_L\|^p_{L^p_\mu (\Omega;\mathbb{R}^d)} + 2^{p-1} \int_{\mathbb{R}^d \backslash \Omega} \| -\nabla  V \|_{\ell^p}^p + \|\phi_L\|_{\ell^p}^p \mathrm{d}\mu\\
        &\leq \|-\nabla  V - \phi_{L-2}\|^p_{L^p_\mu(\Omega; \R^d)} + 2^{p-1}( \|\nabla V\|_{L^p_\mu(\R^d \setminus \Omega; \R^d)}^p + \|c\|_{L^p_\mu(\R^d \setminus \Omega; \R^d)}^p).
    \end{align*}
    \fi
    \ifnum\classstyle=0
    {\footnotesize
    \begin{align*}
        \| -\nabla  V - \phi_L \|_{L^p_\mu(\mathbb{R}^d, \mathbb{R}^d)}^p 
        &= \|-\nabla  V-\phi_L\|^p_{L^p_\mu (\Omega;\mathbb{R}^d)} 
        + \|-\nabla  V-\phi_L\|^p_{L^p_\mu(\mathbb{R}^d \backslash \Omega;\mathbb{R}^d)}\\
        &= \|-\nabla  V-\phi_L\|^p_{L^p_\mu (\Omega;\mathbb{R}^d)} + \int_{\mathbb{R}^d \backslash \Omega} \| -\nabla  V-\phi_L\|_{\ell^p}^p d\mu\\
        &= \|-\nabla  V-\phi_L\|^p_{L^p_\mu (\Omega;\mathbb{R}^d)} + 2^{p-1} \int_{\mathbb{R}^d \backslash \Omega} \| -\nabla  V \|_{\ell^p}^p + \|\phi_L\|_{\ell^p}^p d\mu\\
        &\leq \|-\nabla  V - \phi_{L-2}\|^p_{L^p_\mu(\Omega; \R^d)} + 2^{p-1}( \|\nabla V\|_{L^p_\mu(\R^d \setminus \Omega; \R^d)}^p + \|c\|_{L^p_\mu(\R^d \setminus \Omega; \R^d)}^p)
    \end{align*}
    }
    \fi
\end{proof}

\begin{proof}[Proof of \cref{prop:combined_assump}]
We assume without loss of generality that the unique minimizer of $V$ is given by $x^\ast = 0$ and hence $\nabla V (0) = 0$. Throughout this proof, we overload notation by writing $\phi$ instead of $\mathcal{R}\phi$ to denote the realization of a network $\phi$. With \cref{assump:nn_approx_a} there exists for any $\varepsilon, r > 0$ a neural network $\phi^{(3)}_r$
    with number of parameters bounded by $N(d,r,\sqrt{2}\varepsilon/\sqrt{d}, m,M)$ such that
    \begin{align*}
       \| -\nabla V - \mathcal{R}\phi^{(3)}_r
        \|_{L^\infty(B^1_r(0))}\leq  \| -\nabla V - \mathcal{R}\phi^{(3)}_r
        \|_{L^\infty(B^2_r(0))} \leq \varepsilon/\sqrt{d}.
    \end{align*}
By \cref{prop:represent_identity}, we can represent $x \in \R^d \mapsto x$ by a neural network with exactly $4d$ parameters and depth $2$.
    For summing neural networks, we use \cref{prop:sum_nn}. We construct a network $\phi^{(2)}_r$ by adding to  
    $\phi^{(3)}_r$
     a neural network representing the $d$-dimensional identity, whose last layer is multiplied with $-m$.
Then, $\mathcal{R}\phi^{(2)}_r(x) = \mathcal{R}\phi^{(3)}_r
    (x) - mx$ holds and
    \begin{align*}
        \|-\nabla V - m\cdot - \mathcal{R}\phi^{(2)}_r\|_{L^\infty(B^1_r(0))} = \| -\nabla V - \mathcal{R}\phi^{(3)}_r
        \|_{L^{\infty}(B^1_r(0))} \leq \varepsilon/\sqrt{d}.
    \end{align*}
    To keep track of complexity, note that $\phi^{(2)}_r$ is the sum of a network of $4d$ parameters and depth $2$ and a network of $N(d,r,\sqrt{2}\varepsilon/\sqrt{d},m,M)$ parameters and depth $L(d,r,\sqrt{2}\varepsilon/\sqrt{d},m,M)$. Hence, by \cref{prop:sum_nn}, the number of parameters and the depth are bounded by
    \begin{align*}
        \mathcal{P}(\phi^{(2)}_r) &\leq d(L(d,r,\sqrt{2}\varepsilon/\sqrt{d},m,M) - 1) + N(d,r,\sqrt{2}\varepsilon/\sqrt{d},m,M) + 4d,\\
        \mathbb{L}(\phi_r^{(2)}) &\leq L(d,r,\sqrt{2}\varepsilon/\sqrt{d},m,M).
    \end{align*}
    Let $\phi^{(1)}_r$ be the cut-off NN of $\phi^{(2)}_r$ as in \cref{thm: bounded network} with the same accuracy on the $\ell^1$-ball and 
    \begin{align*}
     \max_{x\in \mathbb{R}^d} \| \mathcal{R}\phi_r^{(1)}(x)\|_{\ell^\infty} & \leq \max_{x\in B^1_r(0)} \| -\nabla V - m\cdot \|_{\ell^\infty} \leq \max_{x\in B^1_r(0)} \| -\nabla V - m\cdot \|_{\ell^2} \leq rG, \\
    \max_{x\in\mathbb{R}^d} \|\mathcal{R}\phi_1^{(1)}(x)\|_{\ell^2} &\leq \max_{x\in \mathbb{R}^d} \sqrt{d}\| \mathcal{R}\phi_r^{(1)}(x)\|_{\ell^\infty} 
   \leq \sqrt{d}rG,
    \end{align*}
    where we used \cref{thm:linear_approx} with $G\coloneqq \sqrt{M^2-m^2}$. Again, keeping track of complexity, the cutoff from \cref{thm: bounded network} introduces two more layers and $2d^2 + 2$ more weights to the network. Hence
    \begin{align*}
        \mathcal{P}(\phi^{(1)}_r) &\leq d(L(d,r,\sqrt{2}\varepsilon/\sqrt{d},m,M) - 1) + N(d,r,\sqrt{2}\varepsilon/\sqrt{d},m,M) + 4d + 2d^2 + 2, \\
        \mathbb{L}(\phi_r^{(1)}) &\leq L(d,r,\sqrt{2}\varepsilon/\sqrt{d},m,M) + 2.
    \end{align*}
    Now, define for $b > r$ the following approximation to the indicator function on $B^1_r(0)$,
    \begin{equation}
        f(x) = 1-\dfrac{\mathrm{ReLU}(\|x\|_{\ell^1}-r)-\mathrm{ReLU}(\|x\|_{\ell^1}-b)}{b-r},
    \end{equation}
    with complexity $\mathcal P(f) = 4d+7$ and $\mathbb L(f) = 3$ (for a proof of the complexity, see \cref{prop:nn_approx_indicator}),
    and note that
    \begin{equation}
        f(x)\phi_r^{(1)}(x) = \begin{cases}
            \phi_r^{(1)}(x) , &\|x\|_{\ell^1}\leq r\\
            (1 - \frac{\|x\|_{\ell^1} - r}{b-r})\phi_r^{(1)}(x), &r\leq \|x\|_{\ell^1}\leq b\\
            0, &\|x\|_{\ell^1}\geq b
        \end{cases}.
    \end{equation}
Now, noting that $f(x) \in [0,1]$ and $\phi^{(1)}_r(x) \in [-rG,rG]^d$ for all $x\in \mathbb{R}^d$, this product can be approximated by composing the parallelization of $f$ and $\phi_r^{(1)}$ with an approximation of the multiplication $(x,y)\mapsto xy$ on $\Omega \coloneqq [0,1]\times [-rG,rG]^d$, using \cref{prop:nn_elementwise_mult}.
In particular, we want to approximate the product up to $\varepsilon/\sqrt{d}$ error in $L^{\infty}(\Omega)$. We call the network that accomplishes this $\tilde{\phi}^{(0)}_r$. The complexity of $\tilde{\phi}^{(0)}_r$ is given by 
\begin{align*}
    \mathcal P(\tilde{\phi}^{(0)}_r) &= \mathcal O\left(d\log(2d\max\{ 
1,rG \}/\varepsilon) + \mathcal{P}(\phi_r^{(1)}) + \mathcal{P}(f)\right) \\
    &= \mathcal O\left(d\log(2d\max\{ 
1,rG \}/\varepsilon) + d(L(d,r,\sqrt{2}\varepsilon/\sqrt{d},m,M) - 1) \right.\\
&\quad\qquad\left.+ N(d,r,\sqrt{2}\varepsilon/\sqrt{d},m,M) + 8d + 2d^2 + 9\right),\\
    \mathbb L(\tilde{\phi}^{(0)}_r) &= \mathcal O\left(\log(2d\max\{ 
1,rG \}/\varepsilon) + \mathbb{L}(f)+ \mathbb{L}(\phi^{(1)}_r)\right) \\
    &= \mathcal O\left(\log(2d\max\{ 
1,rG \}/\varepsilon) + L(d,r,\sqrt{2}\varepsilon/\sqrt{d},m,M) + 5\right) 
\end{align*}
Finally, we define $\phi^{(0)}_{r}$ to be the network representing $\phi \coloneqq \tilde{\phi}^{(0)}_r + m\cdot$. This is an addition of $\tilde{\phi}_r^{(0)}$ with a network of $4d$ parameters and depth $2$. By the properties of neural network summation (\cref{prop:sum_nn}) and the properties of the ``big-O'' notation, the complexity of $\phi^{(0)}_{r}$ is given by
\ifnum\classstyle=1
\begin{align*}
    \mathcal{P}(\phi_r^{(0)}) &= d\left[\mathcal{O}\left(\log(2d\max\{ 
1,rG \}/\varepsilon) + L(d,r,\sqrt{2}\varepsilon/\sqrt{d},m,M) + 5\right)  - 1\right]\\
&~~~~+\mathcal{O}\left(d\log(2d\max\{ 
1,rG \}/\varepsilon) + d(L(d,r,\sqrt{2}\varepsilon/\sqrt{d},m,M) - 1)\right. \\
&\qquad\qquad+ \left. N(d,r,\sqrt{2}\varepsilon/\sqrt{d},m,M) + 12d + 2d^2 + 9\right)\\
&= \mathcal{O}\left( d\log(2d\max\{ 
1,rG \}/\varepsilon) + N(d,r,\sqrt{2}\varepsilon/\sqrt{d},m,M) + dL(d,r,\sqrt{2}\varepsilon/\sqrt{d},m,M) + 2d^2 \right).
\end{align*}
\fi
\ifnum\classstyle=0
{\footnotesize
\begin{align*}
    \mathcal{P}(\phi_r^{(0)}) &= d\left[\mathcal{O}\left(\log(2d\max\{ 
1,rG \}/\varepsilon) + L(d,r,\sqrt{2}\varepsilon/\sqrt{d},m,M) + 5\right)  - 1\right]\\
&~~~~+\mathcal{O}\left(d\log(2d\max\{ 
1,rG \}/\varepsilon) + d(L(d,r,\sqrt{2}\varepsilon/\sqrt{d},m,M) - 1)\right. \\
&\qquad\qquad+ \left. N(d,r,\sqrt{2}\varepsilon/\sqrt{d},m,M) + 12d + 2d^2 + 9\right)\\
&= \mathcal{O}\left( d\log(2d\max\{ 
1,rG \}/\varepsilon) + N(d,r,\sqrt{2}\varepsilon/\sqrt{d},m,M) + dL(d,r,\sqrt{2}\varepsilon/\sqrt{d},m,M) + 2d^2 \right).
\end{align*}
}
\fi
The goal now is to show that $\phi$ fulfills the condition
\begin{align*}
    \|-\nabla V(x) - \phi(x) \|_{\ell^2} \leq c\varepsilon + G\|x\|_{\ell^2}, \quad x\in \mathbb{R}^d,
\end{align*}
for some constant $c$ independent of $r$. We treat three cases separately. First, consider points $x$ inside the $\ell^1$-ball of radius $r$, i.e. $\|x\|_{\ell^1}\leq r$. In this case, we have
    \begin{align*}
        \|-\nabla V(x) - \phi(x) \|_{\ell^2} &= \|-\nabla V(x) - \tilde{\phi}^{(0)}_r(x) - mx \|_{\ell^2} \\
        &\leq \|-\nabla V(x) - f(x)\phi_r^{(1)}(x) - mx \|_{\ell^2} + \|f(x)\phi_r^{(1)}(x) - \tilde{\phi}^{(0)}_r(x)\|_{\ell^2} \\
        &\leq \|-\nabla V(x) - f(x)\phi_r^{(1)}(x) - mx \|_{\ell^2} + \sqrt{d}\|f(x)\phi_r^{(1)}(x) - \tilde{\phi}^{(0)}_r(x)\|_{\ell^\infty}
        \\
        &\leq \|-\nabla V(x) - f(x)\phi_r^{(1)}(x) - mx \|_{\ell^2} + \varepsilon
        \\
        &= \|-\nabla V(x) - \phi_r^{(1)}(x) - mx \|_{\ell^2} + \varepsilon
        \\
        &\leq \|-\nabla V(x) - \phi_r^{(2)}(x) - mx \|_{\ell^2} + \varepsilon
        \\
        &= \|-\nabla V(x) - \phi_r^{(3)}(x) \|_{\ell^2} + \varepsilon
        \\
        &\leq \varepsilon + \varepsilon 
        \\
        &\leq 2\varepsilon + G\|x\|_{\ell^2}.
    \end{align*}
Next, let $\|x\|_{\ell^1} \geq b$. In this case, we have
    \begin{align*}
        \|-\nabla V(x) - \phi(x) \|_{\ell^2} &= \|-\nabla V(x) - \tilde{\phi}^{(0)}_r(x) - mx \|_{\ell^2} \\
        &\leq \|-\nabla V(x) - f(x)\phi_r^{(1)}(x) - mx \|_{\ell^2} + \|f(x)\phi_r^{(1)}(x) - \tilde{\phi}^{(0)}_r(x)\|_{\ell^2} \\
        &\leq \|-\nabla V(x) - f(x)\phi_r^{(1)}(x) - mx \|_{\ell^2} + \sqrt{d}\|f(x)\phi_r^{(1)}(x) - \tilde{\phi}^{(0)}_r(x)\|_{\ell^\infty} 
        \\
        &\leq \|-\nabla V(x) - f(x)\phi_r^{(1)}(x) - mx \|_{\ell^2} + \varepsilon \\
        &\leq \|-\nabla V(x) - 0 - mx \|_{\ell^2} + \varepsilon
        \\
        &= \|-\nabla V(x) - mx \|_{\ell^2} + \varepsilon
        \\
        &\leq G\|x\|_{\ell^2} + \varepsilon.
    \end{align*}
    Finally, let $r \leq \|x\|_{\ell^1} \leq b$. We have not made a choice for $b$ yet. Now (and in every line of the proof before), we let 
    \begin{align}
       b &= r + \frac{\varepsilon}{\max\{L_{\phi_r^{(1)}},M\}}, \\ \hat{x} &= \frac{rx}{\|x\|_{\ell^1}}.
    \end{align}
    Note that $\|x-\hat{x}
    \|_{\ell^2} \leq b-r$. 
    Then, we have
    \begin{align*}
        &\|-\nabla V(x) - \phi(x) \|_{\ell^2} \\
        &\leq \| -\nabla V(x) + \nabla V(\hat{x}) \|_{\ell^2} + \| -\nabla V (\hat{x}) - \phi(\hat{x}) \|_{\ell^2} + \| \phi(\hat{x})-\phi(x) \|_{\ell^2}.
        \end{align*}
        We treat the terms of the triangle inequality separately. First, note that due to the Lipschitz continuity of $\nabla V$ and the definition of $b$ we have
        \begin{align*}
            \| -\nabla V(x) + \nabla V(\hat{x}) \|_{\ell^2} \leq M\|x-\hat{x}\|_{\ell^2} \leq \varepsilon.
        \end{align*}
        Consider next the second term. Since $\hat{x}$ is an element of $B_r^1(0)$ by construction, the approximation properties of the network guarantee that
        \begin{align*}
            \| -\nabla V (\hat{x}) - \phi(\hat{x}) \|_{\ell^2} \leq 2\varepsilon,
        \end{align*}
        which follows from case 1. The remaining term captures the growth of the network on the ``slope domain'' between $B_r^1(0)$ and $B_b^1(0)$, and can be bounded by using the Lipschitz continuity of $\phi^{(1)}_r$ and the definition of $b$. We have
        \begin{align*}
            \| \phi(\hat{x})-\phi(x) \|_{\ell^2} 
            &= \| \tilde{\phi}^{(0)}_r(\hat{x}) + m\hat{x}-\tilde{\phi}^{(0)}_r(x) -mx\|_{\ell^2} \\
            &\leq \left\| \phi^{(1)}_r(\hat{x}) + m\hat{x}-\left( 1-\frac{\|x\|_{\ell^1}-r}{b-r}\right)\phi_r^{(1)}(x) -mx\right\|_{\ell^2} \\ &~~~~~+ \| \tilde{\phi}^{(0)}_r(\hat{x}) - f(\hat{x})\phi_r^{(1)}(\hat{x}) \|_{\ell^2} + \| \tilde{\phi}^{(0)}_r(x) - f(x)\phi_r^{(1)}(x) \|_{\ell^2} \\
            &\leq \left\| \phi^{(1)}_r(\hat{x}) + m\hat{x}-\left( 1-\frac{\|x\|_{\ell^1}-r}{b-r}\right)\phi_r^{(1)}(x) -mx\right\|_{\ell^2} \\
            &~~~~~+ \sqrt{d}\| \tilde{\phi}^{(0)}_r(\hat{x}) - f(\hat{x})\phi_r^{(1)}(\hat{x}) \|_{\ell^\infty} + \sqrt{d}\| \tilde{\phi}^{(0)}_r(x) - f(x)\phi_r^{(1)}(x) \|_{\ell^\infty}
             \\
            &\leq \left\| \phi^{(1)}_r(\hat{x}) + m\hat{x}-\left( 1-\frac{\|x\|_{\ell^1}-r}{b-r}\right)\phi_r^{(1)}(x) -mx\right\|_{\ell^2} + 2\varepsilon \\
            &\leq \left\| \phi^{(1)}_r(\hat{x}) -\left( 1-\frac{\|x\|_{\ell^1}-r}{b-r}\right)\phi_r^{(1)}(x) \right\|_{\ell^2} +m\|x-\hat{x}\|_{\ell^2}+ 2\varepsilon \\
            &\leq \left\| \phi^{(1)}_r(\hat{x}) -\left( 1-\frac{\|x\|_{\ell^1}-r}{b-r}\right)\phi_r^{(1)}(x) \right\|_{\ell^2} +3\varepsilon.
        \end{align*}
        Thus, we have traced the error in $\phi$ back to an error in the network $\phi_r^{(1)}$, which can be bounded by
        \begin{align*}
            &\left\| \phi^{(1)}_r(\hat{x}) -\left( 1-\frac{\|x\|_{\ell^1}-r}{b-r}\right)\phi_r^{(1)}(x) \right\|_{\ell^2} \\&\leq \left\| \phi^{(1)}_r(\hat{x}) -\phi_r^{(1)}(x) \right\|_{\ell^2} + \left(\frac{\|x\|_{\ell^1}-r}{b-r}\right) \left\| 
             \phi_r^{(1)}(x)\right\|_{\ell^2}\\
             &\leq L_{\phi^{(1)}_r}\| x-\hat{x} \| + \left\| \phi_r^{(1)}(x) \right\|_{\ell^2}\\
             &\leq \varepsilon + \left( \left\| 
             \phi_r^{(1)}(\hat{x})\right\|_{\ell^2} + L_{\phi^{(1)}_r}\|x-\hat{x}\|_{\ell^2} \right) \\
             &\leq 2\varepsilon + \|-\nabla V (\hat{x}) - m\hat{x} - \phi_r^{(1)}(\hat{x})\|_{\ell^2} + \left\| 
             - \nabla V(\hat{x}) - m \hat{x}\right\|_{\ell^2} \\
             &\leq 3\varepsilon + G\|x\|_{\ell^2}.
        \end{align*}
        Collecting the previous inequalities, we finally obtain
        \begin{align*}
            \|-\nabla V(x) - \phi(x) \|_{\ell^2} \leq 9\varepsilon + G\|x\|_{\ell^2},
        \end{align*}
        yielding the claim.
\end{proof}

\begin{proof}[Proof of \cref{thm:dnn_lmc_lipschitz2}]
    We assume without loss of generality that the unique minimizer of $V$ is given by $x^\ast = 0$ and hence $\nabla V (0) = 0$.
    For any $r > 0$ there exists by \cref{prop:combined_assump} a ReLU FCNN $\phi_r$ with $N$ parameters such that 
    \begin{align*}
         &\| -\nabla V - \mathcal{R}\phi_r \|_{L^{\infty}(B^2_r(0))} \leq \varepsilon/\sqrt{2d}, \\
         &\|-\nabla V (x) - \mathcal{R}\phi_r(x)\|_{\ell^2} \leq 9\varepsilon/\sqrt{2d} + \sqrt{M^2-m^2}\|x\|_{\ell^2}, \quad \forall x\in \mathbb{R^d}.
    \end{align*}
Let $\Phi \coloneqq \{ \phi_r \}_{k=0}^{K-1}$. Let $Y^{\Phi}\colon \Omega \times [0,Kh]\rightarrow\mathbb{R}^d$ be the stochastic process driven by $\Phi$, i.e.
\begin{equation*}
    Y^{\Phi}_t = Y_0 + \int_{0}^t \mathcal{R}\phi_{\frac{1}{h}\chi_h(s)}\left(Y^{\Phi}_{\chi_h(s)}\right) \mathrm{d}s + \sqrt{2}W_t.
\end{equation*}
   Let $G\coloneqq \sqrt{M^2-m^2}$. By \cref{prop:subgaussian_nn_process}, $Y^{\Phi}_{kh} \sim \mu^{\Phi}_{kh}$ is sub-Gaussian for all $k=0,\ldots,K$ with variance proxy $\sigma_k^2$ bounded by
    \begin{equation}
        \sigma_k^2 \leq \frac{2h}{1-(c + hG)}+\dfrac{81h^2\varepsilon^2}{2d} +\sigma_0^2 = 
\dfrac{2}{m-\sqrt{M^2-m^2}} +\dfrac{81h^2\varepsilon^2}{2d} +\sigma_0^2.
    \end{equation}
    %Let $t$ as in Lemma A.3.
    Note that the right hand side is in $\mathcal{O}(1)$ as $\varepsilon\rightarrow 0$, $d\rightarrow \infty$. In particular, we find for $\varepsilon \in (0,1)$ and $d \geq 1$ that
    \begin{align*}
        \sigma_k \leq \dfrac{2}{m-\sqrt{M^2-m^2}} +\dfrac{81h^2}{2} +\sigma_0^2  = : \sigma.
    \end{align*}
    Now, note that $\mu^{\Phi}_{kh}(\R^d\setminus B_r(0)) = \mathbb P(\|X\|_2 \geq r)$ and apply \cref{prop:lp_subgaussian} to get $\mu^{\Phi}_{kh}(\R^d \setminus B^2_r(0)) \leq \exp\left(-\frac{r^2}{2d^2\sigma_k^2}\right)$. Hence, using the fact that for all $a,b\in\mathbb{R}$ it holds that $(a+b)^2 \leq 2(a^2 + b^2)$, we have
    \ifnum\classstyle=1
    \begin{align*}
        \| -\nabla V - \mathcal{R}\phi_{r} \|^2_{L^2_{\mu_{kh}^\Phi} (\mathbb{R}^d;\mathbb{R}^d)} 
        &= \int_{B^2_r(0)} \| -\nabla V(x) - \mathcal{R}\phi_{r} \|^2_{\ell^2} \mathrm{d}\mu_{kh}^\Phi(x) + \int_{\mathbb{R}^d \backslash B^2_r(0)} \| -\nabla V(x) - \mathcal{R}\phi_{r} \|^2_{\ell^2} \mathrm{d}\mu_{kh}^\Phi(x)\\
        &= \int_{B^2_r(0)} \left(\sqrt{d}\frac{\varepsilon}{\sqrt{2d}}\right)^2 \mathrm{d}\mu_{kh}^\Phi(x) + \int_{\mathbb{R}^d \backslash B^2_r(0)} (9\varepsilon/\sqrt{2d} + \sqrt{M^2-m^2}\|x\|_{\ell^2})^2 \mathrm{d}\mu_{kh}^\Phi(x)\\
        &\leq \frac{\varepsilon^2}{2} + \dfrac{81\varepsilon^2}{d}\mu^{\Phi}_{kh}(\R^d \setminus B^2_r(0)) +  2\int_{\mathbb{R}^d \backslash B^2_r(0)}  (M^2-m^2)\|x\|_{\ell^2}^2 \mathrm{d}\mu_{kh}^\Phi(x)\\
        &\leq \frac{\varepsilon^2}{2} + \left(\dfrac{81\varepsilon^2}{d} +  2(M^2-m^2)(2d^2\sigma^2 + r^2)\right)\exp\left( -\dfrac{r^2}{2d^2\sigma^2} \right),
    \end{align*}
    \fi
    \ifnum\classstyle=0
    {\footnotesize
    \begin{align*}
        \| -\nabla V - &\mathcal{R}\phi_{r} \|^2_{L^2_{\mu_{kh}^\Phi} (\mathbb{R}^d;\mathbb{R}^d)} \\
        &= \int_{B^2_r(0)} \| -\nabla V(x) - \mathcal{R}\phi_{r} \|^2_{\ell^2} \mathrm{d}\mu_{kh}^\Phi(x) + \int_{\mathbb{R}^d \backslash B^2_r(0)} \| -\nabla V(x) - \mathcal{R}\phi_{r} \|^2_{\ell^2} \mathrm{d}\mu_{kh}^\Phi(x)\\
        &= \int_{B^2_r(0)} \left(\sqrt{d}\frac{\varepsilon}{\sqrt{2d}}\right)^2 \mathrm{d}\mu_{kh}^\Phi(x) + \int_{\mathbb{R}^d \backslash B^2_r(0)} (9\varepsilon/\sqrt{2d} + \sqrt{M^2-m^2}\|x\|_{\ell^2})^2 \mathrm{d}\mu_{kh}^\Phi(x)\\
        &\leq \frac{\varepsilon^2}{2} + \dfrac{81\varepsilon^2}{d}\mu^{\Phi}_{kh}(\R^d \setminus B^2_r(0)) +  2\int_{\mathbb{R}^d \backslash B^2_r(0)}  (M^2-m^2)\|x\|_{\ell^2}^2 \mathrm{d}\mu_{kh}^\Phi(x)\\
        &\leq \frac{\varepsilon^2}{2} + \left(\dfrac{81\varepsilon^2}{d} +  2(M^2-m^2)(2d^2\sigma^2 + r^2)\right)\exp\left( -\dfrac{r^2}{2d^2\sigma^2} \right),
    \end{align*}}
    \fi
where the layer cake representation (\Cref{lem:layer_cake}) was used in the last inequality. Now, we use \cref{lem:r_inequality} with $a = 2d^2\sigma^2$, $b= 2(M^2-m^2)$, $c = 81\varepsilon^2/d + 4(M^2-m^2)d^2\sigma^2$, to see that
\begin{align*}
    \left(\dfrac{81\varepsilon^2}{d} +  2(M^2-m^2)(2d^2\sigma^2 + r^2)\right)\exp\left( -\dfrac{r^2}{2d^2\sigma^2} \right) < \frac{\varepsilon^2}{2}
\end{align*}
is satisfied if 
\begin{equation}\label{eq:r_suitable}
    \begin{aligned}
        r = \left[ 2d^2\sigma^2\ln \left( \dfrac{4(81\varepsilon^2/d + 4(M^2-m^2)d^2\sigma^2) + 16\cdot 4(M^2-m^2)d^2\sigma^2}{\varepsilon^4} \right) \right]^{\frac{1}{2}}.
    \end{aligned}
\end{equation}
Hence, it holds for $r$ as in \eqref{eq:r_suitable} that
\begin{align*}
    \| -\nabla V - \mathcal{R}\phi_{r} \|_{L^2_{\mu_{kh}^\Phi} (\mathbb{R}^d;\mathbb{R}^d)} < \varepsilon.
\end{align*}
Applying \cref{thm:nn_approx_lmc} for $h< \frac{2}{m+M}$ and $\Phi = \{\phi_r\}_{k=1}^{K}$, we get
    \begin{equation*}
        \mathcal W_2(\mu_{\infty},\mu_{Kh}^{\Phi}) \leq (1-mh)^K\mathcal W_2(\mu_{\infty}, \mu_{0}) + \frac{7\sqrt 2}{6} \frac M m \sqrt{hd} + \dfrac{1-(1-mh)^K}{m}\varepsilon.
    \end{equation*}
    Finally, \cref{prop:resnet_equals_driven_process} guarantees the existence of a network $\psi$ with number of parameters equal to the number of parameters of $\phi_r$ such that for $\Psi \coloneqq \{\psi\}_{k=1}^{K}$ it holds that $\mu^{\Phi}_{Kh} = \mu^{\Psi}$, where $\xi = (\xi_1,\ldots,\xi_{K})$ and $\tilde{\mathcal{R}}\Psi(Y_0,\xi) \sim \mu^{\Psi}$. Towards the asymptotic complexity of $r$ for $\varepsilon\rightarrow 0$ and $d \rightarrow \infty$, note that
\begin{equation}\label{eq:r_complexity}
    \begin{aligned}
        &\left[ 2d^2\sigma^2\ln \left( \dfrac{4(81\varepsilon^2/d + 4(M^2-m^2)d^2\sigma^2) + 16\cdot 4(M^2-m^2)d^2\sigma^2}{\varepsilon^4} \right) \right]^{\frac{1}{2}}\\
    &= \left[ 2d^2\sigma^2\ln \left( \dfrac{324\varepsilon^2/d + 80(M^2-m^2)d^2\sigma^2}{\varepsilon^4} \right) \right]^{\frac{1}{2}} \\
    &\leq \left[ 2d^2\sigma^2\ln \left( (324 + 80(M^2-m^2)\sigma^2)\varepsilon^{-4}d^2 \right) \right]^{\frac{1}{2}}\\
&=\sqrt{2}d\sigma\left[\ln\left( 324 + 80(M^2-m^2)\sigma^2\right)+\ln\left(\varepsilon^{-4}d^2 \right)\right]^{\frac{1}{2}} \\
&\leq \sqrt{2}d\sigma\left[\ln\left( 324 + 80(M^2-m^2)\sigma^2\right)\right]^{\frac{1}{2}}+\sqrt{2}d\sigma\left[\ln\left(\varepsilon^{-4}d^2 \right)\right]^{\frac{1}{2}} \\
&= \mathcal{O}\left(d\right) + \mathcal{O}\left(d\ln\left( \varepsilon^{-4}d^2 \right)^{\frac{1}{2}}\right)\\
&= \mathcal{O}\left(d \left( 1 + \ln\left( \varepsilon^{-4}d^2 \right)^{\frac{1}{2}}\right)\right).
    \end{aligned}
\end{equation}
    This yields the claim.
\end{proof}

\subsection{Auxiliary results: layer cake representation and suitable radius}

\begin{lemma}[Layer cake representation]\label{lem:layer_cake}
    It holds that
    \begin{equation*}
        \int_{\mathbb{R}^d \backslash B^2_r(0)}  \|x\|_{\ell^2}^2 \mathrm{d}\mu_{kh}^\Phi(x) = (2d^2\sigma^2 + r^2)\exp\left(-\frac{r^2}{2d^2\sigma^2}\right).
    \end{equation*}
\end{lemma}

\begin{proof}
    The layer cake representation asserts that
    \begin{equation}
        \int_{\Omega} f(x) \mathrm{d}\mu(x) = \int_{0}^{\infty} \mu(\{ x\in\Omega | f(x)>s \}) \mathrm{d}s.
    \end{equation}
    With $\Omega = \{\| x \|_{\ell^2}>r\}$ and $f(x)=\|x\|^2_{\ell^2}$, it holds that
    \ifnum\classstyle=1
    \begin{equation}
        \begin{aligned}
        \label{eq: int of norm sq}
        \int_{\| x \|_{\ell^2}>r} \|x\|_{\ell^2}^2 \mathrm{d}\mu(x) &= \int_{0}^{\infty} \mu(\{ \| x \|_{\ell^2}>r,  \| x \|^2_{\ell^2}>s \}) \mathrm{d}s\\
        &= \int_{r^2}^{\infty} \mu(\{ \| x \|^2_{\ell^2}>s \}) \mathrm{d}s + \int_{0}^{r^2} \mu(\{ \| x \|_{\ell^2}>r \}) \mathrm{d}s \\
        &\leq \int_{r^2}^{\infty} \exp\left(-\frac{s}{2d^2\sigma^2}\right)\mathrm{d}s + r^2\exp\left(-\frac{r^2}{2d^2\sigma^2}\right) \\&=   2d^2\sigma^2\exp\left(-\frac{r^2}{2d^2\sigma^2}\right) + r^2\exp\left(-\frac{r^2}{2d^2\sigma^2}\right) = (2d^2\sigma^2 + r^2)\exp\left(-\frac{r^2}{2d^2\sigma^2}\right).
        \end{aligned}
    \end{equation}
    \fi
    \ifnum\classstyle=0
    {\footnotesize
    \begin{equation}
        \begin{aligned}
        \label{eq: int of norm sq}
        \int_{\| x \|_{\ell^2}>r} \|x\|_{\ell^2}^2 \mathrm{d}\mu(x) &= \int_{0}^{\infty} \mu(\{ \| x \|_{\ell^2}>r,  \| x \|^2_{\ell^2}>s \}) \mathrm{d}s\\
        &= \int_{r^2}^{\infty} \mu(\{ \| x \|^2_{\ell^2}>s \}) \mathrm{d}s + \int_{0}^{r^2} \mu(\{ \| x \|_{\ell^2}>r \}) \mathrm{d}s \\
        &\leq \int_{r^2}^{\infty} \exp\left(-\frac{s}{2d^2\sigma^2}\right)\mathrm{d}s + r^2\exp\left(-\frac{r^2}{2d^2\sigma^2}\right) \\&=   2d^2\sigma^2\exp\left(-\frac{r^2}{2d^2\sigma^2}\right) + r^2\exp\left(-\frac{r^2}{2d^2\sigma^2}\right) = (2d^2\sigma^2 + r^2)\exp\left(-\frac{r^2}{2d^2\sigma^2}\right).
        \end{aligned}
    \end{equation}}
    \fi
\end{proof}

\begin{lemma}[Condition on radius]\label{lem:r_inequality}
        Let $a,b,c > 0$. Then 
        \begin{align*}
            \left[ br^2+c \right]e^{-r^2/a} < \frac{\varepsilon^2}{2}
        \end{align*}
        is fulfilled for
        \begin{align*}
            r = \sqrt{a\ln\left( \dfrac{4c + 16ba}{\varepsilon^4} \right)}.
        \end{align*}
    \end{lemma}

\begin{proof} Let $x>0$ be such that $r = \sqrt{a\ln(x)}$. Then,
    \begin{align}
        \left[ ba\ln(x)+c \right]\frac{1}{x} < \frac{\varepsilon^2}{2}
    \end{align}
    holds if
    \begin{align}
        \dfrac{c}{x} < \frac{\varepsilon^2}{4}\qquad\text{and}\qquad
        ba\frac{\ln(x)}{x} < \frac{\varepsilon^2}{4}.
    \end{align}
    The first of these inequalities leads to the condition 
    \begin{align}
        x > \dfrac{4c}{\varepsilon^2}.
    \end{align}
    Regarding the second inequality, note that $\ln(x)/x < 1/\sqrt{x}$ for all $x>0$. Hence, the second inequality is satisfied if $\frac{ba}{\sqrt{x}} < \frac{\varepsilon^2}{4}$ which is equivalent to $x > \frac{16ba}{\varepsilon^4}$. In total, 
    \begin{equation}
     x > \max\left\{  \dfrac{4c}{\varepsilon^2},\frac{16ba}{\varepsilon^4}\right\}  
    \end{equation}
    leads to the desired inequalities. For $\varepsilon<1$, this is satisfied for
    \begin{equation}
        x = \dfrac{4c+16ba}{\varepsilon^4}.
    \end{equation}
    Using this $x$ in the expression for $r$ leads to
    \begin{align}
        r = \sqrt{a\ln\left( \dfrac{4c+16ba}{\varepsilon^4} \right)}.
        \end{align}
\end{proof}
\section{Standard Results for FCNNs}

\begin{proposition}[Representation of the identity by ReLU neural networks]\label{prop:represent_identity}
    Let $\sigma$ be the ReLU activation function and $d \in \N$. Then, there exists a fully connected neural network $\phi$ with $\mathcal P(\phi) = 4d$ and $\mathbb L(\phi)=1$ such that
    \begin{equation*}
        \mathcal R\phi = \operatorname{Id}_{\R^d}.
    \end{equation*}
\end{proposition}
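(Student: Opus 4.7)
The plan is to use the standard ReLU identity $x = \sigma(x) - \sigma(-x)$, applied componentwise, which yields a one-hidden-layer network of width $2d$ realizing the identity on $\R^d$.

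Concretely, I would set $W_0 = W_2 = d$ and $W_1 = 2d$, and define the two affine maps by
\begin{equation*}
A_0 := \begin{pmatrix} I_d \\ -I_d \end{pmatrix} \in \R^{2d \times d}, \qquad b_0 := 0 \in \R^{2d}, \qquad A_1 := \bigl( I_d \;\; -I_d \bigr) \in \R^{d \times 2d}, \qquad b_1 := 0 \in \R^d,
\end{equation*}
and let $\phi := ((A_0, b_0), (A_1, b_1))$. Then for any $x \in \R^d$ the hidden activation is $x_1 = \sigma_{2d}(A_0 x) = (\sigma(x_1), \dots, \sigma(x_d), \sigma(-x_1), \dots, \sigma(-x_d))^{\top}$, so that
\begin{equation*}
\mathcal R\phi(x) = A_1 x_1 = \sigma(x) - \sigma(-x) = x,
\end{equation*}
where the last equality uses $\sigma(t) - \sigma(-t) = t$ for every $t \in \R$.

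For the parameter count, the matrices $A_0$ and $A_1$ each have exactly $2d$ nonzero entries (the diagonal entries of the two identity blocks), and both biases are zero; hence $\mathcal P(\phi) = 4d$. The network uses a single hidden layer, so $\mathbb L(\phi) = 1$ in the convention of \cref{assumption:NN_structure}. There is no real obstacle here beyond bookkeeping; the only subtlety is checking that the ``number of nonzero parameters'' counter treats the zero biases and the off-diagonal zeros of $A_0, A_1$ as not contributing, which is immediate from the definition of $\mathcal P$.
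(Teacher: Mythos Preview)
Your construction is exactly the one in the paper: the same matrices $A_0 = \begin{pmatrix} I_d \\ -I_d \end{pmatrix}$, $A_1 = (I_d \; -I_d)$ with zero biases, and the same verification via $\sigma(x) - \sigma(-x) = x$. If anything, your write-up is slightly more complete, since you spell out the nonzero-parameter count and the layer count, which the paper's proof leaves implicit.
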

\begin{proof}
    Let
    \begin{align*}
        A_0 &:= \left(\begin{array}{c}
            I_d\\
            \hline
            -I_d
        \end{array}\right) \in \R^{2d \times d},\\
        b_0 &:= 0_{\R^{2d}},\\
        A_1 &:= \left(\begin{array}{c|c}
            I_d & -I_d 
        \end{array}\right) \in \R^{d \times 2d},\\
        b_1 &:= 0_{\R^d}.
    \end{align*}
    Then, define $\phi := ((A_0, b_0), (A_1, b_1))$. We have
    \begin{align*}
        \mathcal R\phi(x) &= A_1\sigma(A_0 x + b_0) + b_1\\
        &= \left(\begin{array}{c|c}
            I_d & -I_d 
        \end{array}\right) \sigma\left(\begin{array}{c}
            x\\
            \hline
            -x
        \end{array}\right)\\
        &= \sigma(x) - \sigma(-x)\\
        &= x.
    \end{align*}
\end{proof}

\begin{proposition}[{Sum of neural networks \cite[Lemma~2.17]{gribonval2020approximation}}]\label{prop:sum_nn}
    Let $\phi_1, \dots, \phi_n$ be $n$ fully connected neural networks with $d$ inputs and $k$ outputs. Then, there exists a neural network $\psi$ such that
    \begin{align*}
        \mathcal R\psi &= \sum_{i=1}^n \mathcal R\phi_i,\\
        \mathcal P(\psi) &\leq \delta + \sum_{i=1}^n \mathcal P(\phi_i),\\
        \mathbb L(\psi) &= \max_{i=1\dots n}\mathbb L(\phi_i),
    \end{align*}
    where $\delta := \min(d,k)(\max_i \mathbb L(\phi_i) - \min_i \mathbb L(\phi_i))$.
\end{proposition}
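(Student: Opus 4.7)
The plan is to construct $\psi$ by placing the networks $\phi_1,\dots,\phi_n$ in parallel via a block construction, summing their outputs in the final affine layer, and handling depth mismatches through identity-pad layers inserted on whichever side (input or output) has fewer neurons. Write $L_i \coloneqq \mathbb L(\phi_i)$, $L \coloneqq \max_i L_i$, and let each $\phi_i$ have affine layers $(A^i_0,b^i_0),\dots,(A^i_{L_i},b^i_{L_i})$. First I would reduce to the equal-depth case by padding every $\phi_i$ with $L-L_i$ additional ``identity-pass'' layers. Using the ReLU identity of \cref{prop:represent_identity} (which realises $\mathrm{Id}_{\R^m}$ with two affine layers of $O(m)$ nonzero weights), padding can be performed either before the first layer (on $\R^d$) or after the last layer (on $\R^k$); choosing the side with the smaller ambient dimension keeps the extra cost per added layer at order $\min(d,k)$, and aggregating over all padded networks yields the claimed bound $\delta = \min(d,k)(\max_i L_i - \min_i L_i)$ after an appropriate sharing/absorption argument.

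Once all networks share the same depth $L$, I define $\psi$ layer by layer. The first affine map stacks the $A^i_0$'s vertically, $A_0 = (A^1_0;\,A^2_0;\,\dots;\,A^n_0)$ with concatenated bias $b_0 = (b^1_0,\dots,b^n_0)^\intercal$; this is applied to the shared input $x$, so no input duplication is needed. For $\ell = 1,\dots,L-1$, the hidden-layer weight matrix is the block-diagonal $A_\ell = \mathrm{diag}(A^1_\ell,\dots,A^n_\ell)$ with bias $b_\ell = (b^1_\ell,\dots,b^n_\ell)^\intercal$, so that the activations of the $n$ subnetworks evolve independently. The final affine map performs the summation by horizontal concatenation, $A_L = (A^1_L\mid A^2_L\mid\dots\mid A^n_L)$ with $b_L = \sum_{i=1}^n b^i_L$. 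A straightforward induction on $\ell$ then shows that the $i$-th block of the pre-output activation equals $x_{L-1}^i$ (the penultimate activation of $\phi_i$ on input $x$), hence $\mathcal R\psi(x) = \sum_i (A^i_L x_{L-1}^i + b^i_L) = \sum_i \mathcal R\phi_i(x)$.

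The parameter count follows by observing that each block-diagonal or stacked matrix has no more nonzero entries than the union of the constituent blocks, so the total contribution of the stacking construction is exactly $\sum_i \mathcal P(\phi_i)$. The only extra weights come from the identity-pad layers, and by construction these contribute at most $\delta$ nonzero entries. The depth bound $\mathbb L(\psi) = L = \max_i \mathbb L(\phi_i)$ is immediate from the construction.

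The main subtlety I anticipate is the precise parameter accounting for the padding step: naive ReLU identity padding at each layer would scale the cost as $n\cdot\min(d,k)\cdot(\max L_i - \min L_i)$, whereas the statement gives only $\min(d,k)(\max L_i - \min L_i)$. Obtaining this tighter bound requires careful amortisation, for instance by absorbing identity blocks into the surrounding affine layers of neighbouring subnetworks (since block-diagonal slots already host ``free'' zero entries) and by only charging for the genuinely new pass-through structure that extends beyond the shortest subnetwork's depth. Getting this bookkeeping precise, rather than the layer-wise construction itself, is the technical core of the argument.
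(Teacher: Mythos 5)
The paper does not prove this statement at all---it is quoted from \cite[Lemma~2.17]{gribonval2020approximation}---so your attempt can only be judged on its own merits. The core of your construction (stack the first affine maps, run the hidden layers block-diagonally, sum in the last affine map, and equalize depths by identity padding) is the standard and correct route for the claims $\mathcal R\psi=\sum_i\mathcal R\phi_i$ and $\mathbb L(\psi)=\max_i\mathbb L(\phi_i)$. The gap is exactly where you suspect it, and your proposed repair does not close it. ``Absorbing identity blocks into the surrounding affine layers of neighbouring subnetworks'' does not help: the pass-through weights are genuinely new nonzero entries, and the fact that a block-diagonal matrix has many zero slots is irrelevant for $\mathcal P$, which counts nonzero parameters; likewise ``only charging for the structure beyond the shortest subnetwork's depth'' is not justified, since padding network $i$ separately costs on the order of $\min(d,k)(L-L_i)$ and summing over $i$ gives the $n$-fold cost you are trying to avoid. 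The actual amortisation mechanism is \emph{sharing a single carried channel}: if $d\le k$, delay the shorter networks and carry one copy of the input $x$ through the extra layers, letting every delayed subnetwork read from that same channel when it starts; if $k\le d$, let the shorter networks finish early and accumulate their outputs in a single running-sum channel of width $k$ that is carried to the final layer. Only then is the extra cost proportional to $\min(d,k)(\max_iL_i-\min_iL_i)$ rather than to $n$ times that quantity.

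Even with channel sharing there is a second, unresolved accounting issue under the paper's own definition of an FCNN, where the ReLU is applied to \emph{every} hidden neuron. Carrying a signal of arbitrary sign requires the decomposition $x=\sigma(x)-\sigma(-x)$, which costs roughly $2\min(d,k)$ nonzero weights per padded layer and additionally doubles the nonzero count of the first (resp.\ last) affine map of each delayed (resp.\ early-finishing) subnetwork at the read-in/read-out step; so a strict-ReLU construction yields $\mathcal P(\psi)\le\sum_i\mathcal P(\phi_i)+C\,\min(d,k)(\max_iL_i-\min_iL_i)+\text{(read-out corrections)}$ with $C\approx 2$, not the stated $\delta$ with constant $1$. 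The clean constant in the cited lemma comes from the framework of \cite{gribonval2020approximation}, where ``generalized'' networks may leave some neurons unactivated (identity neurons), so a $\min(d,k)$-dimensional signal can be passed through a layer with exactly $\min(d,k)$ weights. To make your proof rigorous you must either work in that generalized-network setting or accept a worse constant in $\delta$; as written, the bookkeeping that you yourself identify as the technical core is missing.
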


\begin{proposition}[Representation of the Euclidean 1-norm]
\label{prop:nn_ell1}
    There exists a fully connected neural network $\phi$ with $d$ inputs and $1$ output such that
    \begin{equation*}
        \mathcal R\phi(x) = \|x\|_{\ell^1}
    \end{equation*}
    and
    \begin{align*}
        \mathcal P(\phi) &=4d,\\
        \mathbb L(\phi) &= 1.
    \end{align*}
\end{proposition}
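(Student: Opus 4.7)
The plan is to give an explicit two-layer construction analogous to the one used in \cref{prop:represent_identity}, exploiting the ReLU identity $|t| = \sigma(t) + \sigma(-t)$ for every $t \in \mathbb{R}$. The summation of all $d$ absolute values can then be collapsed into the final linear output layer, yielding a network with a single hidden layer.

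Concretely, I would define the network $\phi := ((A_0, b_0), (A_1, b_1))$ by taking
\begin{equation*}
    A_0 := \begin{pmatrix} I_d \\ -I_d \end{pmatrix} \in \mathbb{R}^{2d \times d}, \qquad b_0 := 0_{\mathbb{R}^{2d}},
\end{equation*}
together with
\begin{equation*}
    A_1 := \bigl(1,1,\ldots,1\bigr) \in \mathbb{R}^{1 \times 2d}, \qquad b_1 := 0 \in \mathbb{R}.
\end{equation*}
Applying the ReLU activation componentwise to $A_0 x$ yields the vector $(\sigma(x_1),\ldots,\sigma(x_d),\sigma(-x_1),\ldots,\sigma(-x_d))^{\intercal}$, and multiplication by $A_1$ gives
\begin{equation*}
    \mathcal{R}\phi(x) = \sum_{i=1}^d \bigl(\sigma(x_i) + \sigma(-x_i)\bigr) = \sum_{i=1}^d |x_i| = \|x\|_{\ell^1},
\end{equation*}
as required.

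For the complexity count, the matrix $A_0$ contributes exactly $2d$ nonzero entries (one per row, in the pattern of $\pm I_d$), and $A_1$ contributes another $2d$ nonzero entries, while both bias vectors are identically zero and therefore contribute nothing to the nonzero parameter count. Hence $\mathcal{P}(\phi) = 4d$. Since there is a single hidden (affine-then-ReLU) layer followed by the affine output, we have $\mathbb{L}(\phi) = 1$ in the convention of the paper. There is no real obstacle here: the only thing to be slightly careful about is the convention for counting parameters (nonzero weights only, biases excluded when zero) and the convention for counting layers, both of which are fixed by the definitions in \cref{sec:NN}.
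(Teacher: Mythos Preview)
Your construction is correct and essentially identical to the paper's own proof: the same block matrix $A_0 = \begin{pmatrix} I_d \\ -I_d \end{pmatrix}$ with zero bias, the same all-ones row vector $A_1$ with zero bias, and the same use of $|t| = \sigma(t) + \sigma(-t)$. Your explicit accounting of the $4d$ nonzero parameters is a small addition beyond what the paper writes out, but otherwise the arguments coincide.
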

\begin{proof}
    Let
    \begin{align*}
        A_0 &:= \left(\begin{array}{c}
            I_d\\
            \hline
            -I_d
        \end{array}\right) \in \R^{2d \times d},\\
        b_0 &:= 0_{\R^{2d}},\\
        A_1 &:=(1,\dots,1) \in \R^{1 \times 2d},\\
        b_1 &:= 0.
    \end{align*}
    Then, define $\phi := ((A_0, b_0), (A_1, b_1))$. We have
    \begin{align*}
        \mathcal R\phi(x) &= A_1\sigma(A_0 x + b_0) + b_1\\
        &=(1, \dots, 1) \cdot \sigma\left(\begin{array}{c}
            x\\
            \hline
            -x
        \end{array}\right)\\
        &= \sum_{k=1}^d (\sigma(x_k) + \sigma(-x_k))\\
        &= \sum_{k=1}^d |x_k|\\
        &= \|x\|_{\ell^1}.
    \end{align*}
\end{proof}

\begin{proposition}[Approximation of the indicator function on $B_r(0)$]
\label{prop:nn_approx_indicator}
    Let $\delta > 0$. Then, there exists a fully connected neural network $\phi$ with $d$ inputs and $1$ output such that
    \begin{equation*}
        \mathcal R(\phi)(x) = \begin{cases}
            1 &\text{if}~x \in B_r(0)\\
            \frac{r+\delta-\|x\|_{\ell^1}}{\delta} &\text{if}~x \in B_r(0) \cap B_{r+\delta}(0)\\
            0 &\text{otherwise}
        \end{cases}
    \end{equation*}
    and
    \begin{align*}
        \mathcal P(\phi) &=4d + 7,\\
        \mathbb L(\phi) &=3.
    \end{align*}
\end{proposition}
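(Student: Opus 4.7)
The target function depends on $x$ only through the scalar $y := \|x\|_{\ell^1}$ and is there given by the trapezoid $y \mapsto 1 - \delta^{-1}(\sigma(y-r) - \sigma(y-(r+\delta)))$, where $\sigma$ denotes ReLU. My plan is therefore to realise this trapezoid by concatenating the $\|\cdot\|_{\ell^1}$ network from \cref{prop:nn_ell1} with a short one-dimensional network handling the two shifted thresholds, arranging the hidden layers so that the parameter count is sharp.

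Concretely, I would take $\phi = ((A_0,b_0),(A_1,b_1),(A_2,b_2),(A_3,b_3))$ with three hidden layers of widths $2d$, $1$, $2$. Set $A_0 = (I_d; -I_d) \in \R^{2d \times d}$ and $b_0 = 0$, yielding the first hidden vector $(\sigma(x_i),\sigma(-x_i))_{i=1,\ldots,d}$. Set $A_1 = (1,\ldots,1) \in \R^{1\times 2d}$ and $b_1 = 0$; since the pre-activation equals $\|x\|_{\ell^1}\geq 0$, the ReLU is inert and the second hidden layer outputs the scalar $\|x\|_{\ell^1}$. Set $A_2 = (1,1)^{\top} \in \R^{2\times 1}$ and $b_2 = (-r,-(r+\delta))^{\top}$, so the third hidden layer outputs the pair $(\sigma(\|x\|_{\ell^1}-r),\sigma(\|x\|_{\ell^1}-(r+\delta)))$. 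Finally, choose $A_3 = (-\delta^{-1},\delta^{-1}) \in \R^{1\times 2}$ and $b_3 = 1$ to produce the desired affine combination.

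To finish, I would perform a direct case distinction on the three regions $\|x\|_{\ell^1} \in [0,r]$, $(r, r+\delta)$, $[r+\delta, \infty)$, which reproduces the values $1$, $(r+\delta-\|x\|_{\ell^1})/\delta$ and $0$ listed in the statement. Counting nonzero entries layer by layer then gives $2d + 2d + (2+2) + (2+1) = 4d + 7$ parameters and three hidden layers, matching the asserted $\mathcal P(\phi) = 4d + 7$ and $\mathbb L(\phi) = 3$.

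The only nontrivial design decision is whether to carry the intermediate scalar $\|x\|_{\ell^1}$ through its own width-$1$ hidden layer, or to fuse the summation with the two shifted thresholds into a single $\R^{2 \times 2d}$ weight matrix. The fused two-hidden-layer construction is perfectly valid but would introduce $4d$ nonzero entries in the fusion matrix, yielding the weaker bound $6d + 5$; inserting the extra scalar layer (legitimate because the ReLU acts as the identity on the non-negative quantity $\|x\|_{\ell^1}$) is precisely what produces the sharp $4d + 7$ claimed in the proposition.
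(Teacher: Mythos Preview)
Your construction is exactly the one given in the paper: the same four weight--bias pairs $(A_0,b_0),\ldots,(A_3,b_3)$, the same observation that the ReLU after the width-$1$ layer is inert because $\|x\|_{\ell^1}\ge 0$, and the same final expression $1-\delta^{-1}(\sigma(\|x\|_{\ell^1}-r)-\sigma(\|x\|_{\ell^1}-(r+\delta)))$. Your parameter count and depth match the paper's claim, and your closing remark about why the intermediate scalar layer is preferable to fusing with the threshold layer is a useful addition that the paper does not spell out.
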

\begin{proof}
    By \cref{prop:nn_ell1}, if
    \begin{align*}
        A_0 &:= \left(\begin{array}{c}
            I_d\\
            \hline
            -I_d
        \end{array}\right) \in \R^{2d \times d},\\
        b_0 &:= 0_{\R^{2d}},\\
        A_1 &:= (1,\dots,1) \in \R^{1 \times 2d},\\
        b_1 &:= 0,
    \end{align*}
    then the fully connected neural network $\tilde\phi := ((A_0, b_0), (A_1, b_1))$ satisfies $\mathcal R \tilde\phi = \|\cdot\|_{\ell^1}$. Let
    \begin{align*}
        A_2 &:= \left(\begin{array}{c}
            1\\
            \hline
            1
        \end{array}\right) \in \R^{2 \times 1},\\
        b_2 &:= \left(\begin{array}{c}
            -r\\
            -(r+\delta)
        \end{array}\right) \in \R^{2},\\
        A_3&:=-\frac{1}{\delta} (1,-1) \in \R^{1 \times 2},\\
        b_3 &:= 1.
    \end{align*}
    Then, define $\phi := ((A_0,b_0),(A_1,b_1),(A_2,b_2),(A_3,b_3))$. We have
    \begin{align*}
        \mathcal{R}\phi(x) &=A_3\sigma(A_2\|x\|_{\ell^1}+b_2) + b_3\\
        &=A_3\sigma(A_2\|x\|_{\ell^1}  + b_2) + b_3\\
        &=-\frac{1}{\delta} (1,-1) \cdot \sigma\left(\begin{array}{c}
            \|x\|_{\ell^1}-r\\
            \hline
            \|x\|_{\ell^1}-(r+\delta)
        \end{array}\right) + 1\\
        &= -\frac{\sigma(\|x\|_{\ell^1}-r)-\sigma(\|x\|_{\ell^1}-(r+\delta))}{\delta} + 1.
    \end{align*}
\end{proof}

\begin{lemma}[{\cite[Proposition 3]{yarotsky2017error}}]\label{lem:network mult}
    For $M>0$ and $\varepsilon\in (0,1)$ there is a ReLU network $\phi^{\text{mult}}$ with $\mathcal{R}\phi:\mathbb{R}^2 \to \mathbb{R}$ such that
    \begin{enumerate}
        \item $|\mathcal{R}\phi^{\text{mult}} (x,y) - xy| \leq \varepsilon$ for all $x,y \in [-M,M]$,
        \item $\mathcal{R}\phi^{\text{mult}}(x,y) = 0$, if $x=0$ or $y=0$,
        \item $\mathbb{L}(\phi^{\text{mult}}), \mathcal{P}(\phi^{\text{mult}}) \in \mathcal{O}(\log (1/\varepsilon) + \log(M))$.
    \end{enumerate}
\end{lemma}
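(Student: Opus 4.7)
The plan is to follow Yarotsky's classical construction, which proceeds in three stages: approximate squaring on $[0,1]$, extend and polarize to obtain multiplication on $[-1,1]^2$, and finally rescale to $[-M,M]^2$. For the first stage, I would start from the hat function $g(x) = 2\sigma(x) - 4\sigma(x-1/2) + 2\sigma(x-1)$, which is a ReLU realization of constant size. Iterating $g_s := g \circ g \circ \cdots \circ g$ produces a sawtooth with $2^{s-1}$ teeth. Yarotsky's observation is that the partial sum $f_s(x) := x - \sum_{k=1}^s 2^{-2k}\, g_k(x)$ satisfies $\|f_s - (\cdot)^2\|_{L^\infty([0,1])} \leq 2^{-2s-2}$, and, crucially, $f_s$ can be realized by a deep ReLU network whose depth and number of parameters are both $\mathcal{O}(s)$: one simply stacks the $g$-blocks and taps off each intermediate $g_k$ into a parallel accumulator channel, reusing the skip-connection trick to avoid the naive exponential blowup.

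For the second stage, I would extend squaring to $[-1,1]$ by precomposing with $|x| = \sigma(x) + \sigma(-x)$, which adds only $\mathcal{O}(1)$ parameters and preserves $f_s(0) = 0$ (since $g_k(0) = 0$ for every $k$). I then invoke the polarization identity
\begin{equation*}
    xy = \tfrac{1}{2}\bigl((x+y)^2 - x^2 - y^2\bigr)
\end{equation*}
and parallelize three copies of the extended squaring network, each fed with the appropriate affine combination of $(x,y)$, followed by a single linear output layer combining them with weights $1/2, -1/2, -1/2$. On $[-1,1]^2$, the error is at most $3 \cdot 2^{-2s-2}$ in absolute value, so choosing $s$ of order $\log(1/\delta)$ yields a $\delta$-accurate approximation of $(u,v) \mapsto uv$ on $[-1,1]^2$ with depth and parameter count $\mathcal{O}(\log(1/\delta))$.

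For the third stage, I would rescale: given the target accuracy $\varepsilon$ and range $M$, define $\phi^{\text{mult}}(x,y) := M^2 \,\tilde\phi(x/M, y/M)$, where $\tilde\phi$ is the network above tuned to accuracy $\delta = \varepsilon/M^2$ on $[-1,1]^2$. The prefactors $1/M$ and $M^2$ are absorbed into the first and last affine layers, so the total size remains $\mathcal{O}(\log(1/\delta)) = \mathcal{O}(\log(1/\varepsilon) + \log M)$, which gives property~(3). Property~(1) is then immediate from the rescaling. For property~(2), observe that $f_s(0) = 0$ by construction, so plugging $x=0$ (respectively $y=0$) into the polarization identity collapses the three squaring terms pairwise to zero, giving $\phi^{\text{mult}}(0,y) = \phi^{\text{mult}}(x,0) = 0$ exactly.

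The only real subtlety I anticipate is the size bound in the first stage: a literal reading of $g_k = g\circ\cdots\circ g$ as a flat ReLU network would suggest exponential width. The fix, which I would present carefully, is to build a single deep network in which each of the $s$ layers implements one application of $g$ to a designated channel while an additional "accumulator" channel collects the weighted sum $\sum 2^{-2k} g_k(x)$ via skip-style affine updates. This gives depth $s$, width $\mathcal{O}(1)$, and parameter count $\mathcal{O}(s)$, matching the claimed asymptotics. Everything else is bookkeeping of scalings and composition of affine layers.
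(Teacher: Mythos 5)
Your proposal is correct and is essentially Yarotsky's original argument for his Proposition~3 (squaring via the sawtooth construction, polarization $xy=\tfrac12\bigl((x+y)^2-x^2-y^2\bigr)$, rescaling, and exact cancellation at $x=0$ or $y=0$ because the same approximate-squaring subnetwork with $f_s(0)=0$ is reused), which is exactly the route behind the result the paper only cites without reproving. One small bookkeeping point to fix: since $|x+y|$ can reach $2$ on $[-1,1]^2$ while your squaring approximation is only guaranteed on $[0,1]$, feed the squaring block $|x+y|/2$ and compensate with an output weight $2$ (i.e.\ weights $2,-\tfrac12,-\tfrac12$ rather than $\tfrac12,-\tfrac12,-\tfrac12$), which is absorbed into the affine layers and changes the error only by a constant factor.
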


\begin{proposition}[Element-wise multiplication of neural networks]\label{prop:nn_elementwise_mult}
    Let $\varepsilon > 0$. Then there exists a fully connected neural network $\phi$ such that
    \begin{equation*}
        \left\|xy - \mathcal R\phi((x,y)) \right\|_{\ell^1} \leq \varepsilon
    \end{equation*}
    where $x \in [A_1,B_1]$ and $y \in [A_2,B_2]^d$, and $\phi$ satisfies
    \begin{align*}
        \mathcal P(\phi) &= \mathcal O(d\log(dr/\varepsilon)),\\
        \mathbb L(\phi) &= \mathcal O(\log(dr/\varepsilon)),
    \end{align*}
    where $t = \min(A_1,A_2)$ and $r = \max((B_1 - t), (B_2 - t))$.
\end{proposition}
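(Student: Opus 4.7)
The plan is to reduce the scalar--vector multiplication $(x,y) \mapsto (xy_1,\ldots,xy_d)$ to $d$ independent scalar multiplications, each handled by the network from \cref{lem:network mult}. To put the inputs into a range amenable to \cref{lem:network mult}, I would introduce the shifts $\tilde x := x - t$ and $\tilde y_i := y_i - t$, which lie in $[0, B_1 - t]$ and $[0, B_2 - t]$ respectively, both contained in $[0,r]$. Then by bilinearity,
\begin{equation*}
    xy_i \;=\; \tilde x \tilde y_i \;+\; t(x + y_i) - t^2, \qquad i = 1, \ldots, d,
\end{equation*}
so the problem reduces to approximating the first term, while the remaining affine correction $t(x+y_i)-t^2$ can be represented exactly by an affine transformation (made into a valid two-layer ReLU network by preceding it with the identity of \cref{prop:represent_identity}).

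For the bilinear term, I would invoke \cref{lem:network mult} with $M := r$ and target accuracy $\varepsilon/d$ to obtain a scalar network $\phi^{\mathrm{mult}}\colon \mathbb{R}^2 \to \mathbb{R}$ with
\begin{equation*}
    \bigl|\phi^{\mathrm{mult}}(a,b) - ab\bigr| \leq \varepsilon/d \quad \text{for all } (a,b) \in [-r,r]^2,
\end{equation*}
and $\mathcal P(\phi^{\mathrm{mult}}), \mathbb L(\phi^{\mathrm{mult}}) \in \mathcal O(\log(d/\varepsilon) + \log r) = \mathcal O(\log(dr/\varepsilon))$. I would then parallelize $d$ copies of $\phi^{\mathrm{mult}}$, each precomposed with the affine map $(x,y_i) \mapsto (x-t, y_i-t)$ (which can be absorbed into the first affine layer of each copy), to obtain a network $\phi^{(m)}\colon \mathbb{R}^{d+1} \to \mathbb{R}^d$ whose $i$-th output approximates $\tilde x \tilde y_i$. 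Since all copies share depth, the parallelization has depth $\mathcal O(\log(dr/\varepsilon))$ and total parameter count $\mathcal O(d\log(dr/\varepsilon))$.

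To finish, I would apply \cref{prop:sum_nn} to sum $\phi^{(m)}$ with a shallow network $\phi^{(a)}$ realizing the componentwise affine correction $(x,y) \mapsto (t(x+y_i)-t^2)_{i=1}^d$. Since $\mathbb L(\phi^{(a)})$ is constant, the resulting network $\phi$ still has depth $\mathcal O(\log(dr/\varepsilon))$ and parameter count $\mathcal O(d\log(dr/\varepsilon)) + \mathcal O(d) = \mathcal O(d\log(dr/\varepsilon))$. The error bound follows directly by summing over components:
\begin{equation*}
    \bigl\|xy - \mathcal R\phi((x,y))\bigr\|_{\ell^1} \;=\; \sum_{i=1}^d \bigl|\tilde x \tilde y_i - \mathcal R\phi^{(m)}((x,y))_i\bigr| \;\leq\; d \cdot \frac{\varepsilon}{d} \;=\; \varepsilon.
\end{equation*}

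The main technical subtlety is bookkeeping rather than analysis: one must ensure that the per-component tolerance $\varepsilon/d$ (needed so that the $\ell^1$ sum over $d$ components remains bounded by $\varepsilon$) only introduces the logarithmic factor $\log(d/\varepsilon)$, which is precisely what \cref{lem:network mult} delivers, and that the parallelization plus affine-correction construction via \cref{prop:sum_nn,prop:represent_identity} does not inflate the width or depth beyond the claimed bounds. No step requires genuinely new ideas beyond careful combination of these building blocks.
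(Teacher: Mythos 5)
Your proposal is correct and follows essentially the same construction as the paper's proof: both reduce the problem to $d$ scalar multiplications approximated to accuracy $\varepsilon/d$ by the network of \cref{lem:network mult}, feed each copy the pair $(x,y_j)$ through a first affine/ReLU stage, parallelize the $d$ copies, and sum the componentwise errors to obtain the $\ell^1$ bound, with identical complexity bookkeeping. The only deviation is your preliminary shift $x\mapsto x-t$, $y_i\mapsto y_i-t$ together with the exact affine correction $t(x+y_i)-t^2$ added via \cref{prop:sum_nn} and \cref{prop:represent_identity}; the paper instead invokes \cref{lem:network mult} directly on $[A_1,B_1]\times[A_2,B_2]$, so your variant is if anything slightly more careful, since it makes the claimed $\log r$ dependence (as opposed to a $\log M$ with $[-M,M]$ covering the unshifted intervals) explicit.
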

\begin{proof}
    Let $\widetilde\times = ((A_0, b_0),\dots,(A_L,b_L))$ \cref{lem:network mult} be a fully connected neural network which satisfies, for all $\alpha \in [A_1,B_1]$ and $\beta \in [A_2,B_2]$,
    \begin{equation*}
        |\mathcal R\widetilde\times(\alpha,\beta) - \alpha\beta|\leq \frac \varepsilon d.
    \end{equation*}

    \paragraph{Construction of a neural network that can extract $(x,y_j)$}
    Let $j \in \{1,\dots,d\}$ and $z := (x,y) \in \R^{d+1}$. Let
    \begin{align*}
        \Gamma_j &:= E_{1,1} + E_{2,j+1} \in \R^{2 \times (d+1)},\\
        \widetilde A_0^j &:= \left(\begin{array}{c}
            \Gamma_j\\
            \hline
            -\Gamma_j
        \end{array}\right) \in \R^{4 \times (d+1)},\\
        \widetilde b_0^j &:= 0_{\R^{4}},\\
        \widetilde A_1^j &:= \left(\begin{array}{cc|cc}
            1 & 0 & -1 & 0\\
            0 & 1 & 0 & -1
        \end{array}\right) \in \R^{2 \times 4},\\
        \widetilde b_1^j &:= 0_{\R^2},
    \end{align*}
    where $E_{m,n} = e_m^T e_n$ is zero everywhere except in $(m,n)$ where it is equal to 1. Let \ifnum\classstyle=0 \linebreak \fi $\widetilde\phi_j := ((\widetilde A_0^j, \widetilde b_0^j),(\widetilde A_1^j, \widetilde b_1^j))$. We have $\mathcal P(\widetilde \phi_j) = 8$ and $\mathbb L(\widetilde\phi_j) = 1$. Then,
    \begin{align*}
        \mathcal R\widetilde\phi_j(z) &= \widetilde A_1^j\sigma(\widetilde A_0^j z + \widetilde b_0^j) + \widetilde b_1^j\\
        &= \left(\begin{array}{cc|cc}
            1 & 0 & -1 & 0\\
            0 & 1 & 0 & -1
        \end{array}\right) \cdot \left(\begin{array}{c}
            \sigma(x)\\
            \sigma(y_j)\\
            \sigma(-x)\\
            \sigma(-y_j)
        \end{array}\right)\\
        &= \left(\begin{array}{c}
            \sigma(x) - \sigma(-x)\\
            \sigma(y_j) - \sigma(-y_j)
        \end{array}\right)\\
        &= \left(\begin{array}{c}
            x\\
            y_j
        \end{array}\right).
    \end{align*}

    \paragraph{Construction of a neural network that can approximate $xy_j$}
    By concatenating $\widetilde\phi_j$ and $\widetilde\times$ \cite[Definition~2.2]{petersen2018optimal}, there exists a neural network $\phi_j := ((\widetilde A_0^j, \widetilde b_0^j), (A_0\widetilde A_1^j, A_0\widetilde b_1^j + b_0), (A_1, b_1), \dots, (A_L, b_L))$ which satisfies $\mathcal R\phi_j = \mathcal R\widetilde\times \circ \mathcal R\widetilde\phi_j$, $\mathcal P(\phi_j) \leq 2(8 + \mathcal P(\widetilde\times))$
    and $\mathbb L(\phi_j) = \mathbb L(\widetilde\times) + 1$. And so, for $z = (x,y_j) \in [A_1,B_1] \times [A_2,B_2]$,
    \begin{align*}
        |\mathcal R\phi_j(x,y) - xy_j| = |\mathcal R\widetilde\times(x,y_j) - xy_j| \leq \frac \varepsilon d.
    \end{align*}

    \paragraph{Parallelization of $\phi_j$} By parallelizing the $(\phi_j)_{j=1}^d$ \cite[Defintion~2.7]{petersen2018optimal}, there exists a neural network $\phi$ which satisfies
    \begin{align*}
        \mathcal R\phi &= (\mathcal R\phi_1,\dots,\mathcal R\phi_d),\\
        \mathcal P(\phi) &= \sum_{j=1}^d \mathcal P(\phi_j)\\
        &= 2d(8 + \mathcal P(\widetilde\times)),\\
        \mathbb L(\phi) &= \mathbb L(\widetilde\times) + 1.
    \end{align*}
    Finally, for $x \in [A_1,B_1]$ and $y \in [A_2,B_2]^d$,
    \begin{align*}
        \left\|xy - \mathcal R\phi((x,y)) \right\|_{\ell^1} = \sum_{j=1}^d |\mathcal R\phi_j((x,y_j)) - xy_j| \leq \varepsilon.
    \end{align*}
\end{proof}

\bibliographystyle{abbrv}
\bibliography{ref}

\end{document}